\newcommand{\diag}[1]{\mathrm{diag}\left(#1\right)}
\newcommand{\Eqref}[1]{Eq. \eqref{#1}}
\newcommand{\boldone}{{\boldsymbol{1}}}
\newcommand{\calA}{{\mathcal A}}
\newcommand{\calB}{{\mathcal B}}
\newcommand{\calE}{{\mathcal E}}
\newcommand{\calF}{{\mathcal F}}
\newcommand{\calG}{{\mathcal G}}
\newcommand{\calN}{{\mathcal N}}
\newcommand{\calX}{{\mathcal X}}
\newcommand{\calY}{{\mathcal Y}}
\newcommand{\fhat}{\widehat{f}}
\newcommand{\ghat}{\widehat{g}}
\newcommand{\Px}{P_{\calX}}
\newcommand{\Py}{P_{\calY}}
\newcommand{\LPi}{L_2} %(P_{\calX})}
\newcommand{\LPn}{n} %{L_2(P_{n})}
\newcommand{\Real}{\mathbb{R}}
\newcommand{\Natural}{\mathbb{N}}
\newcommand{\EE}{\mathrm{E}}
\newcommand{\dd}{\mathrm{d}}
\def\I<#1>{\left\langle #1 \right\rangle}
\def\i<#1>{\left\langle #1 \right\rangle}
\newcommand{\Covhatell}[1]{\widehat{\Sigma}_{(#1)}}
\newcommand{\Sigmahat}{\widehat{\Sigma}}
\newcommand{\Fhat}{\hat{F}}
\newcommand{\mhatell}{m^{\sharp}_{\ell}}
\newcommand{\mhat}[1]{m^{\sharp}_{#1}}
\newcommand{\mdot}[1]{\dot{m}_{#1}}
\newcommand{\fsharp}{f^{\sharp}}
\newcommand{\op}{2} %\mathrm{op}}
\newcommand{\calFhat}{\widehat{\mathcal{F}}}
\newcommand{\calGhat}{\widehat{\mathcal{G}}}
\newcommand{\Var}{\mathrm{Var}}
\newcommand{\NN}{\mathrm{NN}}
\newcommand{\vecm}{\mathbf{m}}
\newcommand{\vecmhat}{\mathbf{m^\sharp}}
\newcommand{\vecs}{\mathbf{s}}
\newcommand{\Rop}{R_2}
\newcommand{\RF}{R_{\mathrm{F}}}
\newcommand{\Rophat}{\hat{R}_2}
\newcommand{\RFhat}{\hat{R}_{\mathrm{F}}}
\newcommand{\Wsharpell}[1]{W^{\sharp(#1)}}
\newcommand{\Covsharpell}[1]{\Sigma^{\sharp}_{(#1)}}
\newcommand{\phihat}{\phi^{\sharp}}
\newcommand{\mudotell}[1]{\dot{\mu}^{(#1)}}
\newcommand{\Psihat}{\widehat{\Psi}}
\newcommand{\supp}{\mathrm{supp}}
\newcommand{\Bx}{B_x}
\newcommand{\Vup}{V_0}
\newcommand{\Nhat}{\hat{N}}
\newtheorem{Theorem}{Theorem}
\newtheorem{Lemma}{Lemma}
\newtheorem{Definition}{Definition}
\newtheorem{Assumption}{Assumption}
\newtheorem{Example}{Example}
\newtheorem{Proposition}{Proposition}
\newtheorem{Corollary}{Corollary}
\newcommand{\Tr}{\mathrm{Tr}}
\newcommand{\F}{\mathrm{F}}
\newcommand{\mell}{m_\ell}
\newcommand{\melle}[1]{m_{#1}}
\newcommand{\Well}[1]{W^{(#1)}}
\newcommand{\bell}[1]{b^{(#1)}}
\newcommand{\Id}{\mathrm{I}}
\title{Compression based bound for non-compressed network: 
unified generalization error analysis of large compressible deep neural network}
\author{Taiji Suzuki \\
Graduate School of Information Science and Technology, The University of Tokyo, Japan \\Center for Advanced Intelligence Project, RIKEN, Japan\\
Japan Digital Design \\
\texttt{taiji@mist.i.u-tokyo.ac.jp}  
\And
Hiroshi Abe \\
iPride Co., Ltd., Japan, \\
\texttt{abe@ipride.co.jp}, 
\And
Tomoaki Nishimura \\
NTT Data Corporation, Japan, \\
\texttt{Tomoaki.Nishimura@nttdata.com} 
}
\begin{document}

\maketitle

\begin{abstract}
One of the biggest issues in deep learning theory is the generalization ability of networks with huge model size. 
The classical learning theory suggests that overparameterized models cause overfitting.
However, practically used large deep models avoid overfitting, which is not well explained by the classical approaches.
To resolve this issue, several attempts have been made. 
Among them, the compression based bound is one of the promising approaches. 
However, the compression based bound can be applied only to a compressed network, and it is not applicable to the non-compressed original network. 
In this paper, we give a unified frame-work that can convert compression based bounds to those for non-compressed original networks.
The bound gives even better rate than the one for the compressed network by improving the bias term.
By establishing the unified frame-work, we can obtain a data dependent generalization error bound which gives a tighter evaluation than the data independent ones.

\end{abstract}

\section{Introduction}

Deep learning has shown quite successful results in wide range of machine learning applications.
such as image recognition \citep{krizhevsky2012imagenet}, natural language processing \citep{2018arXiv181004805D}
and image synthesis tasks \citep{2015arXiv151106434R}. 
The success of deep learning methods is mainly due to its flexibility, expression power and computational efficiency for large dataset training.
Due to its significant importance in wide range of application areas,
its theoretical analysis is also getting much important.
For example, it has been known that the deep neural network has universal approximation capability \citep{cybenko1989approximation,hornik1991approximation,sonoda2015neural}
and its expressive power grows up in an exponential order against the number of layers 
\citep{NIPS2014_5422,bianchini2014complexity,cohen2016expressive,ICML:Cohen+Shashua:2016,NIPS:Poole+etal:2016,suzuki2018adaptivity}.
However, theoretical understandings are still lacking in several important issues.

Among several topics of deep learning theories, a generalization error analysis is one of the biggest issues in the machine learning literature.
An important property of deep learning is that it generalizes well even though its parameter size is quite large compared with the sample size \citep{neyshabur2018the}.
This can not be well explained by a classical VC-dimension type theory \citep{pmlr-v65-harvey17a} which suggests that overparameterized models cause overfitting and 
thus result in poor generalization ability.

For this purpose, %As a generalization error bound, 
norm based bounds have been extensively studied so far
%\cite{bartlett2017spectrally,neyshabur2017pac,pmlr-v75-golowich18a} are also
\citep{COLT:Neyshabur+Tomioka+Srebro:2015,bartlett2017spectrally,neyshabur2017pac,pmlr-v75-golowich18a}.
These bounds are beneficial because the bounds are not explicitly dependent on the number of parameters 
and thus are useful to explain the generalization error of overparameterized network \citep{Bartlett:SizeOfWeights:1998,COLT:Neyshabur+Tomioka+Srebro:2015,neyshabur2018the}. 
However, these bounds are typically exponentially dependent on the number of layers and thus tends to be loose for deep network situations
\citep{DBLP:conf/uai/DziugaiteR17,pmlr-v80-arora18b,nagarajan2018deterministic}.
%Moreover, the norms which controls the capacity can dependent on the number of parameters implicitly.
% are 
%important to analyze generalization error of large network.
%Among them, \citet{pmlr-v75-golowich18a} derived a bound $O\left(R^L\min\{\sqrt{\frac{L}{n}},n^{-\frac{1}{4}}\}\right)$, 
%but it remains exponential dependency $R^L$ which seems unavoidable even with the truncation operation.
%Although our bound is size dependent, the exponential term can be avoided by the truncation and 
%the dependency on the size is much smaller than the original network. %, better than the naive VC-dimension bound.
%In particular, our bound is beneficial for small degree of freedom even if the norm $R$ is a bit large. %compar
%VC-dimension type bound \citet{li2018tighter,pmlr-v65-harvey17a}.
As a result, \cite{pmlr-v80-arora18b} reported that a simple VC-dimension bound \citep{li2018tighter,pmlr-v65-harvey17a} can still give sharper evaluations than these norm based bounds in some practically used deep networks.
\cite{WeiMa:DataDependent:NeurIPS2019} improved this issue by involving a data dependent Lipschitz constant as performed in \cite{pmlr-v80-arora18b,nagarajan2018deterministic}.

On the other hand, {\it compression based bound} is another promising approach for tight generalization error evaluation 
which can avoid the exponential dependence on the depth. 
The complexity of deep neural network model is regulated from several aspects.
For example, we usually impose explicit regularization such as 
weight decay \citep{krogh1992simple}, dropout \citep{srivastava2014dropout,NIPS2013_4882}, batch-normalization \citep{pmlr-v37-ioffe15},
and mix-up \citep{zhang2018mixup,verma2018manifold}.
\cite{zhang2016understanding} reported  that such explicit regularization does not have much effect but 
implicit regularization induced by SGD \citep{pmlr-v48-hardt16,gunasekar2018implicit,ji2018gradient} is important.
Through these explicit and implicit regularizations, deep learning tends to produce a simpler model 
than its full expression ability \citep{valle-perez2018deep,verma2018manifold}.
To measure how ``simple'' the trained model is, one of the most promising approaches currently investigated is the compression bounds \citep{pmlr-v80-arora18b,baykal2018datadependent,2018arXiv180808558S}.
These bounds measure how much the network can be compressed and characterize the size of the compressed network as the implicit effective dimensionality.
\cite{pmlr-v80-arora18b} characterized the implicit dimensionality based on so called {\it layer-cushion} quantity 
and suggested to perform random projection to obtain a compressed network.
Along with a similar direction, \cite{baykal2018datadependent} proposed a pruning scheme called Corenet 
and derived a bound of the size of the compressed network.
\cite{2018arXiv180808558S} has developed a spectrum based bound for their compression scheme.
Unfortunately, all of these bounds guarantee the generalization error of only the compressed network, not the original network.
Hence, it does not give precise explanations about why large network can avoid overfitting.
%Moreover, those bounds are basically VC-bound for the compressed network, that is, the number of nonzero parameters of the compressed network.
%In that sense, we have not truly got out of data independent regime.

In this paper, we derive a unified framework to obtain a compression based bound for a {\it non-compressed network}.
Unlike the existing researches, our bound is valid to evaluate the original network {\it before} compression,
and thus gives a direct explanation about why deep learning generalizes despite its large network size.
The difficulty to apply the compression bound to the original network lies in evaluation of the population $L_2$-bound between the compression network and the original network. A naive evaluation results in the VC-bound which is not preferable.
This difficulty is overcome by developing novel data dependent capacity control technique using {\it local Rademacher complexity} bounds \citep{IEEEIT:Mendelson:2002,LocalRademacher,Koltchinskii,gine2006concentration}.
%Roughly speaking, the local Rademacher complexity enables us to localize the trained network into much smaller region than the whole model space.
Then, the bound is applied to some typical situations where the network is well compressed.
Our analysis stands on the implicit bias hypothesis \citep{gunasekar2018implicit,ji2018gradient} 
that claims deep learning tends to produce rather simple models. 
Actually, \cite{gunasekar2018implicit,ji2018gradient} showed gradient descent results in (near) low rank parameter matrices in each layer in linear network settings.
%(near) low rank network structure.
\cite{martin2018implicit} evaluated the eigenvalue decays of the weight matrix through random matrix theories and several numerical experiments. 
%\cite{valle-perez2018deep}.
These observations are also supported by the {\it flat minimum} analysis \citep{hochreiter1997flat,wu2017towards,NIPS2001_1968},
that is, the product of the eigenvalues of the Hessian around the SGD solution tends to be small, which means SGD converges to a flat minimum and possess stability against small perturbations leading to good generalization.
%PAC-Bayes bound \cite{NIPS2001_1968} (flat-minima)
Based on these observations, we make use of the eigenvalue decay of the weight matrix and the covariance matrix among the nodes in each layer (this assumption is actually verified by numerical experiments in Appendix \ref{sec:NumericalExperiments}).
The eigenvalue decay speed characterizes the redundancy in each layer and thus is directly relevant to compression ability. 
Our contributions in this paper are summarized as follows:
\begin{itemize}
\item We give a unified framework to obtain a compression based bound for {\it non-compressed} network which properly explains that a compressible network can generalizes well. The bound can convert several existing compression based bounds to that for non-compressed one in a unifying manner. The bound is applied to near low rank models as concrete examples.
%It is applied to some situations such as the trained network has low rank properties.
%The capacity is controlled by the eigenvalue distribution of the {\it empirical} covariance matrix in each layer.
\item We develop a data dependent capacity control technique to bound the discrepancy between the original network and compressed network. As a result, we obtain a sharp generalization error bound which is even better than that of the compressed network. 
All derived bounds are characterized by data dependent quantities.
\end{itemize}

\vspace{-0.1cm}
\begin{table}[t]
\begin{center}
\caption{Comparison of each generalization error to our bound. $\RF$ is the Frobenius norm of the weight matrix, $\Rop$ is the operator norm of the weight matrix, $R_{p \to q}$ is the $(p,q)$ matrix norm, $L$ is the depth, $m$ is the maximum of the width, $n$ is the sample size.
$\bar{R}_n$ and $\dot{R}_r$ represent the Rademacher complexity and local Rademacher complexity respectively.
$\kappa$ is a Lipschitz constant between layers.
$\alpha$ represents the eigenvalue drop rate of the weight matrix, and $\beta$ represents that of the covariance matrix among the nodes in each internal layer. $\hat{r}$ is the bias induced by compression.
``Original'' indicates whether the bound is about the original network or not.}
\label{tab:summary}
%\begin{tabular}{|*{1}{>{\centering\arraybackslash}}p{2.8cm}|c|c|c|}
\begin{tabular}{|c|c|*{1}{>{\centering\arraybackslash}}p{2.5cm}|c|}
\hline
Authors & Rate &  Bound type & Original \\
\hline
\small \citet{COLT:Neyshabur+Tomioka+Srebro:2015} & $\frac{2^L \RF^L }{\sqrt{n}} $ & Norm base & Yes \\
\small \citet{bartlett2017spectrally} & $\frac{\Rop^L }{\sqrt{n}}  \left(L %R_{2\to 1}^{2/3}/R_{2}^{2/3} 
\frac{R_{2\to 1}^{2/3}}{R_{2}^{2/3}} 
\right)^{3/2}$ & 
Norm base & Yes \\
\small \citet{WeiMa:DataDependent:NeurIPS2019} & $\frac{ \left(1 + L \kappa^{\frac{4}{3}} R_{2\to 1}^{2/3}
+ L \kappa^{\frac{2}{3}} R_{1\to1}^{2/3}\right)^{3/2}}{\sqrt{n}}  $& 
Norm base & Yes \\
\small \citet{neyshabur2017pac} & $\frac{\Rop^L}{\sqrt{n}} \sqrt{L^3 m \frac{\RF^2}{\Rop^2}}$ & Norm base
& Yes \\
\small \citet{pmlr-v75-golowich18a} & $\RF^L \min\left\{\frac{1}{n^{1/4}}, \sqrt{\frac{L}{n}}\right\}$  & Norm base & Yes \\
\hline
\hline
\begin{tabular}{c}
\small \citet{li2018tighter} \\ \citet{pmlr-v65-harvey17a} 
\end{tabular}
& $\frac{ \Rop^L \sqrt{L^2 m^2} }{\sqrt{n}} $ & 
\begin{tabular}{c}
%Covering number/ \\
VC-dim.
\end{tabular}
& Yes \\
\hline
\hline
\small \citet{pmlr-v80-arora18b} & 
$
\hat{r} + \sqrt{\frac{L^2 \max\limits_{1\leq i \leq n}|\fhat(x_i)|^2 \sum_{\ell=1}^L \frac{1}{\mu_\ell^2 \mu_{\ell \rightarrow}^2} }{n\hat{r}^2}}
$
 & Compression & No \\
%\citet{baykal2018datadependent} & 
%$
%\sqrt{\frac{L^2 \max\limits_{1\leq i \leq n}|\fhat(x_i)|^2 \sum_{\ell=2}^L (\hat{\Delta}^{\ell \rightarrow})^2 \sum_{i=1}^W S_i^\ell }{n}}
%$
% & Compression & No \\
\small \citet{2018arXiv180808558S} & 
%$
%\sqrt{\frac{L^2 \max_{1\leq i \leq n}|\fhat(x_i)|^2 \sum_{\ell=1}^L \frac{1}{\mu_\ell^2 \mu_{\ell \rightarrow}^2} }{n}}
%$
$%\sum\limits_{\ell=2}^L \sqrt{\lambda_\ell}
\hat{r}
\!+\! 
 \sqrt{\frac{\sum_{\ell=1}^L \mhat{\ell+1} \mhat{\ell}}{n}}$ %\log_+(\hat{G})} $
& Compression &  No \\
\hline 
\hline 
Ours (Thm. \ref{thm:GeneralCompBound})& $
\hat{r} \sqrt{\frac{1}{n}} + \dot{R}_{\hat{r}}(\calFhat - \calGhat) + \bar{R}_n(\calGhat)$
%$\sum\limits_{\ell=2}^L \epsilon_\ell
%\!+\! 
%\frac{\sum_{\ell=1}^L \melle{\ell} \melle{\ell+1}}{n}
%+
% \sqrt{\frac{\sum_{\ell=1}^L \mhat{\ell+1} \mhat{\ell}}{n}}$
&
\begin{tabular}{c}
General
%\small Compression bound \\ \small for non-compressed net
\end{tabular}
 & Yes \\
Ours (Cor. \ref{Cor:LowRankWeightBound})& 
$\sqrt{L\text{\small$(L\kappa^2)^{1/\alpha}$}\frac{ L m }{n}}$
%$\sum\limits_{\ell=2}^L \epsilon_\ell
%\!+\! 
%\frac{\sum_{\ell=1}^L \melle{\ell} \rhat{\ell} }{n}
%+
% \sqrt{\frac{\sum_{\ell=1}^L \mhat{\ell+1} \rhat{\ell} }{n}}$
&
\hspace{-0.4cm}
\begin{tabular}{c}
Low rank weight
%\small
%Compression bound \\ \small for non-compressed net
\end{tabular}
 & Yes \\
Ours (Thm. \ref{thm:LowRankBoundCov})& 
$\sqrt{ \text{\scriptsize $L^{1 + \frac{\beta}{\frac{4\alpha}{(2\alpha -1)} + \beta}}$} 
\frac{ ( L m)^{\frac{4/\beta}{4/\beta + 2(1-1/2\alpha)}}}{n} }$
%$\sum\limits_{\ell=2}^L \epsilon_\ell
%\!+\! 
%\frac{\sum_{\ell=1}^L \melle{\ell} \rhat{\ell} }{n}
%+
% \sqrt{\frac{\sum_{\ell=1}^L \mhat{\ell+1} \rhat{\ell} }{n}}$
&
\hspace{-0.4cm}
\begin{tabular}{c}
Low rank weight \\
Low rank cov.
%\small
%Compression bound \\ \small for non-compressed net
\end{tabular}
 & Yes \\
\hline

\end{tabular}
\end{center}
\end{table}

\paragraph{Other related work}

Recently, the role of over-parameterization for two layer networks has been extensively studied  \citep{neyshabur2018the,2019arXiv190108584A}. %Fine grained
These are for the shallow network and the generalization error is essentially given by the norm based bounds.
It is not obvious that these bounds also give sharp bounds for deep models.

PAC-Bayes bound is also applied to obtain a non-vacuous compression based bound \citep{zhou2018nonvacuous}.
However, the bound is still for the compressed (quantized) models and it is not obvious that that bound can be converted to that for the original network. % which is trained by empirical risk minimization.
%While very creative, this provides little insight into favorable properties of networks.

%Fast learning rate and bias-variance trade-off.
%\cite{mccaffrey1994convergence} gave the fast learning rate of a single-internal-layer network.
%\cite{barron1993universal,barron1994approximation,klusowski2016risk,klusowski2016uniform} gave the approximation bound using Fourier analysis.
%\cite{koltchinskii2002empirical} gave the fast learning rate of a deep neural network with a margin assumption.
%The most relevant studies are \cite{bach2017equivalence,JMLR:v18:14-546}, which 

%\cite{2017arXiv170806633S} %Schmidt Hieber
% \cite{NN:Yarotsky:2017}
%\cite{arXiv:Imaizumi+Fukumizu:2018} 
%\cite{petersen2017optimal} 
%\cite{suzuki2018adaptivity}

Relation between compression and learnability was traditionally studied in a different framework 
as in \cite{Littlestone86relatingdata} and minimum description code length \citep{hinton1993keeping}.
Our bound would share the same spirits with these studies but give a new analysis by incorporating recent observations in deep learning researches.

%\cite{dinh2017sharp} pointed out sharp minima can generalizes well by rescaling the parameters.
%However, our analysis restricts the norm of parameters in each layer, and thus 
%our analysis does not suffer from this rescaling trick.

\section{Preliminaries: Problem formulation and notations}
In this section, we give the problem setting and notations that will be used in the theoretical analysis.
%\subsection{Problem setting}
We consider the standard supervised leaning formulation where 
data consists of input $x \in \Real^d$ and output (or label) $y \in \Real$.
%A function $f:\Real^d \to \Real$ is trained to predict the output $y$ well.
We consider a single output setting, i.e., the output $y$ is a 1-dimensional real value, but
it is straight forward to generalize the result to a multiple output case.
Suppose that we are given $n$ i.i.d. observations $D_n = (x_i,y_i)_{i=1}^n$ distributed from a probability distribution $P$.
%the marginal distribution of $x$ is denoted by $\Px$ and the one corresponding to $y$ is denoted by $\Py$.
To measure the performance of a trained function $f$, we use a loss function $\psi:\Real \times \Real \to \Real$ and 
define a training error and its expected one as 
$$
\Psihat(f) := \frac{1}{n} \sum_{i=1}^n \psi(y_i,f(x_i)),~~\Psi(f) := \EE[\psi(Y,f(X))],
$$
where the expectation is taken with respect to $(X,Y) \sim P$. Basically, we are interested in the generalization error $\Psi(\fhat) - \Psihat(\fhat)$
for an estimator $\fhat$.
We denote the empirical $L_2$-norm by $\|f\|_n := \sqrt{\sum_{i=1}^n f(z_i)^2/n}$ for an empirical observation $z_i = (x_i,y_i)~(i=1,\dots,n)$.
The population $L_2$-norm is denoted by $\|f\|_{\LPi}:=\sqrt{\EE_{Z\sim P}[f(Z)^2]}$.

This paper deals with deep neural networks as a model.
The activation function is denoted by $\eta$ which will be assumed to be 1-Lipschitz as satisfied by ReLU (Assumption \ref{ass:LipschitzCont}).
Let the depth of the network be $L$ and the width of the $\ell$-th internal layer be $\mell~(\ell=1,\dots,L+1)$ where we set $\melle{1} = d$ (dimension of input) and $\melle{L+1} = 1$ (dimension of output) for convention. 
Then, the set of networks having depth $L$ and width $\vecm=(\melle{1},\dots,\melle{L})$ with norm constraint as 
\begin{align*}
\NN(\vecm, \Rop',\RF')
:= 
\Big\{ &
f(x) = 
G \circ (\Well{L} \eta( \cdot) ) \circ
(\Well{L-1} \eta( \cdot) ) \circ 
\dots 
\circ (\Well{1} x)
\mid \\
& 
\Well{\ell} \in \Real^{\mell \times \melle{\ell+1}},
\|\Well{\ell}\|_{2} \leq \Rop', \|\Well{\ell} \|_{\F} \leq \RF'
\Big\}.
\end{align*}
where $\|W\|_{2}:=\sup_{u:\|W u\|\neq 0} \|W u\|/\|u\|$\footnote{In this paper, $\|\cdot\|$ denotes the Euclidean norm: $\|u\| = \sqrt{\sum_{i} u_i^2}$.} is the operator norm (the maximum singular value), $\|W\|_{\F} := \sqrt{\sum_{i,j} W_{i,j}^2}$ is the Frobenius norm,
and $G$ is the ``clipping'' operator that is defined by $G(x) = \max\{-M,\min\{x,M\}\}$ for a constant $M$.
The reason why we put the clipping operator $G$ on top of the last layer is because 
the clipping operator restricts the $L_\infty$-norm by a constant $M$ and then we can avoid unrealistically loose generalization error.
Note that the clipping operator does not change the classification error for binary classification.
We express $\calF$ to represent the ``full model'': $\calF = \NN(\vecm, \Rop,\RF)$
for a given $\Rop, \RF > 0$.
Here, we implicitly suppose that $\Rop$ is close to 1 so that the norm of the output from internal layers is not too much amplified, while $\RF$ could be moderately large.

The {\it Rademacher complexity} is the typical tool to evaluate the generalization error on a function class $\calF'$, which is denoted by
$
\hat{R}_n(\calF') := \EE_{\epsilon}\left[ \sup_{f \in \calF'} \frac{1}{n} \sum_{i=1}^n\epsilon_i f(z_i) \mid D_n \right] 
$
where $D_n = (z_i)_{i=1}^n = (x_i,y_i)_{i=1}^n$, and $\epsilon_i~(i=1,\dots.n)$ is an i.i.d. Rademacher sequence $(P(\epsilon_i = 1) = P(\epsilon_i = -1)= 1/2)$. This is also called conditional Rademacher complexity because the expectation is taken conditioned on fixed $D_n$. Its expectation with respect to $D_n$ is denoted by 
$
\bar{R}_n(\calF') := \EE_{D_n}[\hat{R}_n(\calF')].
$
Roughly speaking the Rademacher complexity measures the size of the model and it gives an upper bound of the generalization error 
\citep{book:Vapnik:1998,mohri2012foundations}. 
%Actually, it gives an upper bound of the empirical process

The main difficulty in generalization error analysis of deep learning is that the Rademacher complexity of the full model $\calF$ is quite large. %(see Remark \ref{eq:ExampleExistingBound}). %, and a new technique to evaluate the generalization error has been required.
One of the successful approaches to avoid this difficulty is the compression based bound \citep{pmlr-v80-arora18b,baykal2018datadependent,2018arXiv180808558S}
which measures how much the trained network $\fhat$ can be compressed.
If the network can be compressed to much smaller one, then its intrinsic dimensionality can be regarded as small.
To describe it more precisely,  suppose that the trained network $\fhat$ is included in a subset of the neural network model: $\fhat \in \calFhat \subset \calF$. 
For example, $\calFhat$ can be a set of networks with weight matrices that have bounded norms and are near low rank (Sec. \ref{eq:CompressionBoundNearLowRankWeight} or Sec. \ref{eq:CompressionBoundNearLowRankCov}).
We do not assume a specific type of training procedure, but we give a uniform bound valid for any estimator $\fhat$
that falls into $\calFhat$ and satisfies the following compressibility condition.
We suppose that the network $\fhat$ is easy to compress, 
that is, $\fhat$ can be compressed to a smaller network $\ghat$ which is included in a submodel: $\ghat \in \calGhat$.
For example, $\calGhat$ can be a set of networks with a smaller size than $\fhat$.
%We call this submodel the {\it compressed network model}.
%We are starting from an abstract setting but concrete examples will be given in
%Sec. \ref{eq:CompressionBoundNearLowRankWeight} and \ref{eq:CompressionBoundNearLowRankCov}.
How small the trained network $\fhat$ can be compressed has been characterized by several notions such as ``layer-cushion'' \citep{pmlr-v80-arora18b}.
Typical compression based bounds give generalization errors of the compressed model $\ghat$, not the original network $\fhat$.
%However, the compression based bound evaluates only the generalization error of $\ghat$, but 
%It does not give that for the original network $\fhat$.
Our approach converts an error bound of $\ghat$ to that of $\fhat$ and eventually obtains a tighter evaluation.

The biggest difficulty for transforming the compression bound to that of $\fhat$ lies in evaluation of the population $L_2$-norm between $\fhat$ and $\ghat$.
Basically, the compression based bounds are given as
\begin{align}
\Psi(\ghat) \leq \Psihat(\fhat) + \|\fhat - \ghat\|_n + C \bar{R}_n(\calGhat) ,
\label{eq:CompressBound}
\end{align}
for a constant $C > 0$ under some assumptions (Table \ref{tab:summary}). The term $\|\fhat - \ghat\|_n$ appears to adapt the empirical error of $\fhat$ to that of $\ghat$, 
that is called ``compression error'' which can be seen as a bias term.
We see that, in the right hand side, there appears the complexity of $\calGhat$ which is assumed to be much smaller than that of the full model $\calF$. However, the left hand side is not the expected error of $\fhat$ but that of $\ghat$. One way to transfer this bound to that of $\fhat$ is that we have $|\Psi(\ghat)-\Psi(\fhat)| \leq \|\ghat - \fhat\|_{\LPi}$ by assuming Lipschitz continuity of the loss function and then convert the bound \eqref{eq:CompressBound} to 
\begin{align*}
\Psi(\fhat) & \leq \Psihat(\fhat) + (\|\fhat - \ghat\|_n + \|\fhat - \ghat\|_{\LPi})+ \bar{R}_n(\calGhat). % \\
%& \leq \hat{L}(\fhat) + L \sqrt{\bar{R}_n(\calFhat)} + \bar{R}_n(\calGhat).
\end{align*}
However, to bound the term $\|\fhat - \ghat\|_n + \|\fhat - \ghat\|_{\LPi}$, %the Rademacher complexity of the non-compressed model $\bar{R}_n(\calFhat)$
there typically appears the complexity of the model $\calFhat$ which is larger than the compressed model $\calGhat$ %$\bar{R}_n(\calGhat)$
like $\|\fhat - \ghat\|_n\leq \sqrt{ \|\fhat - \ghat\|_{\LPi}^2 + O_p(\bar{R}(\calFhat))}$, which results in slow convergence rate.
To overcome this difficulty, we need to carefully control the difference between the training and test error of $\fhat$ and $\ghat$
by utilizing the local Rademacher complexity technique
\citep{IEEEIT:Mendelson:2002,LocalRademacher,Koltchinskii,gine2006concentration}.  % as in the following theorem. 
The local Rademacher complexity of a model $\calF'$ with radius $r > 0$ is defined as %$\tilde{R}_r(\calG)$ of a model $\calG$} with radius $r$ as 
\begin{align*}
\dot{R}_r(\calF') & := \bar{R}_n(\{f \in \calF' \mid \|f\|_{\LPi} \leq r\}).
%\EE_{D'_n}[\hat{R}_n(\{g \in \calG \mid \|g\|_{\LPn} \leq r \})]\\
%& = \EE_{D'_n,\epsilon}\left[ \sup\left\{ \frac{1}{n} \sum_{i=1}^n\epsilon_i g(x_i) \mid g \in \calG: \|g\|_{\LPn}\leq r \right\} \right].
\end{align*}
The main difference from the standard Rademacher complexity is that the model is localized to a set of functions satisfying $\|f\|_{\LPi} \leq r$.
As a result, we obtain a tighter error bound.

Throughout this paper, we always assume the following assumptions.
Let $\Px$ and $\Py$ denote the marginal distribution of $x$ and that of $y$ respectively.

%The first one is the Lipschitz continuity of loss and activation functions.
\begin{Assumption}[Lipschitz continuity of loss and activation functions]
\label{ass:LipschitzCont}
The loss function $\psi$ is 1-Lipschitz continuous with respect to the function output:
$$
|\psi(y, u) - \psi(y, u')| \leq |u - u'|~~(\forall y \in \supp(\Py),~u,u' \in \Real).
$$
The activation function $\eta$ is also 1-Lipschitz continuous: $\|\eta(u) - \eta(u')\| \leq \|u-u'\|~(\forall u \in \Real^{d'})$
where $d'$ is any positive integer.
\end{Assumption}
\begin{Assumption}
\label{ass:InputnormBound}
The norm of input is bounded by $\Bx > 0$: 
$
\|x\| \leq \Bx~~(\forall x \in \supp(\Px)).
$
\end{Assumption}
\begin{Assumption}
\label{ass:LinftyBound}
The $L_\infty$-norms of all elements in $\calFhat$ and $\calGhat$ are bounded by $M \geq 1$: $\|f\|_\infty, \|g\|_\infty \leq M$
for all $f\in \calFhat$ and $g \in \calGhat$.
\end{Assumption}
This assumption can be ensured by applying the clipping operator $G$ to the output of the functions.
In this paper, all the variables $L,\mell,\Rop,\RF,M,\Bx$ are supposed to be $o(n)$.
%Given this remark, 
What we will derive in the following is a bound which has mild dependency on the depth $L$ 
and depends on the width $(\mell)_{\ell=1}^L$ in a sub-linear order by using the compression based approach. % This can be accomplished by using the compression based bound.
%By doing so, we obtain significantly tighter bounds than existing ones.

%\begin{Remark}[Existing bounds for no-compressed network]
\paragraph{Existing bounds for no-compressed network}
%\label{eq:ExampleExistingBound}
%\begin{itemize}
%\item 
{%\rm
Here we give a brief review of the generalization error bound for non-compressed models.
(i) VC-bound: The Rademacher complexity of the full model $\calF$ can be bounded by a naive VC-dimension bound \citep{pmlr-v65-harvey17a}  which is
$
\bar{R}_n(\calF) = O\left(\sqrt{\frac{L \sum_{\ell=1}^n \mell \melle{\ell+1}}{n} \log(n)}\right).
$
In this bound, there appears the number of parameters $\sum_{\ell=1}^n \mell \melle{\ell+1}$ in the numerator. However, the number of parameters is often larger than the sample size $n$ in practical use. Hence, this bound is not appropriate to evaluate generalization ability of overparameterized networks.
%\end{itemize}Existing bound (VC-bound, norm-based bound)
(ii) Norm-based bound: \cite{pmlr-v75-golowich18a} showed the norm based bound which is given as $
\bar{R}_n(\calF) = O\left(\sqrt{\frac{L \RF^L}{n}}\right)$. 
However, this is exponentially dependent on the depth as $\RF^L$ resulting in quite loose bound. 
%This is not favorable since $\RF$ is moderately dependent on the network width.
\cite{neyshabur2017pac} showed a norm based bound of 
$\bar{R}_n(\calF) = O\left(\sqrt{\frac{L^3 (\max_\ell \melle{\ell}) \RF^2/\Rop^2}{n}}\right)$ which avoids 
the exponential dependency. However, there is still dependency on the width, $\sum_\ell \melle{\ell} \RF^2$, which is larger than the linear order of the width since $\RF$ could be moderately large.
\cite{bartlett2017spectrally} showed $\bar{R}_n(\calF) = O\left(\frac{\Rop^L }{\sqrt{n}}  \left(L \frac{R_{2\to 1}^{2/3}}{R_{2}^{2/3}} \right)^{3/2}\right)$. The norm constraint on $R_{2\to 1}$ implicitly assumes sparsity on the weight matrix and $R_{2 \to 1}$ typically depends on the width linearly.
\cite{WeiMa:DataDependent:NeurIPS2019} improved the exponential dependency $\Rop^L$ appearing in this bound \citep{bartlett2017spectrally} 
to obtained a bound $O\left(\frac{1}{\sqrt{n}}\left(1 + L \kappa^{\frac{4}{3}} R_{2\to 1}^{2/3} + L \kappa^{\frac{2}{3}} R_{1\to1}^{2/3}\right)^{3/2}\right)$
where $\kappa$ is the Lipschitz continuity between layers.
We can see that $R_{2\to1}^2$ and $R_{1\to1}^2$ can depend on the width linearly and quadratically respectively
even though $\Rop$ is bounded.
}
%\end{Remark}

\section{Compression bound for noncompressed network}

Here, we give a general theoretical tool that converts a compression based bound to that for the original network $\fhat$.
We suppose the model classes $\calFhat$ and $\calGhat$ are fixed independently on each data observation\footnote{
We can extend the result to data dependent models $\calFhat$ and $\calGhat$ by taking uniform bound for all possible choice 
of the pair $\calFhat$ and $\calGhat$. However, we omit explicit presentation of this uniform bound for simplicity.
}.
For sets of functions, $\calF'$ and $\calG'$, we denote the Minkowski difference of them by $\calF' - \calG' := \{f - g \mid  f\in \calF', g \in \calG' \}$.
We assume that the local Rademacher complexity of $\calFhat - \calGhat$ has a concave shape with respect to $r > 0$:
Suppose that there exists a function $\phi:[0,\infty) \to [0,\infty)$ such that 
$$
 \dot{R}_{r}(\calFhat - \calGhat) \leq \phi(r)~~\text{and}~~\phi(2 r) \leq 2 \phi(r)~~(\forall r > 0).
$$
This condition is not restrictive, and usual bounds for the local Rademacher complexity satisfy this condition
\citep{IEEEIT:Mendelson:2002,LocalRademacher}.
%Then, we can show that  for any $r > 0$, 
%\begin{align*}
%P\left( \sup_{h \in \calFhat - \calGhat} \frac{(P- P_n)(h^2) }{P h^2 + {r}^2}
%\geq 8 \frac{\phi(r)}{r^2} + M \sqrt{\frac{4t}{r^2 n}} + M^2 \frac{2t}{r^2 n} \right)
%\leq e^{-t}
%\end{align*}
%for all $t > 0$. 
%(This should be deferred as a lemma)
Using this notation, we define $r_* = r_*(t)$ as  
%Hence, if we choose $r_* = r_*(t)$ so that 
\begin{align}\label{eq:rstardef}
r_*(t) := \inf\Bigg\{ r>0 ~ \Bigg| ~8 \frac{\phi(r)}{r^2} + M \sqrt{\frac{4t}{r^2 n}} + M^2 \frac{2t}{r^2 n} \leq \frac{1}{2}\Bigg\}.
\end{align}
%then it holds that 
%$$
%P(h^2) \leq 2 P_n(h^2) + 2 r_*^2
%$$
%with probability greater than $1-e^{-t}$.
This is roughly given by the fixed point of a function $r^2 \mapsto \phi(r)$, and it is useful to bound the {\it ratio} of the empirical $L_2$-norm and the population $L_2$-norm of an element $h$ in $\calFhat - \calGhat$: $\|h\|_{\LPi}^2/(\|h\|_n^2 + r_*^2) \leq 1/2$ with high probability.
\begin{comment}
In the above definitions, evaluation of the local Rademacher complexity of a model is important. To evaluate it, the {\it covering number} is useful.
For a metric space $\tilde{\calF}$ equipped with a metric $\tilde{d}$, the $\epsilon$-covering number $\calN(\tilde{\calF},\tilde{d},\epsilon)$
is defined as the minimum number of balls with radius $\epsilon$ (measured by the metric $\tilde{d}$) to cover the metric space $\tilde{\calF}$. 
%Finally, we define 
%\begin{Definition}
%Definition of covering number..
%\end{Definition}

\begin{align*}
\Phi(r) := L 
%\min\Bigg\{ & \dot{R}_{r}(\calFhat - \calGhat) \log(n)^{3/2}, %\\
%&  
{%\textstyle
\EE\left[
\int_{1/n}^{\hat{\gamma}_n} \sqrt{\frac{\log(\calN( \calFhat,\|\cdot\|_{\LPn},\epsilon/2))}{n}} \dd \epsilon 
+
\int_{1/n}^{\hat{\gamma}_n} \sqrt{\frac{\log(\calN( \calGhat,\|\cdot\|_{\LPn},\epsilon/2))}{n}} \dd \epsilon
\right]
}
%\Bigg\}.
\end{align*}
\end{comment}
Finally, we denote $\psi(\calF') := \{\psi(\cdot,f(\cdot)) \mid f \in \calF'\}$ for a set $\calF'$ of functions.
Then, we obtain the following theorem that gives the compression based bound for non-compressed networks.
\begin{Theorem}%[Data independent model $\calFhat$ and $\calGhat$]
%Suppose that $\calFhat$ and $\calGhat$ are independent of the data.
\label{thm:GeneralCompBound}
Suppose that the empirical $L_2$-distance between $\fhat$ and $\ghat$ is bounded by
$\|\fhat - \ghat\|_{\LPn} \leq \hat{r}^2$ for a fixed $\hat{r} > 0$ almost surely.
Let $\dot{r} := \sqrt{2(\hat{r}^2 + r_*^2)}$, 
then, under Assumptions \ref{ass:LipschitzCont}, \ref{ass:InputnormBound}, \ref{ass:LinftyBound}, there exists a universal constant $C>0$ such that 
$$
\Psi(\fhat) \leq \hat{\Psi}(\fhat) + 
\underbrace{2 \bar{R}_n(\calGhat) +\sqrt{M\frac{2t}{n} }}_{\text{\rm main term}}
+
C 
\underbrace{ \left[ %\dot{r}^2
%+ 
%\Phi(\dot{r})
\dot{R}_{\dot{r}}(\psi(\calFhat) - \psi(\calGhat)) %\log(n)^{\frac{3}{2}}
+ 
\dot{r} \sqrt{\frac{t}{n}} + 
\frac{1 + tM }{n}\right]}_{\text{\rm bias term}}. %\int_{0}^{r^2} \sqrt{}
$$
with probability at least $1 - 3 e^{-t}$ for all $t \geq 1$. 
\end{Theorem}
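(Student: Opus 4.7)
The plan is to decompose the excess risk of $\fhat$ as
\[
\Psi(\fhat)-\hat\Psi(\fhat) \;=\; \bigl[\Psi(\ghat)-\hat\Psi(\ghat)\bigr] \;+\; \bigl[\Psi(\fhat)-\Psi(\ghat)\bigr] \;-\; \bigl[\hat\Psi(\fhat)-\hat\Psi(\ghat)\bigr],
\]
and then to control each bracket by a uniform deviation inequality over an appropriate class. The first bracket is the classical generalization gap of the compressed estimator $\ghat \in \calGhat$, which will produce the ``main term.'' The remaining two brackets together form a centered empirical process over the loss-class difference $\psi(\calFhat)-\psi(\calGhat)$; this is where the localization will produce the ``bias term.''

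First I would bound the main term. By standard symmetrization together with Talagrand's concentration inequality applied to $\psi(\calGhat)$ (which is uniformly bounded by $M$ since $\psi$ is $1$-Lipschitz and $\|g\|_\infty \leq M$), with probability $1-e^{-t}$ one obtains $\Psi(\ghat)-\hat\Psi(\ghat) \leq 2\bar R_n(\psi(\calGhat)) + \sqrt{2Mt/n} + O(Mt/n)$. The Ledoux--Talagrand contraction lemma (using $1$-Lipschitzness of $\psi$ in its second argument, Assumption \ref{ass:LipschitzCont}) yields $\bar R_n(\psi(\calGhat))\leq \bar R_n(\calGhat)$ up to an irrelevant constant centering, matching the $2\bar R_n(\calGhat)+\sqrt{M\cdot 2t/n}$ piece of the bound.

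Next I would handle the bias term by working in the loss-difference class $\calH := \psi(\calFhat)-\psi(\calGhat)$. For any $(f,g)\in\calFhat\times\calGhat$ the Lipschitzness of $\psi$ gives $|\psi(y,f(x))-\psi(y,g(x))|\leq |f(x)-g(x)|$ pointwise, so $\|h\|_n\leq \|f-g\|_n \leq \hat r$ for the realised pair $(\fhat,\ghat)$. The key step is then to convert this empirical bound into a population bound. Using the definition of $r_*$ in \eqref{eq:rstardef}, a standard ratio-type concentration inequality (Talagrand plus peeling, as in \citet{LocalRademacher,gine2006concentration}) shows that, with probability at least $1-e^{-t}$, every $h\in\calH$ satisfies $\|h\|_{\LPi}^2 \leq 2(\|h\|_n^2 + r_*^2)$; this is precisely the role of the fixed-point inequality defining $r_*$. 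Substituting the pointwise empirical bound yields $\|\psi(y,\fhat)-\psi(y,\ghat)\|_{\LPi}\leq \sqrt{2(\hat r^2+r_*^2)}=\dot r$.

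Finally I would apply a localized uniform deviation bound to the class $\calH$: again using Talagrand's inequality and peeling, with probability at least $1-e^{-t}$, every $h\in\calH$ with $\|h\|_{\LPi}\leq \dot r$ obeys $|\mathbb{E}[h]-\frac{1}{n}\sum_i h(z_i)| \leq C\bigl[\dot R_{\dot r}(\calH)+\dot r\sqrt{t/n}+Mt/n+ 1/n\bigr]$, where the last two terms come from the variance factor $M\|h\|_{\LPi}$ and the bounded-difference term of Talagrand. Applying this to $h=\psi(\cdot,\fhat)-\psi(\cdot,\ghat)$, whose population norm we just bounded by $\dot r$, controls $[\Psi(\fhat)-\Psi(\ghat)]-[\hat\Psi(\fhat)-\hat\Psi(\ghat)]$ by exactly the stated bias term. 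Combining the three high-probability events via a union bound (rescaling constants appropriately and using the monotonicity $\phi(2r)\leq 2\phi(r)$ to handle the peeling layers) gives the theorem with probability at least $1-3e^{-t}$. The main obstacle is the ratio inequality: the peeling argument must carefully exploit the sub-root shape of $\phi$ and the uniform boundedness $M$ to ensure that the fixed-point $r_*$ controls \emph{every} $h\in\calH$ simultaneously, not just a single representative; all other steps reduce to now-standard applications of Talagrand-type concentration and contraction.
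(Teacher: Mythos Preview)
Your proposal is essentially correct and follows the same three-event decomposition as the paper: a standard Rademacher bound on $\Psi(\ghat)-\hat\Psi(\ghat)$, a peeling/ratio argument to pass from $\|\fhat-\ghat\|_n\le\hat r$ to a population bound, and then Talagrand on the localized loss-difference class.

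One technical slip to fix: the fixed point $r_*$ in \eqref{eq:rstardef} is defined through a sub-root function $\phi$ that upper-bounds $\dot R_r(\calFhat-\calGhat)$, \emph{not} $\dot R_r(\psi(\calFhat)-\psi(\calGhat))$. Hence the ratio inequality you invoke is valid for $h\in\calFhat-\calGhat$, giving $\|\fhat-\ghat\|_{\LPi}^2\le 2(\|\fhat-\ghat\|_n^2+r_*^2)\le\dot r^2$; it does not directly apply to $h\in\calH=\psi(\calFhat)-\psi(\calGhat)$ with the same $r_*$, because the localization $\|\psi(f)-\psi(g)\|_{\LPi}\le r$ is a \emph{weaker} constraint than $\|f-g\|_{\LPi}\le r$ (so the localized class is larger and its Rademacher complexity need not be dominated by $\phi(r)$). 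The paper therefore runs the peeling on $\calFhat-\calGhat$, obtains $\|\fhat-\ghat\|_{\LPi}\le\dot r$, and only then passes to the loss class via Lipschitzness. Your final Talagrand step is fine once this is corrected, since $\|f-g\|_{\LPi}\le\dot r$ implies $\|\psi(f)-\psi(g)\|_{\LPi}\le\dot r$ and hence $\Phi(\dot r)\le\dot R_{\dot r}(\psi(\calFhat)-\psi(\calGhat))$.
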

%(To do) $\|\fhat - \ghat\|_\infty \leq r$ ($L_\infty$-norm version?)

The proof is given in Appendix \ref{sec:ProofMain}. The bound consists of two terms: ``main term'' and ``bias term.''
The main term represents the complexity of the compressed model $\calGhat$ which could be much smaller than $\calFhat$.
%also appeared in the compression based bound.
The bias term represents a sample complexity to bridge the original model and the compressed model.
Typically we have $r_*^2=o(1/\sqrt{n})$, and if we set $\hat{r}=o_p(1)$, then the bias term can be faster than the main term which is $O(1/\sqrt{n})$.
%Indeed, the bias term achieves $o(1/\sqrt{n})$ in a typical situation.
The term $\dot{R}_{\dot{r}}(\psi(\calFhat) - \psi(\calGhat))$ % \log(n)^{\frac{3}{2}}$ 
can be refined a little bit and the refined term can be evaluated by using the {\it covering number} of the model.
%by directly evaluating the {\it covering number} of the model (the poly-$\log(n)$ factor can be improved). 
The refined version is given in Appendix \ref{sec:ProofMain} (Theorem \ref{thm:GenCompBound_refined}).
This bound is general, and can be combined with the compression bounds derived so far such as \cite{pmlr-v80-arora18b,baykal2018datadependent,2018arXiv180808558S} where the complexity of $\calGhat$ and the bias $\hat{r}$ are analyzed for their generalization error bounds. 

The main difference from the compression bound \eqref{eq:CompressBound} for $\ghat$ is that 
the bias term $\hat{r} = \|\fhat - \ghat\|_{\LPn}$ is replaced by ${\small \frac{1}{\sqrt{n}}}\|\fhat - \ghat\|_{\LPn}$ which is $\sqrt{n}$ times smaller. 
Since $r_*^2$ is typically $o(1/\sqrt{n})$ and 
$\dot{R}_{\dot{r}}(\psi(\calFhat) - \psi(\calGhat))$ 
%\dot{R}_{\dot{r}}(\calFhat - \calGhat)$ %$\Phi(\dot{r})$ 
can be made in the same order as the main term or even faster by setting $\hat{r}$ appropriately, %$$%o(1/\sqrt{n})$%$o(1/\sqrt{n})$, 
we may neglect these terms. Then, the bound is informally written as
$$
\Psi(\fhat) \leq \hat{\Psi}(\fhat) + O_p\left( \bar{R}_n(\calGhat) +
{\textstyle \frac{1}{\sqrt{n}}}\|\fhat - \ghat\|_{\LPn} + \sqrt{1/n}\right).
$$
This allows us to obtain tighter bound than the compression bound for $\ghat$ because the bias term $\hat{r}/\sqrt{n}$ is much smaller than $\hat{r}$ and eventually we can let the variance term $ \bar{R}_n(\calGhat)$ much smaller by taking small compressed model $\calGhat$
when we balance the bias and variance trade-off.
This is an advantageous point of directly bounding the generalization error of $\fhat$ instead of $\ghat$.

Finally, we note that some existing bounds such as \cite{pmlr-v80-arora18b,bartlett2017spectrally,WeiMa:DataDependent:NeurIPS2019}
assumes a constant margin so that the bias term can be a sufficiently small constant (which does not need to converge to 0). 
On the other hand, our bound does not assume it and the bias term should converge to 0 so that the bias is balanced with the variance term,
which is a more difficult problem setting.

%Finally, we note that we can adapt our bound to all norm based bound because 
%setting $\calGhat = \calFhat$, $\hat{r} = 0$ and $\calFhat$ be the set of networks with norm constrains corresponding to each norm based bound (and some additional constrains such as layer cushion),
%then we retrieve any norm based bound.

\begin{Example}
\label{ex:SparseGhatBound}
In practice, a trained network can be usually compressed to one with {\it sparse} weight matrix via {\it pruning techniques} \citep{NIPS2013_5025,NIPS2014_5544}.
Based on this observation, \cite{baykal2018datadependent} derived a compression based bound based on a pruning procedure.
In this situation, we may suppose that $\calGhat$ is the set of networks with $S$ non-zero parameters where $S$ is much smaller than the total number of parameters:
$\calGhat =\{f \in \NN(\vecm,\Rop,\RF) \mid \sum_{\ell=1}^L \|\Well{\ell}\|_0 \leq S \}$ where $\|\Well{\ell}\|_0$ is the number of nonzero parameters of the weight matrix $\Well{\ell}$.
In this situation, its Rademacher complexity is bounded by
$\textstyle \bar{R}(\calGhat) \leq C M\sqrt{L\frac{S}{n} \log(n)}$  (see Appendix \ref{sec:ExSparseModelProof} %in the supplementary material 
for the proof). This is much smaller than the VC-dimension bound $\sqrt{\frac{L \sum_{\ell=1}^n \mell \melle{\ell+1}}{n} \log(n)}$ if $S \ll \sum_{\ell=1}^n \mell \melle{\ell+1}$.
%the sparsity $S$ is much smaller than the total number of parameters $\sum_{\ell=1}^n \mell \melle{\ell+1}$.
\end{Example}

Although our bound can be adopted to several compression based bounds, % as in this example,
we are going to demonstrate how small the obtained bound can be  %improves the naive VC-dimension bound 
through some typical situations in the following.

\subsection{Compression bound with near low rank weight matrix}
\label{eq:CompressionBoundNearLowRankWeight}

%One of the compression based bound

Here, we analyze the situation where the trained network has near low rank weight matrices $(\Well{\ell})_{\ell=1}^L$.
It has been reported that the trained network tends to have near low rank weight matrices experimentally \citep{gunasekar2018implicit,ji2018gradient} (see Appendix \ref{sec:NumericalExperiments} for the empirical verification).
This situation has been analyzed in \cite{pmlr-v80-arora18b} where the low rank property is characterized by their original quantities such as layer cushion. However, we employ a much simpler and intuitive condition to highlight how the low rank property affects the generalization.

\begin{Assumption}
\label{ass:Wlowrank}
Assume that each of weight matrices $\Well{\ell}~(\ell = 1,\dots,L)$ of any $f \in \calFhat$ is near low rank, that is, there exists $\alpha > 1/2$
and $V_0 > 0$ such that
$$
\mathrm{\sigma}_j(\Well{\ell}) \leq \Vup j^{-\alpha},
$$
where $\mathrm{\sigma}_j(W)$ is the $j$-th largest singular value of a matrix $W$ ($\mathrm{\sigma}_1(W) \geq \mathrm{\sigma}_2(W) \geq \dots \geq 0$).
%$$
%\mathrm{eig}(W_\ell)_j \leq \Rop j^{-\alpha}.
%\begin{itemize}
%\item Each $W_\ell$ is near low rank:
%$$
%\mathrm{eig}(W_\ell)_j \leq \Rop j^{-\alpha}.
%$$
%\end{align*}
\end{Assumption}

In this situation, we can see that for any $1 \leq s \leq \min\{\mell,\melle{\ell + 1}\}$, we can approximate $\Well{\ell}$ by
a rank $s$ matrix $W'$ as $\|\Well{\ell} - W'\|_{2} \leq \Vup s^{-\alpha}$. % and $\|\Well{\ell} - W'\|_{\F} \leq \frac{1}{\alpha - 1}\Rop (s-1)^{1-\alpha}$.
%This can be accomplished by discarding the singular vectors corresponding to the singular values smaller than the $s$-th largest one.
%This ensures that, for any $f \in \calFhat$, there exists $f' \in \calFhat$ that has the weight matrix 
%$\Wsharpell{\ell}$ with rank $s_\ell$ for $\ell=1,\dots,L$. %$\|\Wsharpell{\ell}\|_{\op} \leq \Rop$, $\|\Wsharpell{\ell}\|_{\F} \leq \RF$ with 
%This situation corresponds to compressing deep neural networks by the factorization technique \cite{NIPS2013_5025,NIPS2014_5544}.
%We can evaluate $\|f - f'\|_\infty \leq \sum_{\ell=1}^L \Rop^L s_\ell^{-\alpha} B_x$.
%
Let the set of networks with exactly low rank weight matrices be $\NN(\vecm,\vecs,\Rop,\RF) := \{f \in \NN(\vecm,\Rop,\RF) \mid \text{the weight matrix $\Well{\ell}$ of $f$ has rank $s_\ell$}\}$ 
for $\vecs = (s_1,\dots,s_L)$. If we set $\calGhat = \NN(\vecm,\vecs,\Rop,\RF)$, then we have the following theorem.

\begin{Theorem}
\label{thm:LowRankBound}
%$$
%r_*^2 \leq 
%C \frac{ (M +1 )(S_1 + 1 + S_2\log(n)) + M t}{n} \vee
%\frac{1}{n} \vee M^{\frac{2\alpha - 1}{2\alpha + 1}}  \left(\frac{S_3}{n}\right)^{\frac{2\alpha}{1+2\alpha}}, 
%$$
%where 
%\begin{align*}
%&S_1 = \sum_{\ell = 1}^L s_\ell (\mell + \melle{\ell + 1}), \\
%&S_2 = L S_1\log(L (\Rop \vee \Rophat \vee 1) (\max_\ell \mell +1) ), \\
%& S_3 = (\sum_{\ell=1}^L m_\ell) (2 L\Rop^L B_x)^{1/\alpha}   [ \log(n ) + 2L \log(2 L (\Rop \vee 1) (\max_\ell \mell +1)) ].
%\end{align*}
The compressed model $\calGhat = \NN(\vecm,\vecs,\Rop,\RF)$ has the following complexity:
\begin{align*}
\bar{R}_n(\calGhat) \leq & C M \sqrt{L \frac{\sum_{\ell=1}^L s_\ell (\mell + \melle{\ell + 1})}{n} \log(n)}.
% \\
%& \wedge C \frac{1}{1 - 1/2\alpha}   M^{1-1/2\alpha} \sqrt{L \frac{(\sum_{\ell=1}^L  \mell)  (2 L\Rop^L B_x)^{1/\alpha}   }{n} \log(n L (\Rop \vee 1) (\max_\ell \mell +1)  )} 
\end{align*}
If $\calFhat$ satisfies Assumption \ref{ass:Wlowrank}, we can set $\hat{r} = \Vup  \Rop^{L-1} B_x\sum_{\ell=1}^L  s_\ell^{-\alpha}$:
for any $\fhat \in \calFhat$, there exists $\ghat \in \calGhat$ such that $\|\fhat - \ghat\|_{\LPn} \leq \hat{r}$. 
Then, letting $A_1 =L\frac{\sum_{\ell=1}^L s_\ell(\mell + \melle{\ell+1})}{n}\log(n)$ and $A_2 = L\frac{(\sum_{\ell=1}^L m_\ell) (2 L\Vup\Rop^{L-1} B_x)^{1/\alpha}}{n}$,  the overall generalization error is bounded by
\begin{align*}
\Psi(\fhat) & \leq  \Psihat(\fhat) + 
C \left[ M A_1 + M^{\frac{2\alpha -1}{2\alpha + 1}} A_2^{\frac{2\alpha}{1+2\alpha}} 
+ \sqrt{ \hat{r}^{2(1-2\alpha)} A_2} +(\hat{r} +M)\sqrt{A_1} 
+ \frac{1 + tM}{n}
\right],
%C \frac{1}{1 - 1/2\alpha}   M^{1-1/2\alpha} \sqrt{L \frac{(\sum_{\ell=1}^L  \mell)  (2 L\Rop^L B_x)^{1/\alpha}   }{n} \log(n L (\Rop \vee 1) (\max_\ell \mell +1)  )} 
%\sqrt{\frac{t}{n}} 
% \\
%& \leq 
%\Psihat(\fhat) + O(\sqrt{L\frac{\sum_{\ell=1}^L s_\ell(\mell + \melle{\ell+1}) }{n}})
\end{align*}
with probability $1-3e^{-t}$ for any $t > 1$ where $C > 0$ is a constant depending on $\alpha$. 
\end{Theorem}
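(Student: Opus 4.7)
The proof has three natural ingredients: (i) the Rademacher complexity of the compressed class $\calGhat = \NN(\vecm,\vecs,\Rop,\RF)$, (ii) the approximation bias $\hat r$ coming from truncating the SVD, and (iii) the local Rademacher complexity $\dot R_{\dot r}(\psi(\calFhat) - \psi(\calGhat))$ appearing in Theorem~\ref{thm:GeneralCompBound}. The plan is to handle each in turn and then substitute into the master bound.

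For (i), I would parametrize each weight matrix of rank $s_\ell$ as $U_\ell V_\ell^\top$ with $U_\ell \in \mathbb R^{\mell \times s_\ell}$ and $V_\ell \in \mathbb R^{s_\ell \times m_{\ell+1}}$, and bound the covering number of a single layer at scale $\epsilon$ by $(C \Rop/\epsilon)^{s_\ell(\mell + m_{\ell+1})}$ using the spectral norm constraint. A standard induction over layers using 1-Lipschitzness of $\eta$, boundedness of inputs by $B_x$, and the operator-norm control $\Rop$ yields a covering number for $\calGhat$ with log-entropy of order $L \sum_\ell s_\ell(\mell + m_{\ell+1}) \log(n)$ at scale $1/n$; combining with the $L_\infty$-bound $M$ and Dudley's chaining then gives the claimed $\bar R_n(\calGhat) \le C M \sqrt{L \sum_\ell s_\ell(\mell + m_{\ell+1})\log(n)/n}$. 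For (ii), given any $\fhat \in \calFhat$ I would define $\ghat$ by replacing each $\Well{\ell}$ with its rank-$s_\ell$ SVD truncation $W'_\ell$; by Assumption~\ref{ass:Wlowrank} the truncation error satisfies $\|\Well{\ell} - W'_\ell\|_2 \le \Vup s_\ell^{-\alpha}$, and a standard layer-by-layer perturbation telescoping argument (using that other layers contribute factors $\Rop$ in operator norm and inputs are bounded by $B_x$) yields $\|\fhat - \ghat\|_\infty \le \Vup \Rop^{L-1} B_x \sum_\ell s_\ell^{-\alpha}$, which dominates the empirical $L_2$-norm and thus sets $\hat r$.

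The hardest step is (iii). Here the key observation is that Assumption~\ref{ass:Wlowrank} forces each element of $\calFhat$ to be \emph{effectively} low rank: any matrix with $\sigma_j \le \Vup j^{-\alpha}$ can be $\epsilon$-approximated in operator norm by a rank-$\lceil(\Vup/\epsilon)^{1/\alpha}\rceil$ matrix. This yields a covering number for $\calFhat$ (and hence for $\calFhat - \calGhat$, since $\calGhat \subset \calFhat$) whose log-entropy at scale $\epsilon$ grows like $(\sum_\ell \mell)(\Vup L \Rop^{L-1} B_x/\epsilon)^{1/\alpha} \log(1/\epsilon)$, up to log factors. Feeding this into Dudley's integral $\int_0^{\dot r} \sqrt{\text{log-cov}/n}\,d\epsilon$ and using $\alpha > 1/2$ to ensure integrability gives a bound of the form $\phi(r) \lesssim \sqrt{A_2}\, r^{1 - 1/(2\alpha)}$; composing with the 1-Lipschitz loss $\psi$ leaves this shape unchanged. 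Solving $\phi(r_*)/r_*^2 \asymp \text{const}$ gives the fixed point $r_*^2 \lesssim M^{2/(2\alpha+1)} A_2^{2\alpha/(2\alpha+1)}$, which accounts for the $M^{(2\alpha-1)/(2\alpha+1)} A_2^{2\alpha/(2\alpha+1)}$ term in the final bound.

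Finally, I would plug everything into Theorem~\ref{thm:GeneralCompBound}: the ``main term'' $2\bar R_n(\calGhat) + \sqrt{Mt/n}$ produces $(\hat r + M)\sqrt{A_1}$ after absorbing $\hat r \sqrt{A_1}$ from the $\dot r\sqrt{t/n}$ term with $\dot r = \sqrt{2(\hat r^2 + r_*^2)}$; the local Rademacher term $\dot R_{\dot r}(\psi(\calFhat)-\psi(\calGhat)) \le \phi(\dot r) \lesssim \sqrt{A_2}\,\dot r^{1-1/(2\alpha)}$, evaluated at the two regimes $\dot r \asymp r_*$ and $\dot r \asymp \hat r$, produces the $M^{(2\alpha-1)/(2\alpha+1)} A_2^{2\alpha/(2\alpha+1)}$ and $\sqrt{\hat r^{2 - 1/\alpha} A_2}$ contributions respectively; and the residual $(1+tM)/n$ term carries through directly. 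The remaining bookkeeping combines the constants and absorbs $MA_1$ from the $Mr_*^2$ contribution where the $r_*^2$-dependence on $A_1$ becomes dominant. The main obstacle throughout is calibrating the Dudley integral with the near-low-rank covering so that the resulting fixed-point exponent $2\alpha/(2\alpha+1)$ emerges cleanly, and making sure the dependence on $L$ and $\Rop^{L-1}$ is tracked through the per-layer perturbation and covering arguments.
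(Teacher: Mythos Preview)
Your proposal is essentially correct and follows the same route as the paper: SVD truncation for the bias $\hat r$, covering $\calGhat$ via the low-rank factorization $U_\ell V_\ell^\top$, covering $\calFhat$ via the adaptive rank $s'_\ell(\epsilon) \sim (L\Vup\Rop^{L-1}B_x/\epsilon)^{1/\alpha}$, and then Dudley plus a fixed-point computation to feed into Theorem~\ref{thm:GeneralCompBound}. Two small technical points: (a) your parenthetical ``since $\calGhat \subset \calFhat$'' is false in general (a rank-$s_\ell$ matrix need not satisfy $\sigma_j \le \Vup j^{-\alpha}$ for $j \le s_\ell$), but this is harmless because the covering of $\calFhat - \calGhat$ is bounded by the sum $\log\calN(\calFhat,\cdot) + \log\calN(\calGhat,\cdot)$ anyway; (b) the upper limit of the Dudley integral is the \emph{empirical} diameter $\hat\gamma_n$, not $\dot r$, and relating $\hat\gamma_n$ back to the population radius $r$ requires the self-referential estimate $\EE[\hat\gamma_n^2] \le r^2 + 2M\phi(r)$ --- this is what produces the exact exponent $M^{(2\alpha-1)/(2\alpha+1)}$ (your intermediate $M^{2/(2\alpha+1)}$ is off for this reason), and the paper packages this step into a separate lemma.
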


See Appendix \ref{sec:ProofNearLowRankWeight} for the proof. This indicates that, if $\alpha > 1/2$ is large (in other words, each weight matrix is close to rank $1$), then we have a better generalization error bound. 
Note that the rank $s_\ell$ can be arbitrary chosen and $\hat{r}$ and $A_1$ are in a trade-off relation. 
Hence, by selecting the rank appropriately so that this trade-off is balanced, then we obtain the optimal upper bound as in the following corollary.
%Actually, the rate is faster than $\frac{\sum_{\ell=1}^L \mell \melle{\ell+1}}{n}$ because each rank $s_\ell$ must satisfy $s_\ell \leq \min\{ \mell, \melle{\ell+1}\}$.

%Finally, we observe that 
%by the Dudley integral: $\bar{R}(\calGhat) \leq \int_0^M \sqrt{\frac{\log(\calN(\calGhat, \|\cdot\|_\infty,\epsilon)) }{n}} \dd \epsilon$.
%Hence, by setting $s_\ell = \min\{\melle{\ell},\melle{\ell+1}, \lceil L \Rop^L \Bx \rceil^{1/\alpha} \}$, then we obtain the assertion.

\begin{Corollary}
\label{Cor:LowRankWeightBound}
Under Assumption \ref{ass:Wlowrank}, using the same notation as Theorem \ref{thm:LowRankBound}, it holds that 
\begin{align*}
\Psi(\fhat) \leq \hat{\Psi}(\fhat) + C  \left[ M^{1-1/2\alpha} 
{\textstyle \sqrt{L \frac{(\sum_{\ell=1}^L  \mell)  (2 L\Vup\Rop^{L-1} B_x)^{1/\alpha}   }{n} \log(n )} }
+ M^{\frac{2\alpha -1}{2\alpha + 1}} A_2^{\frac{2\alpha}{2\alpha + 1}}  + \frac{1 + t M}{n} \right] 
\end{align*}
with probability $1-3e^{-t}$ for any $t > 1$ where $C$ is a constant depending on $\alpha$.
\end{Corollary}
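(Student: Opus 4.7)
The plan is to optimize Theorem \ref{thm:LowRankBound} over the free parameters $\vecs=(s_1,\dots,s_L)$ and verify the resulting rate. Write $M_V := V_0 \Rop^{L-1} B_x$ and $S := \sum_{\ell=1}^L m_\ell$. Since every $\vecs$-dependent term in Theorem \ref{thm:LowRankBound} is monotone through the symmetric quantities $\sum_\ell s_\ell$ and $\sum_\ell s_\ell^{-\alpha}$, it is natural to take a uniform rank $s_\ell \equiv s_*$ with
$$
s_* := \bigl\lceil (2 L M_V / M)^{1/\alpha} \bigr\rceil.
$$
This choice is designed to make the compression error $\hat{r}$ balance with the $L_\infty$ bound $M$: plugging in gives $\hat{r} = L M_V s_*^{-\alpha} \leq M/2$, and (in the non-trivial regime $(2LM_V/M)^{1/\alpha} \geq 1$, so that $s_* \leq 2(2LM_V/M)^{1/\alpha}$) we also have $\hat{r} \geq M/2^{\alpha+1}$. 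Hence $\hat{r} \asymp M$ up to $\alpha$-dependent constants, and the prefactor $(\hat{r}+M)$ appearing in Theorem \ref{thm:LowRankBound} is at most $2M$.

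With this $s_*$, $A_1 \leq 2 L s_* S \log(n)/n$, so the dominant variance contribution becomes
$$
M\sqrt{A_1} \;\leq\; C_\alpha\, M \cdot M^{-1/(2\alpha)} \sqrt{L\, (2LM_V)^{1/\alpha} S \log(n)/n} \;=\; C_\alpha\, M^{1-1/(2\alpha)}\sqrt{A_2 \log(n)},
$$
which matches the first target term. The remaining $s_*$-dependent terms are then shown to be subdominant: $M A_1 \leq C_\alpha M^{1-1/\alpha} A_2 \log(n)$ is absorbed into $M\sqrt{A_1}$ whenever $A_1 \lesssim 1$, which is necessary for a meaningful bound anyway. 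For the delicate term $\sqrt{\hat{r}^{2(1-2\alpha)} A_2} = \hat{r}^{1-2\alpha}\sqrt{A_2}$, the lower bound $\hat{r} \geq M/2^{\alpha+1}$ gives $\hat{r}^{1-2\alpha} \leq C_\alpha M^{1-2\alpha}$, and since $1-2\alpha \leq 1-1/(2\alpha)$ for all $\alpha \geq 1/2$ and $M \geq 1$, this is bounded by $C_\alpha M^{1-1/(2\alpha)}\sqrt{A_2}$, again dominated by the first target term. The $s_*$-independent pieces $M^{(2\alpha-1)/(2\alpha+1)} A_2^{2\alpha/(1+2\alpha)}$ and $(1+tM)/n$ pass through unchanged, yielding the claimed inequality.

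The conceptual obstacle is the term $\hat{r}^{1-2\alpha}\sqrt{A_2}$, whose negative exponent $1-2\alpha$ in $\hat{r}$ means that aggressive compression (small $\hat{r}$) would blow this term up; this is exactly what pins down the scaling $\hat{r} \asymp M$ and hence the choice of $s_*$. A minor technical loose end is the degenerate regime where $s_*$ exceeds $\min\{m_\ell,m_{\ell+1}\}$: there the compression is vacuous, one may set $s_\ell = \min\{m_\ell,m_{\ell+1}\}$, and the first target term already dominates the resulting (weaker) bias and variance, so the corollary continues to hold.
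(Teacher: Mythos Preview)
Your approach is essentially the same as the paper's: choose a uniform rank $s_\ell \equiv s_*$ and substitute into Theorem~\ref{thm:LowRankBound}. The paper's entire proof is the single line ``obtained by substituting $s_\ell = \min\{m_\ell,\,m_{\ell+1},\,\lceil (L V_0 \Rop^{L-1} B_x)^{1/\alpha}\rceil\}$.''

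There is one noteworthy difference. Your choice $s_* \asymp (2LM_V/M)^{1/\alpha}$ carries the $M$ in the denominator, whereas the paper's stated $s_*$ does not. Your choice is in fact the one that delivers the $M^{1-1/(2\alpha)}$ prefactor claimed in the Corollary: it forces $\hat r \asymp M$, so that $(\hat r + M)\sqrt{A_1} \asymp M \cdot M^{-1/(2\alpha)}\sqrt{A_2\log n}$, and the lower bound $\hat r \gtrsim M$ is exactly what tames the dangerous $\hat r^{\,1-2\alpha}\sqrt{A_2}$ term via $M^{1-2\alpha}\leq M^{1-1/(2\alpha)}$. The paper's stated choice gives $\hat r \asymp 1$ instead, which produces $(\hat r+M)\sqrt{A_1}\asymp M\sqrt{A_2\log n}$, off by a factor $M^{1/(2\alpha)}$ from the stated bound. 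So your derivation is the more careful one; the paper's one-liner seems either to have a minor slip or to be silently treating $M$ as an $\alpha$-dependent constant.

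The residual point about the regime $A_1>1$ being ``trivial'' is the standard convention and matches the paper's level of rigor; likewise for the $s_* \geq \min\{m_\ell,m_{\ell+1}\}$ corner case, which the paper handles by the same clipping you describe.
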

An important point here is that the bound is $O(\sqrt{L\frac{\sum_{\ell=1}^L \mell}{n}})$ which has linear dependency on the width $\mell$
in the square root, but the naive VC-dimension bound has quadratic dependency $O(\sqrt{L\frac{\sum_{\ell=1}^L \mell\melle{\ell}}{n}})$.
In other words, the term in the square root has linear dependency to the number of \emph{nodes} instead of the number of \emph{parameters}.
%which is much better generalization error bound than ever obtained ones.
This is huge gap because the width can be quite large in practice. 
This result implies that a compressible model achieves much better generalization than the naive VC-bound.
% and even comparable to the sample size $n$.
%However, the linear dependency of $\mell$ can be further improved by assuming low rank property of the covariance matrix as follows.

In the generalization error bound, there appears $\Rop^L$. 
Even though $\Rop$ can be much smaller than $\RF$, the exponential dependency $\Rop^L$ can give loose bound as pointed out in \cite{pmlr-v80-arora18b}.
This is due to a rough evaluation of the Lipschitz continuity between layers, but 
the practically observed Lipschitz constant is usually much smaller.
To fix this issue, we give a refined version of Corollary \ref{Cor:LowRankWeightBound} in Appendix \ref{sec:ProofNearLowRankWeight_Lip}
by using data dependent Lipschitz constants such as interlayer cushion and interlayer smoothness introduced by \cite{pmlr-v80-arora18b}.
The refined bound does not involve the exponential term $\Rop^L$, but instead $\kappa^2$ ($\kappa$: Lipschitz continuity) appears.

\subsection{Compression bound with near low rank covariance matrix}
\label{eq:CompressionBoundNearLowRankCov}

Strictly speaking, the near low rank condition on the weight matrix in the previous section can be dealt with a standard Rademacher complexity argument.
Here, we consider more data dependent bound:
We assume the near low rank property of the {\it covariance matrix} among the nodes in an internal layer  (see Appendix \ref{sec:NumericalExperiments} for the empirical verification).
%and then derive a compression based bound. 
A compression based bound for $\ghat$ using the low rank property of the covariance has been studied by \cite{2018arXiv180808558S},
but their analysis requires a bit strong condition on the weight matrix. In this paper, we employ a weaker assumption.

Let $\Covhatell{\ell} =  \frac{1}{n} \sum_{i=1}^n \phi_\ell(x_i)\phi_\ell(x_i)^\top$ be the covariance matrix of the nodes in the $\ell$-th layer
where $\phi_\ell(x) = \eta \circ (\Well{\ell-1} \eta( \cdot)) \circ \dots \circ (\Well{1} x).
$
%Its eigenvalues are denoted by $(\hat{\mu}_j^{(\ell)} )_{j=1}^{\mell}$ where they are sorted in decreasing order: $\hat{\mu}_1^{(\ell)} \geq 
%\hat{\mu}_2^{(\ell)} \geq \dots \geq \hat{\mu}_{\mell}^{(\ell)}$.
% $\Covhatell{\ell} = \hat{U}^{(\ell)} \diag{(\hat{\mu}_j^{(\ell)} )_j}  \hat{U}^{(\ell)\top}$.

\begin{Assumption}
\label{ass:EmpiricalCovCond}
Suppose that the trained network $\fhat$ satisfies the following conditions: 
%\begin{itemize}
%\item The empirical covariance condition:
\begin{align}
\label{eq:EmpCondition}
%\Covhatell{\ell} 
\sigma_j(\Covhatell{\ell} ) \leq \mudotell{\ell}_j =: U_0 j^{-\beta},  %\preceq \Covdotell{\ell},
\end{align}
for a fixed $\beta > 1$ and $U_0 > 0$. %That is, $\sigma_j(\Covhatell{\ell} )$ is ``nea-low rank.''

\end{Assumption}

If $\fhat$ satisfies this assumption, then we can show that $\fhat$ can be compressed to a smaller one $\fsharp$
that has width $(\mhat{\ell})_{\ell=2}^L$ with compression error roughly evaluated as $\|\fhat - \fsharp\|_{\LPn} \lesssim \sum_{\ell} (\mhat{\ell})^{-\beta/2}$.
More precisely, for given $\tilde{r}_\ell > 0$ ($\ell=1,\dots,L$) which corresponds to the compression error in the $\ell$-th layer,
let $\mdot{\ell} := \max\{1 \leq j \leq \mell \mid \mudotell{\ell}_j \geq \tilde{r}_\ell^2/4 \}$.
%$$
%r^2 = %\frac{\Rop^2}{8\RF^2 L} r_\ell^2 + 
%\frac{1}{4} \tilde{r}^2_\ell,
%$$
%for some $0 < c < 1$, 
Then, we define 
$
N_\ell(\mathbf{\tilde{r}}) =\frac{\beta + 1}{\beta - 1} \mdot{\ell} + 8 \frac{(\sum_{k=1}^{\ell-1} %\sqrt{\frac{L-c' k + 1}{L - c' \ell+1}} 
\Rop^{(\ell-1 - k)} \RF \tilde{r}_k)^2}{\tilde{r}_\ell^2 }, %16 L \frac{\RF^2}{\Rop^2},
$
for $\mathbf{\tilde{r}}=(\tilde{r}_1,\dots,\tilde{r}_L)$.
Correspondingly we set $$\mhat{\ell} := 5 N_\ell(\mathbf{\tilde{r}}) \log(80 N_\ell(\mathbf{\tilde{r}})).$$
Then, we obtain the following theorem.

%Under this assumption,
%we have the following generalization error bound.
\begin{Theorem}
\label{thm:GhatSpecBound}
Let $\hat{r} := \sum_{k=1}^{L} %\sqrt{\frac{L-c' k + 1}{L - c' \ell+1}} 
\Rop^{(L - k)} \RF \tilde{r}_k$.
Then, under Assumption \ref{ass:EmpiricalCovCond}, 
there exists $\ghat$ with width $\vecmhat = (\melle{1},\mhat{2},\dots,\mhat{L})$ 
that satisfies
$
\ghat \in \NN(\mathbf{m^{\sharp}},  \text{\small $\sqrt{\frac{20}{3}\max_\ell \mell}$} \Rop\text{\small $\sqrt{\frac{20}{3}\max_\ell \mell}$} \RF)
$ and 
$$
\|\fhat - \ghat\|_{\LPn} \leq \hat{r}.
$$
In particular, we may set $\calGhat = \NN(\mathbf{m^{\sharp}}, \text{\small $\sqrt{\frac{20}{3}\max_\ell \mell}$} \Rop,\text{\small $\sqrt{\frac{20}{3}\max_\ell \mell}$} \RF)$, and then it holds that 
$
\bar{R}_n(\calGhat) \leq C \sqrt{L \sum_{\ell=1}^L \frac{ \mhat{\ell + 1}  \mhat{\ell} }{n} \log(n)}
$ for a constant $C>0$.
%for a constant $C > 0$.
%\begin{align*}
%%\EE[\|\fhat - \ftrue\|_{\LPi}^2]
%\Psi(\fhat)
%\leq
%\Psihat(\fhat)
%+ 
%C  \Big\{& 
%M \frac{\sum_{\ell=1}^L \melle{\ell} \melle{\ell + 1}}{n}\log(n)
%+ 
%%\sum_{\ell = 1}^L \sqrt{R_2^{L - \ell + 1} \lambda_\ell}
%%\sum_{\ell=1}^L
%\hat{r}\sqrt{L \sum_{\ell=1}^L \frac{ \mell  \melle{\ell+1} }{n} \log(n)}
% \\
%& + \sqrt{L \sum_{\ell=1}^L \frac{ \mhat{\ell + 1}  \mhat{\ell} }{n} \log(n)} + \frac{1 + Mt}{n}
%\Big\},
%%\hat{L}(\fhat) - \hat{L}(\ftrue)
%%+ 
%%\sum_{\ell=2}^L \prod_{k > \ell} (R_1^2 { \mhat{k}}^{-s}) \lambda_\ell 
%%+ \sum_{\ell=1}^L \frac{ \mhat{\ell + 1}  \mhat{\ell} }{n} \log(n R_0^L)
%\end{align*}
%with probability $1- 3e^{-t}$ for all $t > 1$.
%for all $\fhat \in \hat{\calF}$ uniformly.
\end{Theorem}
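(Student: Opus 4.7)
The plan is to construct $\ghat$ by explicit layer-by-layer compression, combining a spectral truncation step based on the covariance decay with a random sampling step that realizes the truncation by a narrow neural-network layer, and then to bound the Rademacher complexity of the resulting architecture class by an effective-parameter-count argument as in Example~\ref{ex:SparseGhatBound}.

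First, I would work one layer at a time. Fix $\ell$ and consider the pre-activations $\phi_\ell(x_i) \in \Real^{\mell}$ at layer $\ell$. Since $\sigma_j(\Covhatell{\ell}) \leq U_0 j^{-\beta}$ with $\beta > 1$, projecting the $\phi_\ell(x_i)$'s onto the top $\mdot{\ell}$ eigendirections of $\Covhatell{\ell}$ (those with eigenvalue $\geq \tilde{r}_\ell^2/4$) yields an empirical mean-squared residual $\sum_{j>\mdot{\ell}}\sigma_j(\Covhatell{\ell})$, which by a standard tail estimate for the power-law spectrum is controlled by $\mdot{\ell}\,\tilde{r}_\ell^2$ up to the factor $\frac{\beta+1}{\beta-1}$ that appears in $N_\ell$. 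This produces a rank-$\mdot{\ell}$ approximation of the layer output.

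Second, I would realize this spectral truncation by an actual neural-network layer of width $\mhat{\ell}$. Using leverage-score/importance sampling of neurons and rescaling by the inverse sampling probability, I would invoke a matrix Bernstein (or Tropp-type matrix Chernoff) inequality to show that $\mhat{\ell} = 5 N_\ell(\tilde{\mathbf{r}}) \log(80 N_\ell(\tilde{\mathbf{r}}))$ samples suffice to approximate the projected covariance uniformly in empirical $L_2$ with constant failure probability. The specific form of $N_\ell$ is exactly what one obtains: the summand $\frac{\beta+1}{\beta-1}\mdot{\ell}$ is the intrinsic rank after truncation, and the summand $8(\sum_{k<\ell}\Rop^{\ell-1-k}\RF\tilde{r}_k)^2/\tilde{r}_\ell^2$ is the oversampling needed so that the accumulated perturbation from previous layers' compressions is not amplified beyond the current tolerance $\tilde{r}_\ell$. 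The factors $\sqrt{20/3 \,\max_\ell \mell}$ inflating $\Rop$ and $\RF$ come from the inverse-probability rescaling: sampling $\mhat{\ell}$ of $\mell$ neurons with weights reweighted by a factor of order $\sqrt{\mell/\mhat{\ell}}$ raises the operator and Frobenius norms of the new weight matrices by at most $\sqrt{20/3\,\max_\ell \mell}$, once the sampling constants are tracked.

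Third, I would compose the compressions across layers. A perturbation $\tilde{r}_\ell$ at layer $\ell$ propagates through the remaining $L-\ell$ weight multiplications with amplification at most $\Rop$ per layer (operator-norm bound) and through $1$-Lipschitz activations without amplification (Assumption~\ref{ass:LipschitzCont}), so its contribution to $\|\fhat-\ghat\|_{\LPn}$ is at most $\Rop^{L-\ell}\RF\tilde{r}_\ell$. Summing over $\ell$ yields $\hat{r}=\sum_{k=1}^{L}\Rop^{L-k}\RF\tilde{r}_k$. Here I would condition layer-$\ell$'s sampling event on the success of the earlier layers (using fresh randomness per layer), so that the union-bound cost only adds a constant to the width rather than scaling $\mhat{\ell}$ linearly in $L$. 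Finally, the Rademacher complexity bound is immediate: $\calGhat = \NN(\vecmhat,\sqrt{20/3\,\max_\ell \mell}\,\Rop,\sqrt{20/3\,\max_\ell \mell}\,\RF)$ has at most $\sum_\ell \mhat{\ell+1}\mhat{\ell}$ parameters and bounded per-layer norms, so a Dudley-entropy argument identical to the one behind Example~\ref{ex:SparseGhatBound} delivers $\bar{R}_n(\calGhat)\leq C\sqrt{L\sum_\ell \mhat{\ell+1}\mhat{\ell}/n\,\log n}$.

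The main obstacle is the second step: obtaining a matrix concentration bound that is simultaneously tight in both pieces of $N_\ell$ while keeping the rescaled weight norms within a constant factor of the originals. A naive uniform sampling gives a width that scales with $\mell$ rather than the effective dimension $\mdot{\ell}$, and a naive union bound across layers gives an extra $L$-factor; both have to be avoided by (i)~using leverage-score weights calibrated to the truncated covariance and (ii)~chaining per-layer success events conditionally rather than by union bound. Once this concentration step is nailed, the remaining error-propagation and complexity bounds are routine.
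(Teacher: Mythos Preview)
Your approach is essentially the paper's: layer-by-layer compression via ridge leverage-score sampling (the paper packages this as Proposition~\ref{prop:ApproxFinite}, due to Bach), recursive error propagation $r_{\ell+1} \leq \Rop r_\ell + \RF\tilde{r}_\ell$, and a covering-number bound on the resulting small architecture. Two clarifications are worth making. First, the sampling at layer $\ell$ is applied to the \emph{already-compressed} features $\phihat_\ell$ with covariance $\Covsharpell{\ell}$, not to the original $\phi_\ell$; the second summand $8(\sum_{k<\ell}\Rop^{\ell-1-k}\RF\tilde{r}_k)^2/\tilde{r}_\ell^2$ in $N_\ell$ is not an ``oversampling margin'' but arises precisely when you bound the degrees of freedom $N_\ell(\tilde{r}_\ell^2/4,\Covsharpell{\ell})$ by those of $\Covhatell{\ell}$ plus a correction for the accumulated perturbation $r_\ell^2 = \|\|\phi_\ell - \phihat_\ell\|\|_n^2$ (via $\sum_{j\geq m}\sigma_j(\Covsharpell{\ell}) \leq 2\sum_{j\geq m}\sigma_j(\Covhatell{\ell}) + 2r_\ell^2$). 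Second, the concern about union bounds across layers is unnecessary: Proposition~\ref{prop:ApproxFinite} is stated existentially (``there exist $v_1,\dots,v_m$''), so once you have it for one layer you apply it deterministically at each subsequent layer to the current $\Covsharpell{\ell}$, with no probabilistic chaining needed. The $\sqrt{20\mell/3}$ inflation of the norms comes from the operator-norm bound on the reconstruction matrix $\hat{A}_\ell = \Covsharpell{\ell}_{F,J}(\Covsharpell{\ell}_{J,J}+\Id_\tau)^{-1}$ in that proposition, and the new weights are $\Wsharpell{\ell} = \Well{\ell}_{J_{\ell+1},:}\hat{A}_\ell$.
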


See Appendix \ref{sec:ProofNearLowRankCov} %in the supplementary material 
for the proof.
Here, we again observe that there appears a trade-off between $\hat{r}$ and $\mhat{\ell}$ because as $\tilde{r}_\ell$ becomes small,
then $\mdot{\ell}$ becomes large and thus $\mhat{\ell}$ becomes large.
The evaluation given in Theorem \ref{thm:GhatSpecBound} can be substituted to the general bound (Theorem \ref{thm:GeneralCompBound}).
If $\calFhat$ is the full model $\calF$, then there appears the number $\sum_{\ell=1}^L\mell \melle{\ell+1}$ of parameters which could be larger than $n$, which is unavoidable.  %if the model of the trained network is the full model $\calFhat = \calF$.
This dependency on the number of parameters becomes much milder if {\it both} of Assumptions \ref{ass:Wlowrank} and \ref{ass:EmpiricalCovCond} are satisfied.

\begin{Theorem}
\label{thm:LowRankBoundCov}
Under Assumptions \ref{ass:Wlowrank} and \ref{ass:EmpiricalCovCond}, it holds that
\begin{align*}
\Psi(\fhat) \leq \Psihat(\fhat)
&+ 
 C \Bigg[
\text{\small $M \sqrt{\frac{  [P_L \vee Q_L] L^{1 + \frac{\beta}{\frac{4\alpha}{(2\alpha -1)} + \beta}} }{n} 
%(\Rop^{L} C_\alpha C_1^{\frac{(1-2\alpha)}{2}})^{2(1-1/(2\alpha))}
 \left(\sum_{\ell=1}^L \mell\right)^{\frac{4/\beta}{4/\beta + 2(1-1/2\alpha)}}  \log(n)^3} $}
\\ &
\textstyle
+
M^{\frac{2\alpha -1}{2\alpha + 1}}\left(L P_L   \frac{\sum_{\ell=1}^L m_\ell}{n}\log(n )\right)^{\frac{2\alpha}{2\alpha + 1}}
+
M \frac{\RF^2L^2}{\Rop^2} \sqrt{\frac{\log(n)^3}{n}}
+ \frac{1 + Mt}{n}
\Bigg], 
\end{align*}
for 
$P_L = (2 L\Vup\Rop^{L-1} B_x)^{1/\alpha}$ and 
$
Q_L = \left[ \frac{ 4U_0 \RF^2 (1\vee \Rop)^{L}\exp \left( \frac{1}{4} (2 \sqrt{L} -1) \right)}{  (0.25)^4
(1\wedge\Rop)^{2L} } \right]^{2/\beta}
$
with probability $1- 3e^{-t}~(t > 1)$, where $C$ is a constant depending on $\alpha,\beta$.
\end{Theorem}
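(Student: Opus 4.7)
The plan is to apply the general conversion bound Theorem~\ref{thm:GeneralCompBound} with a compressed class $\calGhat$ chosen to exploit \emph{both} assumptions simultaneously: Assumption~\ref{ass:EmpiricalCovCond} (covariance decay) shrinks the effective widths from $\mell$ to much smaller $\mhat{\ell}$ via Theorem~\ref{thm:GhatSpecBound}, while Assumption~\ref{ass:Wlowrank} (weight decay) is re-used inside the Rademacher-complexity estimates in the spirit of the proof of Corollary~\ref{Cor:LowRankWeightBound}. The single free parameter $\tilde r$ (the uniform covariance-compression scale) is then tuned to balance the main term against the bias term, and this balance is what produces the interpolated exponent $\frac{4/\beta}{4/\beta + 2(1-1/2\alpha)}$ on $\sum_\ell \mell$ and the depth factor $L^{1+\beta/(4\alpha/(2\alpha-1)+\beta)}$.

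\textbf{Step 1: covariance compression.} I would first set $\tilde r_\ell = \tilde r$ uniformly and invoke Theorem~\ref{thm:GhatSpecBound}. Because $\mdot{\ell} \le (4U_0/\tilde r^2)^{1/\beta}$ under Assumption~\ref{ass:EmpiricalCovCond}, this produces $\ghat \in \calGhat = \NN(\vecmhat, \sqrt{20\max_\ell\mell/3}\,\Rop, \sqrt{20\max_\ell\mell/3}\,\RF)$ with widths $\mhat{\ell} \lesssim (U_0/\tilde r^{2})^{1/\beta}\log n$, independent of $\mell$, and compression error $\hat r \lesssim L \RF \Rop^{L-1}\tilde r$. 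The norm inflation by $\sqrt{\max_\ell\mell}$ in $\calGhat$, raised to the power $1/\alpha$ as it enters the weight-decay bound, is exactly the source of the factor $Q_L$.

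\textbf{Step 2: weight low-rank inside $\calGhat$, and the bias term.} Since the compressed weights in $\calGhat$ are obtained from the original ones by projection onto a lower-dimensional subspace, they still obey a polynomially-decaying singular-value bound in the sense of Assumption~\ref{ass:Wlowrank}; applying the argument of Theorem~\ref{thm:LowRankBound} and Corollary~\ref{Cor:LowRankWeightBound}, but with $\mell$ replaced by $\mhat{\ell}$, gives
\[
\bar R_n(\calGhat) \lesssim M\sqrt{\tfrac{L\,[P_L\vee Q_L]\sum_\ell \mhat{\ell}}{n}\log n} \lesssim M\sqrt{\tfrac{L^2[P_L\vee Q_L]\,\tilde r^{-2/\beta}\log^2 n}{n}}.
\]
For the bias term of Theorem~\ref{thm:GeneralCompBound} I would split $\dot R_{\dot r}(\psi(\calFhat)-\psi(\calGhat)) \le \dot R_{\dot r}(\psi(\calFhat)) + \dot R_{\dot r}(\psi(\calGhat))$, and bound the first piece by a local-Rademacher version of the Corollary~\ref{Cor:LowRankWeightBound} estimate on $\calFhat$ (which retains the \emph{original} widths $\mell$). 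This is precisely what produces the term $M^{(2\alpha-1)/(2\alpha+1)} A_2^{2\alpha/(2\alpha+1)}$ in the target bound. The residual $M\RF^2 L^2/\Rop^2\sqrt{\log^3 n/n}$ absorbs the entropy-integral tails of the Dudley integral over $\calGhat$ together with the $1/\sqrt n$-scaled compression-error contribution $\hat r/\sqrt n$.

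\textbf{Step 3: balancing $\tilde r$, and main obstacle.} Plugging everything into Theorem~\ref{thm:GeneralCompBound} and optimizing the trade-off by equating the variance $\sqrt{L^2[P_L\vee Q_L]\tilde r^{-2/\beta}/n}$ against the dominant contribution from $\dot R_{\dot r}$, which scales like $\dot r^{\,1-1/(2\alpha)} \lesssim \tilde r^{\,1-1/(2\alpha)}$ times the Corollary~\ref{Cor:LowRankWeightBound} complexity $\sqrt{P_L\sum_\ell \mell/n}$, produces $\tilde r^{\,2/\beta+2-1/\alpha} \asymp (L\sum_\ell\mell[P_L\vee Q_L]/n)^{-1}$; substituting back yields the exponent $\frac{4/\beta}{4/\beta+2(1-1/2\alpha)}$ on $\sum_\ell\mell$ and the announced depth factor. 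The main technical obstacles are (a) verifying that the near-low-rank weight decay of Assumption~\ref{ass:Wlowrank} survives the compression of Theorem~\ref{thm:GhatSpecBound} with constants that collapse into $P_L\vee Q_L$, including the $\sqrt{\max_\ell\mell}$ blow-up of the operator/Frobenius norms, and (b) identifying the dominant regime among several candidate bias contributions so that the single-parameter balance in $\tilde r$ actually produces the sharp rate rather than a strictly worse one.
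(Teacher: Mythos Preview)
Your high-level plan---apply Theorem~\ref{thm:GeneralCompBound} with the covariance compression of Theorem~\ref{thm:GhatSpecBound}, then use the weight-decay structure for the bias---matches the paper. But Step~2 contains a genuine gap, and the uniform choice in Step~1 deviates in a way that matters for the stated constants.

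\textbf{The paper never applies the low-rank weight bound inside $\calGhat$.} It uses the plain VC-type estimate $\bar R_n(\calGhat)\lesssim M\sqrt{L\sum_\ell\mhat{\ell}\mhat{\ell+1}\log(n)/n}$ from Theorem~\ref{thm:GhatSpecBound}; Assumption~\ref{ass:Wlowrank} enters \emph{only} through the covering number of $\calFhat$, where it gives the entropy $\log\calN(\calFhat,\|\cdot\|_\infty,\epsilon)\lesssim (\sum_\ell\mell)P_L\,\epsilon^{-1/\alpha}L\log(n)$ and hence the $S_3$-term of Lemma~\ref{lemm:CoveringToRad2} (with $q=1/(2\alpha)$), which controls both $r_*^2$ and the contribution $\sqrt{(S_3/n)\,\hat r^{\,2(1-1/2\alpha)}}$ to $\Phi(\dot r)$. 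Your route would need the compressed weights $\Wsharpell{\ell}=\Well{\ell}_{J_{\ell+1},:}\hat A_\ell$ to inherit the singular-value decay with benign constants; but $\calGhat$ carries operator-norm bound $\Rop'=\sqrt{20\max_\ell\mell/3}\,\Rop$, so the analogue of $P_L$ for $\calGhat$ picks up a factor $(\max_\ell\mell)^{(L-1)/(2\alpha)}$, far worse than anything in the target. Relatedly, your attribution of $Q_L$ to this norm inflation ``raised to the power $1/\alpha$'' is incorrect: $Q_L$ carries exponent $2/\beta$, contains no width, and arises entirely from bounding $\mdot{\ell}$ in terms of the free scale $r_1$ (next paragraph).

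\textbf{The paper does not take $\tilde r_\ell$ uniform.} It sets $\RF^2\tilde r_\ell^2=c_0^2 r_\ell^2\Rop^2/\ell$ with $c_0=\tfrac14$, turning the recursion $r_{\ell+1}=\Rop r_\ell+\RF\tilde r_\ell$ into $r_{\ell+1}=\Rop(1+c_0/\sqrt{\ell})\,r_\ell$ and hence $r_\ell\le C_Lr_1$ with $C_L=(1\vee\Rop)^L\exp\bigl(c_0(2\sqrt{L}-1)\bigr)$. This schedule forces the degrees-of-freedom correction to grow only linearly, $\mhat{\ell}\lesssim\mdot{\ell}+8\ell\RF^2/(c_0^2\Rop^2)$, and gives $\mdot{\ell}\lesssim\bigl[4U_0\RF^2/(c_0^4(1\wedge\Rop)^{2\ell})\bigr]^{1/\beta}r_1^{-2/\beta}$. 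Substituting into $\sum_\ell\mhat{\ell}\mhat{\ell+1}$ produces two pieces: $L\,C_L^{4/\beta}[\cdots]^{2/\beta}r_1^{-4/\beta}$ (this is the origin of $Q_L$) and $L^3\RF^4/\Rop^4$ (this is the origin of the residual $M\tfrac{\RF^2L^2}{\Rop^2}\sqrt{\log^3 n/n}$). The final balance is over the single scalar $r_1$, equating $\sqrt{(S_3/n)(C_Lr_1)^{2(1-1/2\alpha)}}$ with $\sqrt{L^2 Q_L\, r_1^{-4/\beta}/n}$, solved by $C_Lr_1=(\sum_\ell\mell/L)^{-1/(4/\beta+2(1-1/2\alpha))}$. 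With a uniform $\tilde r$ you would instead get $r_\ell^2/\tilde r^2\sim\ell^2\RF^2\Rop^{2(\ell-1)}$ rather than $\sim\ell$, a worse residual exponent in $L$, and a different $Q_L$.
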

If we omit $L$ and $\log(n)$ terms for simplicity of presentation, then the bound can be written as {%\small
$$
%\textstyle 
\tilde{O}\left(\sqrt{\frac{ (\sum_{\ell=1}^L \mell)^{\frac{4/\beta}{4/\beta + 2(1-1/2\alpha)}}}{n} } + 
\left(\frac{\sum_{\ell=1}^L m_\ell}{n}\right)^{\frac{2\alpha}{2\alpha+1}} \right),
$$}
where the $\tilde{O}(\cdot)$ symbol hides the poly-log order. 
This is tighter than that of Corollary \ref{Cor:LowRankWeightBound}.
% in a sense that the dependency to the width is {\it sublinear} order in the square root term.
We can see that as $\beta$ and $\alpha$ get large, the bound becomes tighter.
Actually, by taking the limit of $\alpha,\beta \to \infty$, then the bound goes to $L^2 \sqrt{\frac{\log(n)}{n}} + L \frac{\sum_{\ell=1}^L \mell}{n}$. %which has only linear dependency on the width.
Moreover, the term dependent on the width is $O(1/n)$ with respect to the sample size $n$ which is faster than the rate $O(\sqrt{\frac{\sum_{\ell=1}^L \mell}{n}})$ which was presented in Corollary \ref{Cor:LowRankWeightBound}.
Hence, the low rank property of both the covariance matrix and the weight matrix helps to obtain better  generalization.
Although the bound contains the exponential term $\Rop^L$, we can give a refined version that does not contain the exponential term by assuming interlayer cushion \citep{pmlr-v80-arora18b}. See Appendix \ref{sec:LowRankCovWeightBound_Lip} for the refined version.

There appears $\exp(\frac{1}{4}(2\sqrt{L} -1))$ which is exponentially dependent on $L$.
However, this term is moderately small for realistic settings of the depth $L$. Actually, it is 7.27 for $L=20$ and 26.7 for $L=50$
(we can replace this term in exchange for larger polynomial dependency on $L$).
The bound is not optimized with respect to the dependency on the depth $L$. 
In particular, the term $L^2 \sqrt{\log(n)/n}$ could be an artifact of the proof technique and 
the $L^2$ term would be improved.

Finally, we compare our bound with the following norm based bounds; $O\left(\frac{\Rop^L }{\sqrt{n}}  \left(L \frac{R_{2\to 1}^{2/3}}{R_{2}^{2/3}} \right)^{3/2}\right)$ by \cite{bartlett2017spectrally} and 
$O\left(\frac{1}{\sqrt{n}} \left(1 + L \kappa^{\frac{4}{3}} R_{2\to 1}^{2/3} + \kappa^{\frac{2}{3}} L R_{1\to1}^{2/3}\right)^{3/2}\right)$
by \cite{WeiMa:DataDependent:NeurIPS2019}.
Since our bound and their bounds are derived from different conditions,
we cannot tell which is better.
Here, we consider a special case where $\mell=m~(\forall \ell)$ and $\Well{\ell} = \frac{1}{m} \boldone \boldone^\top \in \Real^{m\times m}$
which is an extreme case of low rank settings (note that $\Well{\ell}$ has rank 1).
Then,  $\Rop = 1$, $\RF = 1$, $R_{2 \to 1} = \sqrt{m}$ and $R_{1 \to 1} = m$, and thus their bounds are $O(\sqrt{m/n})$ and $O(\sqrt{(m+m^2)/n})$
respectively. 
However, $\beta$ and $\alpha$ in our bound $\sqrt{m^{\frac{4/\beta}{4/\beta + 2(1-1/2\alpha)}}/n} = \sqrt{m^{\frac{1}{1 + \beta(1-1/2\alpha)/2}}/n}$ can be arbitrary large in this situation, 
so that our bound has much milder dependency on the width $m$.
On the other hand, if the weight matrix has small norm and has no spectral decay (corresponding to small $\alpha$ and $\beta$), 
then our bound can be looser than theirs.
Combining compression based bounds and norm based bounds would be interesting future work.

\section{Conclusion}
In this paper, we derived a compression based error bound for non-compressed network.
The bound is general and it can be adopted to several compression based bound derived so far.
The main difficulty lies in evaluating the population $L_2$-norm between the original network and the compressed network, 
but it can be overcome by utilizing the data dependent bound by the local Rademacher complexity technique.
We have applied the derived bound to a situation where low rank properties of the weight matrices and the covariance matrices are assumed.
The obtained bound gives much better dependency on the parameter size than ever obtained compression based ones. 
%\section{Acknowledgment}
\vspace{-0.1cm}
\paragraph{Acknowledgment}
We thank the anonymous reviewers for their valuable comments. TS was partially supported by MEXT Kakenhi (15H05707, 18K19793 and 18H03201), Japan Digital Design, and JST-CREST, Japan .
\vspace{-0.1cm}

\section{Proof outline of Theorem \ref{thm:GeneralCompBound}}
Remember that for a trained network $\fhat$, $\ghat$ is its compressed version that is included in a submodel $\calGhat$.
%Now, we assume that $\|\fhat - \ghat\|_n \leq \rhat$.
First, we decompose the generalization gap as 
\begin{align}
\Psi(\fhat) - \hat{\Psi}(\fhat)
& = 
%\Psi(\fhat) - \Psi(\ghat) + \Psi(\ghat) - 
%\hat{\Psi}(\ghat) + \hat{\Psi}(\ghat) - \hat{\Psi}(\fhat) \notag \\
%& \leq 
[(\Psi(\fhat) - \Psi(\ghat)) - (\hat{\Psi}(\fhat) - \hat{\Psi}(\ghat))]
+(\Psi(\ghat) - \hat{\Psi}(\ghat)).
% \notag \\
%& \leq 
%[\Psi(\fhat) - \Psi(\ghat) - (\hat{\Psi}(\fhat) - \hat{\Psi}(\ghat))]
%+2  \bar{R}_n(\calGhat) + \sqrt{M \frac{ 2 t }{n}}
%\label{eq:TermByDifferenceRad}
\end{align}
The second block in the right hand side is easy to bound, i.e., by applying the standard Rademacher complexity bound 
(Theorem 3.1 of \cite{mohri2012foundations}) with 
the contraction inequality (Theorem 11.6 of \cite{boucheron2013concentration} or Theorem 4.12 of \cite{Book:Ledoux+Talagrand:1991}), it holds that
$$
\textstyle \Psi(\ghat) - \hat{\Psi}(\ghat) \leq 2  \bar{R}_n(\calGhat) + \sqrt{M \frac{ 2 t }{n}},
$$
with probability $1 - e^{-t}$.
Since $\calGhat$ is a small model, this bound could be much smaller than a naive VC-dimension bound.
The first block ``bridges'' the generalization gap of $\ghat$ to that of $\fhat$, but 
bounding the first block is more involved. 
We make use of the local Rademacher complexity to bound the term.
Suppose that the event in which $\|\fhat - \ghat\|_{\LPi} \leq \dot{r}$ holds has high probability (which should be proven later), then it is also expected that $\psi(y,f) - \psi(y,g)$ has small $\LPi$-norm. This is true because of the Lipschitz continuity assumption (Assumption \ref{ass:LipschitzCont}).
%Then, for small $\dot{r}$, the deviation of $\psi(y,f) - \psi(y,g)$ also gets small.
Actually, the Talagrand's concentration inequality yields that
\begin{align*}
\Psi(\fhat) - \Psi(\ghat) - (\hat{\Psi}(\fhat) - \hat{\Psi}(\ghat))
&\textstyle \leq 
C \left[ \Phi(\dot{r}) + \sqrt{\dot{r} \frac{t}{n}} + \frac{1 + t M }{n}\right],
%& \leq 
%C' \left[ \Phi(\sqrt{2(\hat{r}^2 + r_*^2)}) + \hat{r} \sqrt{\frac{ t}{n}} +  r_*^2 + \frac{1 + t M }{n}\right]
\end{align*}
for $\Phi(r) := \bar{R}_{n}(\{\psi(f) - \psi(g) \mid f\in \calFhat, \ghat \in \calGhat, \|f - g\|_{\LPi} \leq \dot{r} \})$
 with probability $1 - e^{-t}$.
Since $\Phi(\dot{r})$ requires the restriction $\|f - g\|_{\LPi} \leq \dot{r}$, this quantity is much smaller than the standard Rademacher complexity $\bar{R}_n(\psi(\calFhat) - \psi(\calGhat))$, which yields fast convergence rate.

Finally, we should bound the probability of $\|\fhat - \ghat\|_{\LPi} \leq \dot{r}$. This can be bounded by utilizing the ratio type empirical process.
Actually, we can show that 
$$
P\left(\sup_{h \in \calFhat - \calGhat} \frac{(P- P_n)(h^2) }{P h^2 +  r_*^2}
\geq  \frac{1}{2} \right) \leq e^{-t},
$$
for $r_*$ defined in \Eqref{eq:rstardef}, where $P_n f := \frac{1}{n}\sum_{i=1}^n f(z_i)$ and $P f = \EE[f(Z)]$.
This yields that $\|\fhat - \ghat\|_{\LPi}^2 -  \|\fhat - \ghat\|_n^2\leq \frac{1}{2}(\|\fhat - \ghat\|_{\LPi}^2 +r_*^2)$ with probability $1-e^{-t}$ and equivalently $\|\fhat - \ghat\|_{\LPi}^2  \leq 2(\|\fhat - \ghat\|_{n}^2 + r_*^2)$.
Since $\|\fhat - \ghat\|_{n}^2 \leq \hat{r}^2$ a.s., we have  $\|\fhat - \ghat\|_{\LPi}^2 \leq 2(\hat{r}^2 + r_*^2) = \dot{r}^2$.

We can show that $\Phi(\dot{r}) \leq \dot{R}_{\dot{r}}(\psi(\calFhat) - \psi(\calGhat))$ by using the Lipschitz continuity assumption (Assumption \ref{ass:LipschitzCont}). Then, we obtain the assertion.
%\section*{Acknowledgment}
%
%TS was partially supported by JSPS Kakenhi (15H05707, 18H03201, and 18K19793), Japan Digital Design and JST-CREST.

%\bibliographystyle{abbrvnat}
\bibliography{main} 

\begin{thebibliography}{63}
\providecommand{\natexlab}[1]{#1}
\providecommand{\url}[1]{\texttt{#1}}
\expandafter\ifx\csname urlstyle\endcsname\relax
  \providecommand{\doi}[1]{doi: #1}\else
  \providecommand{\doi}{doi: \begingroup \urlstyle{rm}\Url}\fi

\bibitem[Arora et~al.(2018)Arora, Ge, Neyshabur, and Zhang]{pmlr-v80-arora18b}
S.~Arora, R.~Ge, B.~Neyshabur, and Y.~Zhang.
\newblock Stronger generalization bounds for deep nets via a compression
  approach.
\newblock In J.~Dy and A.~Krause (eds.), \emph{Proceedings of the 35th
  International Conference on Machine Learning}, volume~80 of \emph{Proceedings
  of Machine Learning Research}, pp.\  254--263, Stockholmsmassan, Stockholm
  Sweden, 10--15 Jul 2018. PMLR.

\bibitem[{Arora} et~al.(2019){Arora}, {Du}, {Hu}, {Li}, and
  {Wang}]{2019arXiv190108584A}
S.~{Arora}, S.~S. {Du}, W.~{Hu}, Z.~{Li}, and R.~{Wang}.
\newblock {Fine-Grained Analysis of Optimization and Generalization for
  Overparameterized Two-Layer Neural Networks}.
\newblock \emph{arXiv e-prints}, art. arXiv:1901.08584, Jan 2019.

\bibitem[Bach(2017)]{bach2017equivalence}
F.~Bach.
\newblock On the equivalence between kernel quadrature rules and random feature
  expansions.
\newblock \emph{Journal of Machine Learning Research}, 18\penalty0
  (21):\penalty0 1--38, 2017.

\bibitem[Bartlett et~al.(2005)Bartlett, Bousquet, and
  Mendelson]{LocalRademacher}
P.~Bartlett, O.~Bousquet, and S.~Mendelson.
\newblock Local {R}ademacher complexities.
\newblock \emph{The Annals of Statistics}, 33:\penalty0 1487--1537, 2005.

\bibitem[{Bartlett}(1998)]{Bartlett:SizeOfWeights:1998}
P.~L. {Bartlett}.
\newblock The sample complexity of pattern classification with neural networks:
  the size of the weights is more important than the size of the network.
\newblock \emph{IEEE Transactions on Information Theory}, 44\penalty0
  (2):\penalty0 525--536, March 1998.

\bibitem[Bartlett et~al.(2017{\natexlab{a}})Bartlett, Foster, and
  Telgarsky]{bartlett2017spectrally:arXiv}
P.~Bartlett, D.~J. Foster, and M.~Telgarsky.
\newblock Spectrally-normalized margin bounds for neural networks.
\newblock \emph{arXiv preprint arXiv:1706.08498}, 2017{\natexlab{a}}.

\bibitem[Bartlett et~al.(2017{\natexlab{b}})Bartlett, Foster, and
  Telgarsky]{bartlett2017spectrally}
P.~L. Bartlett, D.~J. Foster, and M.~J. Telgarsky.
\newblock Spectrally-normalized margin bounds for neural networks.
\newblock In \emph{Advances in Neural Information Processing Systems}, pp.\
  6241--6250, 2017{\natexlab{b}}.

\bibitem[Baykal et~al.(2019)Baykal, Liebenwein, Gilitschenski, Feldman, and
  Rus]{baykal2018datadependent}
C.~Baykal, L.~Liebenwein, I.~Gilitschenski, D.~Feldman, and D.~Rus.
\newblock Data-dependent coresets for compressing neural networks with
  applications to generalization bounds.
\newblock In \emph{International Conference on Learning Representations}, 2019.

\bibitem[Bianchini \& Scarselli(2014)Bianchini and
  Scarselli]{bianchini2014complexity}
M.~Bianchini and F.~Scarselli.
\newblock On the complexity of neural network classifiers: A comparison between
  shallow and deep architectures.
\newblock \emph{IEEE transactions on neural networks and learning systems},
  25\penalty0 (8):\penalty0 1553--1565, 2014.

\bibitem[Boucheron et~al.(2013)Boucheron, Lugosi, and
  Massart]{boucheron2013concentration}
S.~Boucheron, G.~Lugosi, and P.~Massart.
\newblock \emph{Concentration Inequalities: A Nonasymptotic Theory of
  Independence}.
\newblock OUP Oxford, 2013.

\bibitem[Bousquet(2002)]{BousquetBenett}
O.~Bousquet.
\newblock A {B}ennett concentration inequality and its application to suprema
  of empirical process.
\newblock \emph{C. R. Acad. Sci. Paris Ser. I Math.}, 334:\penalty0 495--500,
  2002.

\bibitem[Cohen \& Shashua(2016)Cohen and Shashua]{ICML:Cohen+Shashua:2016}
N.~Cohen and A.~Shashua.
\newblock Convolutional rectifier networks as generalized tensor
  decompositions.
\newblock In \emph{Proceedings of the 33th International Conference on Machine
  Learning}, volume~48 of \emph{JMLR Workshop and Conference Proceedings}, pp.\
   955--963, 2016.

\bibitem[Cohen et~al.(2016)Cohen, Sharir, and Shashua]{cohen2016expressive}
N.~Cohen, O.~Sharir, and A.~Shashua.
\newblock On the expressive power of deep learning: A tensor analysis.
\newblock In \emph{The 29th Annual Conference on Learning Theory}, pp.\
  698--728, 2016.

\bibitem[Cybenko(1989)]{cybenko1989approximation}
G.~Cybenko.
\newblock Approximation by superpositions of a sigmoidal function.
\newblock \emph{Mathematics of Control, Signals, and Systems (MCSS)},
  2\penalty0 (4):\penalty0 303--314, 1989.

\bibitem[Denil et~al.(2013)Denil, Shakibi, Dinh, Ranzato, and
  de~Freitas]{NIPS2013_5025}
M.~Denil, B.~Shakibi, L.~Dinh, M.~A. Ranzato, and N.~de~Freitas.
\newblock Predicting parameters in deep learning.
\newblock In C.~J.~C. Burges, L.~Bottou, M.~Welling, Z.~Ghahramani, and K.~Q.
  Weinberger (eds.), \emph{Advances in Neural Information Processing Systems
  26}, pp.\  2148--2156. Curran Associates, Inc., 2013.

\bibitem[Denton et~al.(2014)Denton, Zaremba, Bruna, LeCun, and
  Fergus]{NIPS2014_5544}
E.~L. Denton, W.~Zaremba, J.~Bruna, Y.~LeCun, and R.~Fergus.
\newblock Exploiting linear structure within convolutional networks for
  efficient evaluation.
\newblock In Z.~Ghahramani, M.~Welling, C.~Cortes, N.~D. Lawrence, and K.~Q.
  Weinberger (eds.), \emph{Advances in Neural Information Processing Systems
  27}, pp.\  1269--1277. Curran Associates, Inc., 2014.

\bibitem[{Devlin} et~al.(2018){Devlin}, {Chang}, {Lee}, and
  {Toutanova}]{2018arXiv181004805D}
J.~{Devlin}, M.-W. {Chang}, K.~{Lee}, and K.~{Toutanova}.
\newblock {BERT: Pre-training of Deep Bidirectional Transformers for Language
  Understanding}.
\newblock \emph{arXiv e-prints}, art. arXiv:1810.04805, Oct 2018.

\bibitem[Dziugaite \& Roy(2017)Dziugaite and Roy]{DBLP:conf/uai/DziugaiteR17}
G.~K. Dziugaite and D.~M. Roy.
\newblock Computing nonvacuous generalization bounds for deep (stochastic)
  neural networks with many more parameters than training data.
\newblock In \emph{Proceedings of the Thirty-Third Conference on Uncertainty in
  Artificial Intelligence}, 2017.

\bibitem[Gin{\'e} \& Koltchinskii(2006)Gin{\'e} and
  Koltchinskii]{gine2006concentration}
E.~Gin{\'e} and V.~Koltchinskii.
\newblock Concentration inequalities and asymptotic results for ratio type
  empirical processes.
\newblock \emph{The Annals of Probability}, 34\penalty0 (3):\penalty0
  1143--1216, 2006.

\bibitem[Golowich et~al.(2018)Golowich, Rakhlin, and
  Shamir]{pmlr-v75-golowich18a}
N.~Golowich, A.~Rakhlin, and O.~Shamir.
\newblock Size-independent sample complexity of neural networks.
\newblock In S.~Bubeck, V.~Perchet, and P.~Rigollet (eds.), \emph{Proceedings
  of the 31st Conference On Learning Theory}, volume~75 of \emph{Proceedings of
  Machine Learning Research}, pp.\  297--299. PMLR, 06--09 Jul 2018.

\bibitem[Gunasekar et~al.(2018)Gunasekar, Lee, Soudry, and
  Srebro]{gunasekar2018implicit}
S.~Gunasekar, J.~D. Lee, D.~Soudry, and N.~Srebro.
\newblock Implicit bias of gradient descent on linear convolutional networks.
\newblock In \emph{Advances in Neural Information Processing Systems}, pp.\
  9482--9491, 2018.

\bibitem[Hardt et~al.(2016)Hardt, Recht, and Singer]{pmlr-v48-hardt16}
M.~Hardt, B.~Recht, and Y.~Singer.
\newblock Train faster, generalize better: Stability of stochastic gradient
  descent.
\newblock In M.~F. Balcan and K.~Q. Weinberger (eds.), \emph{Proceedings of The
  33rd International Conference on Machine Learning}, volume~48 of
  \emph{Proceedings of Machine Learning Research}, pp.\  1225--1234, New York,
  New York, USA, 20--22 Jun 2016. PMLR.

\bibitem[Harvey et~al.(2017)Harvey, Liaw, and Mehrabian]{pmlr-v65-harvey17a}
N.~Harvey, C.~Liaw, and A.~Mehrabian.
\newblock Nearly-tight {VC}-dimension bounds for piecewise linear neural
  networks.
\newblock In S.~Kale and O.~Shamir (eds.), \emph{Proceedings of the 2017
  Conference on Learning Theory}, volume~65 of \emph{Proceedings of Machine
  Learning Research}, pp.\  1064--1068, Amsterdam, Netherlands, 07--10 Jul
  2017. PMLR.

\bibitem[Hinton \& Van~Camp(1993)Hinton and Van~Camp]{hinton1993keeping}
G.~Hinton and D.~Van~Camp.
\newblock Keeping neural networks simple by minimizing the description length
  of the weights.
\newblock In \emph{in Proc. of the 6th Ann. ACM Conf. on Computational Learning
  Theory}. Citeseer, 1993.

\bibitem[Hochreiter \& Schmidhuber(1997)Hochreiter and
  Schmidhuber]{hochreiter1997flat}
S.~Hochreiter and J.~Schmidhuber.
\newblock Flat minima.
\newblock \emph{Neural Computation}, 9\penalty0 (1):\penalty0 1--42, 1997.

\bibitem[Hornik(1991)]{hornik1991approximation}
K.~Hornik.
\newblock Approximation capabilities of multilayer feedforward networks.
\newblock \emph{Neural Networks}, 4\penalty0 (2):\penalty0 251--257, 1991.

\bibitem[Ioffe \& Szegedy(2015)Ioffe and Szegedy]{pmlr-v37-ioffe15}
S.~Ioffe and C.~Szegedy.
\newblock Batch normalization: Accelerating deep network training by reducing
  internal covariate shift.
\newblock In F.~Bach and D.~Blei (eds.), \emph{Proceedings of the 32nd
  International Conference on Machine Learning}, volume~37 of \emph{Proceedings
  of Machine Learning Research}, pp.\  448--456, Lille, France, 07--09 Jul
  2015. PMLR.

\bibitem[Ji \& Telgarsky(2019)Ji and Telgarsky]{ji2018gradient}
Z.~Ji and M.~Telgarsky.
\newblock Gradient descent aligns the layers of deep linear networks.
\newblock In \emph{International Conference on Learning Representations}, 2019.

\bibitem[Koltchinskii(2006)]{Koltchinskii}
V.~Koltchinskii.
\newblock Local {R}ademacher complexities and oracle inequalities in risk
  minimization.
\newblock \emph{The Annals of Statistics}, 34:\penalty0 2593--2656, 2006.

\bibitem[Krizhevsky et~al.(2012)Krizhevsky, Sutskever, and
  Hinton]{krizhevsky2012imagenet}
A.~Krizhevsky, I.~Sutskever, and G.~E. Hinton.
\newblock Imagenet classification with deep convolutional neural networks.
\newblock In \emph{Advances in neural information processing systems}, pp.\
  1097--1105, 2012.

\bibitem[Krogh \& Hertz(1992)Krogh and Hertz]{krogh1992simple}
A.~Krogh and J.~A. Hertz.
\newblock A simple weight decay can improve generalization.
\newblock In \emph{Advances in neural information processing systems}, pp.\
  950--957, 1992.

\bibitem[Langford \& Caruana(2002)Langford and Caruana]{NIPS2001_1968}
J.~Langford and R.~Caruana.
\newblock (not) bounding the true error.
\newblock In T.~G. Dietterich, S.~Becker, and Z.~Ghahramani (eds.),
  \emph{Advances in Neural Information Processing Systems 14}, pp.\  809--816.
  MIT Press, 2002.

\bibitem[Ledoux \& Talagrand(1991)Ledoux and
  Talagrand]{Book:Ledoux+Talagrand:1991}
M.~Ledoux and M.~Talagrand.
\newblock \emph{Probability in Banach Spaces. Isoperimetry and Processes}.
\newblock Springer, New York, 1991.
\newblock MR1102015.

\bibitem[Li et~al.(2018)Li, Lu, Wang, Haupt, and Zhao]{li2018tighter}
X.~Li, J.~Lu, Z.~Wang, J.~Haupt, and T.~Zhao.
\newblock On tighter generalization bound for deep neural networks: Cnns,
  resnets, and beyond.
\newblock \emph{arXiv preprint arXiv:1806.05159}, 2018.

\bibitem[Littlestone \& Warmuth(1986)Littlestone and
  Warmuth]{Littlestone86relatingdata}
N.~Littlestone and M.~K. Warmuth.
\newblock Relating data compression and learnability.
\newblock Technical report, University of California, Santa Cruz, 1986.

\bibitem[Martin \& Mahoney(2018)Martin and Mahoney]{martin2018implicit}
C.~H. Martin and M.~W. Mahoney.
\newblock Implicit self-regularization in deep neural networks: Evidence from
  random matrix theory and implications for learning.
\newblock \emph{arXiv preprint arXiv:1810.01075}, 2018.

\bibitem[Mendelson(2002)]{IEEEIT:Mendelson:2002}
S.~Mendelson.
\newblock Improving the sample complexity using global data.
\newblock \emph{IEEE Transactions on Information Theory}, 48:\penalty0
  1977--1991, 2002.

\bibitem[Mohri et~al.(2012)Mohri, Rostamizadeh, and
  Talwalkar]{mohri2012foundations}
M.~Mohri, A.~Rostamizadeh, and A.~Talwalkar.
\newblock Foundations of machine learning.
\newblock 2012.

\bibitem[Montufar et~al.(2014)Montufar, Pascanu, Cho, and
  Bengio]{NIPS2014_5422}
G.~F. Montufar, R.~Pascanu, K.~Cho, and Y.~Bengio.
\newblock On the number of linear regions of deep neural networks.
\newblock In Z.~Ghahramani, M.~Welling, C.~Cortes, N.~Lawrence, and
  K.~Weinberger (eds.), \emph{Advances in Neural Information Processing Systems
  27}, pp.\  2924--2932. Curran Associates, Inc., 2014.

\bibitem[Nagarajan \& Kolter(2019)Nagarajan and
  Kolter]{nagarajan2018deterministic}
V.~Nagarajan and Z.~Kolter.
\newblock Deterministic {PAC}-{B}ayesian generalization bounds for deep
  networks via generalizing noise-resilience.
\newblock In \emph{International Conference on Learning Representations
  (ICLR2019)}, 2019.

\bibitem[Neyshabur et~al.(2015)Neyshabur, Tomioka, and
  Srebro]{COLT:Neyshabur+Tomioka+Srebro:2015}
B.~Neyshabur, R.~Tomioka, and N.~Srebro.
\newblock Norm-based capacity control in neural networks.
\newblock In \emph{Proceedings of The 28th Conference on Learning Theory}, pp.\
   1376--1401, Montreal Quebec, 2015.

\bibitem[Neyshabur et~al.(2017)Neyshabur, Bhojanapalli, McAllester, and
  Srebro]{neyshabur2017pac}
B.~Neyshabur, S.~Bhojanapalli, D.~McAllester, and N.~Srebro.
\newblock A {PAC}-{B}ayesian approach to spectrally-normalized margin bounds
  for neural networks.
\newblock \emph{arXiv preprint arXiv:1707.09564}, 2017.

\bibitem[Neyshabur et~al.(2019)Neyshabur, Li, Bhojanapalli, LeCun, and
  Srebro]{neyshabur2018the}
B.~Neyshabur, Z.~Li, S.~Bhojanapalli, Y.~LeCun, and N.~Srebro.
\newblock The role of over-parametrization in generalization of neural
  networks.
\newblock In \emph{International Conference on Learning Representations}, 2019.

\bibitem[Poole et~al.(2016)Poole, Lahiri, Raghu, Sohl-Dickstein, and
  Ganguli]{NIPS:Poole+etal:2016}
B.~Poole, S.~Lahiri, M.~Raghu, J.~Sohl-Dickstein, and S.~Ganguli.
\newblock Exponential expressivity in deep neural networks through transient
  chaos.
\newblock In D.~D. Lee, M.~Sugiyama, U.~V. Luxburg, I.~Guyon, and R.~Garnett
  (eds.), \emph{Advances in Neural Information Processing Systems 29}, pp.\
  3360--3368. Curran Associates, Inc., 2016.

\bibitem[{Radford} et~al.(2015){Radford}, {Metz}, and
  {Chintala}]{2015arXiv151106434R}
A.~{Radford}, L.~{Metz}, and S.~{Chintala}.
\newblock {Unsupervised Representation Learning with Deep Convolutional
  Generative Adversarial Networks}.
\newblock \emph{arXiv e-prints}, art. arXiv:1511.06434, Nov 2015.

\bibitem[{Schmidt-Hieber}(2019)]{2019:Schmidt-Hieber:AS}
J.~{Schmidt-Hieber}.
\newblock {Nonparametric regression using deep neural networks with ReLU
  activation function}.
\newblock \emph{The Annals of Statistics}, pp.\  to appear, 2019.

\bibitem[Sonoda \& Murata(2015)Sonoda and Murata]{sonoda2015neural}
S.~Sonoda and N.~Murata.
\newblock Neural network with unbounded activation functions is universal
  approximator.
\newblock \emph{Applied and Computational Harmonic Analysis}, 2015.

\bibitem[Srivastava et~al.(2014)Srivastava, Hinton, Krizhevsky, Sutskever, and
  Salakhutdinov]{srivastava2014dropout}
N.~Srivastava, G.~Hinton, A.~Krizhevsky, I.~Sutskever, and R.~Salakhutdinov.
\newblock Dropout: a simple way to prevent neural networks from overfitting.
\newblock \emph{The Journal of Machine Learning Research}, 15\penalty0
  (1):\penalty0 1929--1958, 2014.

\bibitem[Steinwart \& Christmann(2008)Steinwart and
  Christmann]{Book:Steinwart:2008}
I.~Steinwart and A.~Christmann.
\newblock \emph{Support Vector Machines}.
\newblock Springer, 2008.

\bibitem[Suzuki(2019)]{suzuki2018adaptivity}
T.~Suzuki.
\newblock Adaptivity of deep {ReLU} network for learning in {Besov} and mixed
  smooth {Besov} spaces: optimal rate and curse of dimensionality.
\newblock In \emph{International Conference on Learning Representations
  (ICLR2019)}, 2019.

\bibitem[{Suzuki} et~al.(2018){Suzuki}, {Abe}, {Murata}, {Horiuchi}, {Ito},
  {Wachi}, {Hirai}, {Yukishima}, and {Nishimura}]{2018arXiv180808558S}
T.~{Suzuki}, H.~{Abe}, T.~{Murata}, S.~{Horiuchi}, K.~{Ito}, T.~{Wachi},
  S.~{Hirai}, M.~{Yukishima}, and T.~{Nishimura}.
\newblock {Spectral-Pruning: Compressing deep neural network via spectral
  analysis}.
\newblock \emph{arXiv e-prints}, art. arXiv:1808.08558, Aug 2018.

\bibitem[Talagrand(1996)]{Talagrand2}
M.~Talagrand.
\newblock New concentration inequalities in product spaces.
\newblock \emph{Inventiones Mathematicae}, 126:\penalty0 505--563, 1996.

\bibitem[Valle-Perez et~al.(2019)Valle-Perez, Camargo, and
  Louis]{valle-perez2018deep}
G.~Valle-Perez, C.~Q. Camargo, and A.~A. Louis.
\newblock Deep learning generalizes because the parameter-function map is
  biased towards simple functions.
\newblock In \emph{International Conference on Learning Representations}, 2019.

\bibitem[van~der Vaart \& Wellner(1996)van~der Vaart and
  Wellner]{Book:VanDerVaart:WeakConvergence}
A.~W. van~der Vaart and J.~A. Wellner.
\newblock \emph{Weak Convergence and Empirical Processes: With Applications to
  Statistics}.
\newblock Springer, New York, 1996.

\bibitem[Vapnik(1998)]{book:Vapnik:1998}
V.~N. Vapnik.
\newblock \emph{Statistical Learning Theory}.
\newblock Wiley, New York, 1998.

\bibitem[Verma et~al.(2018)Verma, Lamb, Beckham, Najafi, Mitliagkas, Courville,
  Lopez-Paz, and Bengio]{verma2018manifold}
V.~Verma, A.~Lamb, C.~Beckham, A.~Najafi, I.~Mitliagkas, A.~Courville,
  D.~Lopez-Paz, and Y.~Bengio.
\newblock Manifold mixup: Better representations by interpolating hidden
  states.
\newblock \emph{arXiv preprint arXiv:1806.05236}, 2018.

\bibitem[Wager et~al.(2013)Wager, Wang, and Liang]{NIPS2013_4882}
S.~Wager, S.~Wang, and P.~S. Liang.
\newblock Dropout training as adaptive regularization.
\newblock In C.~J.~C. Burges, L.~Bottou, M.~Welling, Z.~Ghahramani, and K.~Q.
  Weinberger (eds.), \emph{Advances in Neural Information Processing Systems
  26}, pp.\  351--359. Curran Associates, Inc., 2013.

\bibitem[Wainwright(2019)]{wainwright2019high}
M.~Wainwright.
\newblock \emph{High-Dimensional Statistics: A Non-Asymptotic Viewpoint}.
\newblock Cambridge Series in Statistical and Probabilistic Mathematics.
  Cambridge University Press, 2019.

\bibitem[Wei \& Ma(2019)Wei and Ma]{WeiMa:DataDependent:NeurIPS2019}
C.~Wei and T.~Ma.
\newblock Data-dependent sample complexity of deep neural networks via
  lipschitz augmentation.
\newblock In \emph{Advances in neural information processing systems}, pp.\  to
  appear, 2019.

\bibitem[Wu et~al.(2017)Wu, Zhu, et~al.]{wu2017towards}
L.~Wu, Z.~Zhu, et~al.
\newblock Towards understanding generalization of deep learning: Perspective of
  loss landscapes.
\newblock \emph{arXiv preprint arXiv:1706.10239}, 2017.

\bibitem[Zhang et~al.(2016)Zhang, Bengio, Hardt, Recht, and
  Vinyals]{zhang2016understanding}
C.~Zhang, S.~Bengio, M.~Hardt, B.~Recht, and O.~Vinyals.
\newblock Understanding deep learning requires rethinking generalization.
\newblock \emph{arXiv preprint arXiv:1611.03530}, 2016.

\bibitem[Zhang et~al.(2018)Zhang, Cisse, Dauphin, and
  Lopez-Paz]{zhang2018mixup}
H.~Zhang, M.~Cisse, Y.~N. Dauphin, and D.~Lopez-Paz.
\newblock mixup: Beyond empirical risk minimization.
\newblock In \emph{International Conference on Learning Representations}, 2018.

\bibitem[Zhou et~al.(2019)Zhou, Veitch, Austern, Adams, and
  Orbanz]{zhou2018nonvacuous}
W.~Zhou, V.~Veitch, M.~Austern, R.~P. Adams, and P.~Orbanz.
\newblock Non-vacuous generalization bounds at the imagenet scale: a
  {PAC}-{B}ayesian compression approach.
\newblock In \emph{International Conference on Learning Representations
  (ICLR2019)}, 2019.

\end{thebibliography}
\bibliographystyle{icml2016_mod}%abbrvnat}%icml2016_mod}

%~\\

%\pagebreak
\appendix
%!TEX root = iclr2020_conference.tex

\section*{{\Large Notation lists}}
Since we use plenty of notations, we give the notation list in Table \ref{tab:notationlist}.

\begin{table}[htb]
\centering
\caption{Notation list}
\label{tab:notationlist}
\begin{tabular}{|l|p{9.5cm}|}
\hline
notation & definition \\
\hline
$n$ & sample size \\
$z_i = (x_i,y_i)$ & $i$-th observation ($x_i$: input, $y_i$: output) \\
$D_n = (z_i)_{i=1}^n$ & training data \\
$\psi(y,f(x))$ & loss function \\
$M$ & $L_\infty$-norm bound of models \\
$\Bx$ & norm bound of input \\
$\|\cdot\|_n$ & empirical $L_2$-norm ($\|f\|_n := \sqrt{\sum_{i=1}^n f(z_i)^2/n}$) \\
$\|\cdot\|_{\LPi}$ & population $L_2$-norm ($\|f\|_{\LPi}:=\sqrt{\EE_{Z\sim P}[f(Z)^2]}$) \\
$\Psihat(f)$ & training error (empirical risk) \\
$\Psi(f)$ & generalization error (expected risk) \\
$(\epsilon_i)_{i=1}^n$ & Rademacher random variable \\
$\hat{R}_n(\calF')$ & conditional Rademacher complexity \\
$\bar{R}_n(\calF')$ & Rademacher complexity \\
$\dot{R}_r(\calF')$ & local Rademacher complexity \\
$\hat{r}$ & upper bound of $\|\fhat - \ghat\|_n$ \\
$r_*$ & fixed point of the local Rademacher complexity (\Eqref{eq:rstardef}) \\
$\dot{r}$ & $\sqrt{2(\hat{r}^2 + r_*^2)}$\\
$L$ & depth of networks \\
$\Well{\ell}$ & weight matrix of the $\ell$-th layer \\
$\Covhatell{\ell}$ & covariance matrix of the $\ell$-th layer \\
$\Rop$ & operator norm bound of $\Well{\ell}$ \\
$\RF$ & Frobenius norm bound of $\Well{\ell}$ \\
$\vecm=(\melle{1},\dots,\melle{L})$ &  list of widths of networks \\
$\vecmhat = (\melle{1},\mhat{2},\dots,\mhat{L})$ & list of widths of compressed networks  \\
$\vecs = (s_1,\dots,s_L)$ & list of ranks of the weight matrices of compressed networks \\
$\calF = \NN(\vecm, \Rop,\RF)$ & the whole set of networks with width $\vecm$ \\
$\calFhat$ & set of trained networks \\
$\calGhat$ & set of compressed networks \\
$\fhat \in \calFhat$ & trained network \\
$\ghat \in \calGhat$ & compressed network \\
$\mudotell{\ell}_j$ & an upper bound of the $j$-th largest eigenvalue of the covariance matrix in the $\ell$-th layer of $\fhat$ \\
$\alpha$ & decreasing rate of the eigenvalues of $\Well{\ell}$\\
$\beta$ & decreasing rate of the eigenvalues of $\Covhatell{\ell}$\\
\hline
\end{tabular}
\end{table}

\section*{{\Large Appendix}}
In the appendix, we give the proofs of the main text.
We use the following notation throughout the appendix:
$$
P_n f := \frac{1}{n}\sum_{i=1}^n f(z_i),~~P f = \EE[f(Z)].
$$

To evaluate it, the {\it covering number} is useful \citep{Book:VanDerVaart:WeakConvergence}.
\begin{Definition}[Covering number]
For a metric space $\tilde{\calF}$ equipped with a metric $\tilde{d}$, the $\epsilon$-covering number $\calN(\tilde{\calF},\tilde{d},\epsilon)$
is defined as the minimum number of balls with radius $\epsilon$ (measured by the metric $\tilde{d}$) to cover the metric space $\tilde{\calF}$. 
\end{Definition}

Hereafter, $C$ denotes a constant which will be dependent on the context.
We let $a \wedge b := \min\{a,b\}$ and $a\vee b := \max\{a,b\}$ for $a,b \in \Real$.

\section{Proof of Theorem \ref{thm:GeneralCompBound}}
\label{sec:ProofMain}
%For $\hat{\gamma}_n = \hat{\gamma}_n(D_n) := \sup\{\|f - g\|_{\LPn}\mid \|f - g\|_{\LPi} \leq r, f\in \calFhat, g \in \calGhat\}$, we define
%\begin{align*}
%\Phi(r) := \min\Bigg\{ &
% \dot{R}_{r}(\psi(\calFhat) - \psi(\calGhat)) \sqrt{\log(n)} \log(2 n M), \\
%&  
%\EE\left[
%\int_{1/n}^{\hat{\gamma}_n} \sqrt{\frac{\log(\calN( \calFhat,\|\cdot\|_{\LPn},\epsilon/2))}{n}} \dd \epsilon 
%+
%\int_{1/n}^{\hat{\gamma}_n} \sqrt{\frac{\log(\calN( \calGhat,\|\cdot\|_{\LPn},\epsilon/2))}{n}} \dd \epsilon
%\right]
%\Bigg\}.
%\end{align*}

Denote the local Rademacher complexity of $\{\psi(f) - \psi(g) \mid f\in \calFhat, \ghat \in \calGhat, \|f - g\|_{\LPi} \leq r\}$ by
$$
\Phi(r) := \bar{R}_{n}(\{\psi(f) - \psi(g) \mid f\in \calFhat, \ghat \in \calGhat, \|f - g\|_{\LPi} \leq r\}).
$$

Here, we restate Theorem \ref{thm:GeneralCompBound} in the following in more complete form.
Remember that $\fhat \in \calFhat$ and $\ghat \in \calGhat$ are the trained original network and the compressed network respectively.
\begin{Theorem}%[Data independent model $\calFhat$ and $\calGhat$]
\label{thm:GenCompBound_refined}
Suppose that the empirical $L_2$-distance between $\fhat$ and $\ghat$ is bounded by
$\|\fhat - \ghat\|_{\LPn} \leq \hat{r}^2$ for a fixed $\hat{r} > 0$ almost surely.
Let $\dot{r} := \sqrt{2(\hat{r}^2 + r_*^2)}$, 
then, under Assumptions \ref{ass:LipschitzCont}, \ref{ass:InputnormBound}, \ref{ass:LinftyBound}, there exists a universal constant $C>0$ such that 
%Suppose that $\calFhat$ and $\calGhat$ are independent of the data.
%Suppose that the empirical $L_2$-distance between $\fhat$ and $\ghat$ is bounded by
%$\|\fhat - \ghat\|_{\LPn} \leq \hat{r}^2$ for a fixed $\hat{r} > 0$ almost surely.
%Let $\dot{r} := \sqrt{2(\hat{r}^2 + r_*^2)}$, 
%then, under the assumptions \ref{..}, it holds that, for a universal constant $C>0$, 
$$
\Psi(\fhat) \leq \hat{\Psi}(\fhat) + 2 \bar{R}_n(\calGhat) +\sqrt{\frac{2Mt}{n} }
+
C \left[ %r_*^2 + \Phi(\dot{r}) +
%\dot{R}_{\dot{r}}(\calFhat - \calGhat) \sqrt{\log(n)}\log(2 n M)
\Phi(\dot{r})
+ \dot{r}\sqrt{\frac{t}{n}}
+ 
\frac{1 + tM }{n} \right]. %\int_{0}^{r^2} \sqrt{}
$$
with probability at least $1 - 3 e^{-t}$ for all $t \geq 1$. 
\end{Theorem}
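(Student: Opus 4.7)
The proof plan follows the outline already sketched in Section~4. I would begin with the decomposition
\[
\Psi(\fhat) - \hat{\Psi}(\fhat) = \bigl[(\Psi(\fhat) - \Psi(\ghat)) - (\hat{\Psi}(\fhat) - \hat{\Psi}(\ghat))\bigr] + \bigl(\Psi(\ghat) - \hat{\Psi}(\ghat)\bigr),
\]
which isolates a ``main term'' (depending only on the small class $\calGhat$) from a ``bridge term'' (comparing $\fhat$ and $\ghat$). The main term is routine: applying the standard Rademacher symmetrization bound to $\psi(\calGhat)$, invoking the $1$-Lipschitz contraction of $\psi$ (Assumption~\ref{ass:LipschitzCont}) and the $L_\infty$-bound $M$ (Assumption~\ref{ass:LinftyBound}), and finishing with McDiarmid / Talagrand concentration yields $\Psi(\ghat) - \hat{\Psi}(\ghat) \leq 2\bar{R}_n(\calGhat) + \sqrt{2Mt/n}$ with probability $1 - e^{-t}$.

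The bridge term is controlled via localization. If one can establish $\|\fhat - \ghat\|_{\LPi} \leq \dot{r}$ on an event of high probability, then, by the Lipschitz continuity of $\psi$, the function $\psi(\cdot,\fhat(\cdot)) - \psi(\cdot,\ghat(\cdot))$ lies in the localized class $\{\psi(f) - \psi(g) : f\in\calFhat,\, g\in\calGhat,\, \|f-g\|_{\LPi}\leq \dot r\}$, whose Rademacher complexity is $\Phi(\dot r)$. Applying Bousquet's form of Talagrand's inequality to this localized class (its variance is at most $\dot r^2$ and its envelope is at most $2M$) yields
\[
(\Psi(\fhat) - \Psi(\ghat)) - (\hat{\Psi}(\fhat) - \hat{\Psi}(\ghat)) \leq C\Bigl[\Phi(\dot r) + \dot r\sqrt{t/n} + (1+tM)/n\Bigr]
\]
with probability $1 - e^{-t}$, exactly matching the bias term in the statement.

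The nontrivial step, and the main obstacle, is establishing $\|\fhat-\ghat\|_{\LPi}\leq \dot r$ from the empirical bound $\|\fhat-\ghat\|_n \leq \hat r$. The plan is to prove the ratio-type empirical process inequality
\[
\Pr\!\left(\sup_{h \in \calFhat - \calGhat} \frac{(P - P_n)(h^2)}{Ph^2 + r_*^2} \geq \tfrac{1}{2}\right) \leq e^{-t},
\]
which, once established, immediately gives $\|\fhat-\ghat\|_{\LPi}^2 \leq 2(\|\fhat-\ghat\|_n^2 + r_*^2) \leq 2(\hat r^2 + r_*^2) = \dot r^2$. The strategy is a peeling argument over shells $\|h\|_{\LPi}\in[2^k r_*, 2^{k+1}r_*]$: on each shell one applies Bousquet's inequality to the squared class $\{h^2 : h \in \calFhat - \calGhat\}$, using the $L_\infty$-bound $\|h^2\|_\infty \leq 4M^2$ and the Lipschitz contraction of $x\mapsto x^2$ on $[-2M,2M]$ (which produces a factor of $O(M)$ in front of $\phi$). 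The three terms $8\phi(r)/r^2$, $M\sqrt{4t/(r^2 n)}$, $M^2\cdot 2t/(r^2 n)$ in the definition of $r_*$ in \eqref{eq:rstardef} are precisely the Rademacher, sub-Gaussian, and bounded contributions that emerge; the doubling condition $\phi(2r) \leq 2\phi(r)$ (equivalently, $\phi(r)/r^2$ is non-increasing in a suitable sense) makes the shell sums converge.

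Combining the three high-probability events (main-term deviation, bridge-term deviation, ratio-type bound) via a union bound produces the stated $1 - 3 e^{-t}$ guarantee. The calculational bookkeeping I expect to be most delicate is tracking constants through the quadratic-contraction step $h \mapsto h^2$ and through Bousquet's inequality to ensure the $\tfrac12$ threshold in the ratio bound is actually attained by the $r_*$ in \eqref{eq:rstardef}; everything else is standard once that fixed-point is in place.
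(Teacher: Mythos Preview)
Your proposal is correct and follows essentially the same route as the paper's own proof: the same decomposition, the same standard Rademacher bound for the main term, Talagrand/Bousquet on the localized class $\{\psi(f)-\psi(g):\|f-g\|_{\LPi}\le \dot r\}$ for the bridge term, and the same ratio-type empirical process bound (via peeling) to pass from $\|\fhat-\ghat\|_n\le\hat r$ to $\|\fhat-\ghat\|_{\LPi}\le\dot r$. The only cosmetic difference is that the paper cites the peeling result directly from \cite{Book:Steinwart:2008} (Theorem~7.7) rather than rederiving it, but your sketch of that derivation is the standard one.
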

%(To do) $\|\fhat - \ghat\|_\infty \leq r$ ($L_\infty$-norm version?)
Note that $\dot{R}(\psi(\calFhat) - \psi(\calGhat))$ in the statement of Theorem \ref{thm:GeneralCompBound} of the main body is replaced by refined quantity $\Phi(\dot{r})$ (we can show $\Phi(\dot{r}) \leq \dot{R}(\psi(\calFhat) - \psi(\calGhat))$ from the Lipschitz continuity of $\psi$). 

\begin{proof}[Proof of Theorems \ref{thm:GeneralCompBound} and \ref{thm:GenCompBound_refined}]
First, by the standard Rademacher complexity analysis, we have that  
\begin{align}
\Psi(\ghat) - \hat{\Psi}(\ghat) 
& = \frac{1}{n}\sum_{i=1}^n (\psi(z_i,\ghat(x_i)) -  \EE[\psi(Z,\ghat(X))]) \notag \\
& \leq \sup_{g \in \calGhat} \frac{1}{n}\sum_{i=1}^n (\psi(z_i,g(x_i)) -  \EE[\psi(Z,g(X))]) \notag \\
& \leq 2 \EE_{D_n,\epsilon}\left[ \sup_{g \in \calGhat} \frac{1}{n}\sum_{i=1}^n \epsilon_i \psi(z_i,g(x_i)) \right] 
+ \sqrt{M \frac{ 2 t }{n}}
\leq 2  \bar{R}_n(\calGhat) + \sqrt{M \frac{ 2 t }{n}} 
\label{eq:PsighatBound}
\end{align}
with probability $1 - e^{-t}$, where we used the Rademacher concentration inequality 
(Theorem 3.1 of \cite{mohri2012foundations}) in the third line and 
the contraction inequality (Theorem 11.6 of \cite{boucheron2013concentration} or Theorem 4.12 of \cite{Book:Ledoux+Talagrand:1991} and its proof) in the last line.
We let this event be $\calE_0(t)$.

Next, we observe that 
\begin{align}
\Psi(\fhat) - \hat{\Psi}(\fhat)
& = 
\Psi(\fhat) - \Psi(\ghat) + \Psi(\ghat) - 
\hat{\Psi}(\ghat) + \hat{\Psi}(\ghat) - \hat{\Psi}(\fhat) \notag \\
& =
(\Psi(\fhat) - \Psi(\ghat) - (\hat{\Psi}(\fhat) - \hat{\Psi}(\ghat)))
+\Psi(\ghat) - 
\hat{\Psi}(\ghat) \notag \\
& \leq 
[\Psi(\fhat) - \Psi(\ghat) - (\hat{\Psi}(\fhat) - \hat{\Psi}(\ghat))]
+2  \bar{R}_n(\calGhat) + \sqrt{M \frac{ 2 t }{n}}
\label{eq:TermByDifferenceRad}
\end{align}
where we used \Eqref{eq:PsighatBound} in the last line.
Here, it should be noticed that it is not a good strategy to bound the first term $\Psi(\fhat) - \Psi(\ghat) - (\hat{\Psi}(\fhat) - \hat{\Psi}(\ghat))$
by bounding $\Psi(\fhat) - \hat{\Psi}(\fhat)$ and $\Psi(\ghat) - \hat{\Psi}(\ghat)$ independently. 
Instead, we should bound them simultaneously to obtain tighter bound.
This can be accomplished by using the local Rademacher complexity technique.

Note that $\fhat$ and $\ghat$ are date dependent and we can only bound the empirical $L_2$-distance between them. 
On the other hand, the local Rademacher complexity is characterized by the population $L_2$-norm.
To bridge this gap, we need to 
%Since both of $\fhat$ and $\ghat$ are data dependent random variable, the contraction inequality \citep{Book:Ledoux+Talagrand:1991}  is not trivially applied.
%We overcome this difficulty as follows. 
%Let $\hat{\gamma}_n = \hat{\gamma}_n(D_n) := \sup\{\|\fhat - \ghat\|_{\LPn}\mid \|\fhat - \ghat\|_{\LPi} \leq r\}$, then 
%
%Next, we 
bound the population $\LPi$-distance $\|\fhat - \ghat\|_{\LPi}$ between $\fhat$ and $\ghat$ in terms of the empirical $L_2$-norm bound $\|\fhat - \ghat\|_n \leq \hat{r}$.
%(note that we only have an empirical $L_2$-norm bound $\|\fhat - \ghat\|_n \leq \hat{r}$ from the assumption). 
%$\dot{R}_{r}(\psi(\calFhat) - \psi(\calGhat))$. 
To do so, we also bound the local Rademacher complexity of $\calFhat - \calGhat$: $\dot{R}_{r}(\calFhat - \calGhat)$.
Suppose that there exists a function $\phi:[0,\infty) \to [0,\infty)$ such that 
the the following conditions are satisfied:
$$
 \dot{R}_{r}(\calFhat - \calGhat) \leq \phi(r)
$$
and 
$$
\phi(2 r) \leq 2 \phi(r).
$$
Note that \Eqref{eq:RdotCoveringNum} gives one example of $\phi(r)$.
Then, by the so-called {\it peeling device}, we can show that  for any $r > 0$, 
\begin{align*}
P\left( \sup_{h \in \calFhat - \calGhat} \frac{(P- P_n)(h^2) }{P h^2 + {r}^2}
\geq 8 \frac{\phi(r)}{r^2} + M \sqrt{\frac{4t}{r^2 n}} + M^2 \frac{2t}{r^2 n} \right)
\leq e^{-t}
\end{align*}
for all $t > 0$ 
(Theorem 7.7 and Eq. (7.17) of \cite{Book:Steinwart:2008}).
Hence, if we choose $r_* = r_*(t)$ so that 
$$
8 \frac{\phi(r_*)}{r_*^2} + M \sqrt{\frac{4t}{r_*^2 n}} + M^2 \frac{2t}{r^2_* n} \leq \frac{1}{2},
$$
then it holds that 
$$
P(h^2) \leq 2 P_n(h^2) + 2 r_*^2
$$
uniformly over all $h \in \calFhat - \calGhat$ with probability greater than $1-e^{-t}$.
We let this event as $\calE_1(t)$.
In this event, if $\|\fhat - \ghat\|_{\LPn}^2 \leq \hat{r}^2$, then it holds that
$$
\|\fhat - \ghat\|_{\LPi}^2 \leq 2 (\hat{r}^2 + r_*^2) = \dot{r}^2.
$$

Next, we bound
$
\Psi(\fhat) - \Psi(\ghat) - (\hat{\Psi}(\fhat) - \hat{\Psi}(\ghat)).
$
To bound this term, we apply the Talagrand's concentration inequality (Proposition \ref{prop:TalagrandConcent} and \cite{Talagrand2,BousquetBenett}).
To apply it, we should bound the variance and $L_\infty$-norm of 
$\psi(y,f(x)) - \psi(y,g(x)) - (\EE[\psi(Y,f(X))] - \EE[\psi(Y,g(X))])$
for any $f \in \calFhat, g\in \calGhat$ with $\|f - g\|_{\LPi} \leq r$ (where $r$ will be set $2 (\hat{r}^2 + r_*^2)$).
Due to the Lipschitz continuity of $\psi$, we have that 
$$
\Var[\psi(Y,f(X)) - \psi(Y,g(X))] \leq \Var[f(X) - g(X)] \leq \|f-g\|_{\LPi}^2 \leq r^2.
$$
Similarly, it holds that 
\begin{align*}
& |\psi(y,f(x)) - \psi(y,g(x)) - (\EE[\psi(Y,f(X))] - \EE[\psi(Y,g(X))])| \\
& \leq 
|\psi(y,f(x))  - \EE[\psi(Y,f(X))] | + |\psi(y,g(x)) - \EE[\psi(Y,g(X))])|
\leq 2 M.
\end{align*}
Hence, by the Talagrand's concentration inequalit (Proposition \ref{prop:TalagrandConcent} and \cite{Talagrand2,BousquetBenett}), 
it holds that
\begin{align*}
& \sup_{f,g: \|f-g\|_{\LPi}\leq r}
(P - P_n)(\psi(Y,f(X)) - \psi(Y,g(X)) ) \\
& \leq 
2 \EE\left[\sup_{f,g: \|f-g\|_{\LPi}\leq r}
(P - P_n)(\psi(Y,f(X)) - \psi(Y,g(X)) )
 \right]
+
r \sqrt{\frac{2 t}{n}}
+
\frac{4 t M}{n},
\end{align*}
with probability at least $1 - e^{-t}$ for any $t > 0$.
%Let $r^*$ be the solution of 
%$$
%r^* = \tilde{R}_{r^*}(\calFhat - \calGhat),
%$$
%then, we will show that $\|\fhat - \ghat\|_{\LPi} \leq 2 r \vee r^*$ with high probability.
%We first bound the local Rademacher complexity of $\psi(z_i,\fhat(x_i)) - \psi(z_i,\ghat(x_i))$:
The first term in the right hand side can be bounded as 
\begin{align*}
%&\EE_{D'_n,\epsilon_i}\left[\sup_{\fhat, \ghat: \|\fhat - \ghat\|_{\LPi} \leq \tilde{r}} \Psi(\fhat) - \Psi(\ghat) - (\hat{\Psi}(\fhat) - \hat{\Psi}(\ghat)) \right] \\
%\leq 
& \EE\left[\sup_{f,g: \|f-g\|_{\LPi}\leq r}
(P - P_n)(\psi(Y,f(X)) - \psi(Y,g(X)) )
 \right] \\
&
\leq 2 \EE_{D_n,\epsilon}\left[\sup_{f, g: \|f - g\|_{\LPi} \leq r} \frac{1}{n} \sum_{i=1}^n \epsilon_i 
[ \psi(y_i,f(x_i)) - \psi(y_i,g(x_i))]\right] \\
& = 2 \Phi(n),
\end{align*}
%The right hand side can be bounded by using the vector contraction inequality for Rademacher complexity \cite{maurer2016vector}.
%To do so, we check that $a$
%\begin{align}
% \EE\left[\sup_{f,g: \|f-g\|_{\LPi}\leq 2}
%(P - P_n)(\psi(Y,f(X)) - \psi(Y,g(X)) )
% \right]
%\leq 
%\end{align}
where we used the standard symmetrization argument (see Lemma 11.4 of \cite{boucheron2013concentration} for example).
%The right hand side is just %$$%further bounded by $2$

Combining these inequalities, it holds that
$$
\sup_{f,g: \|f-g\|_{\LPi}\leq r}
(P - P_n)(\psi(Y,f(X)) - \psi(Y,g(X)) )
%\Psi(\fhat) - \Psi(\ghat) - (\hat{\Psi}(\fhat) - \hat{\Psi}(\ghat))
\leq 
C \left[ \Phi(r) + r\sqrt{\frac{t}{n}} + \frac{1 + Mt}{n}\right].
$$
for a universal constant $C>0$ with probability at least $1 - e^{-t}$ for all $t > 0$.
We denote by this event as $\calE_2(t,r)$.

We define an event $\calE_3(t) = \calE_1(t)\cap \calE_2(t,\dot{r})$.
Then $P(\calE_3(t)) \geq 1 - 2e^{-t}$ for all $t > 0$.
In this event, $\|\fhat - \ghat\|_{\LPi}^2 \leq \dot{r}^2$ and thus it holds that
\begin{align*}
\Psi(\fhat) - \Psi(\ghat) - (\hat{\Psi}(\fhat) - \hat{\Psi}(\ghat))
& \leq 
C \left[ \Phi(\dot{r}) + \dot{r}\sqrt{\frac{t}{n}} + \frac{1 + t M }{n}\right],
%& \leq 
%C' \left[ \Phi(\sqrt{2(\hat{r}^2 + r_*^2)}) + \hat{r} \sqrt{\frac{ t}{n}} +  r_*^2 + \frac{1 + t M }{n}\right]
\end{align*}
for a universal constant $C > 0$. % where we used the Cauchy-Schwartz inequality.
Combining this and \Eqref{eq:TermByDifferenceRad}, we obtain the assertion on the event $\calE_0(t) \cap \calE_3(t)$. 
This gives the proof of Theorem \ref{thm:GenCompBound_refined}.

To show Theorem \ref{thm:GeneralCompBound},
note that $\|\psi(f)  - \psi(g)\|_{\LPi} \leq \|f-g\|_{\LPi}$ by the Lipschitz continuity of $\psi$
and this yields $\{\psi(f) - \psi(g)\mid f \in \calFhat, g \in \calGhat,
\|f - g\|_{\LPi} \leq r \} \subset \{\psi(f) - \psi(g)\mid f \in \calFhat, g \in \calGhat,
\|\psi(f) - \psi(g)\|_{\LPi} \leq r \}$. 
Therefore, we have 
$$
\Phi(\dot{r}) \leq \dot{R}_{\dot{r}}(\psi(\calFhat) - \psi(\calGhat)).
$$
%which can be obtained from $|\psi(y,f(x)) - \psi(y,g(x))| \leq |f(x) - g(x)|~(\forall x,y)$, i.e., $\|\psi(f) - \psi(g)\|_{\LPi} \leq \|f-g\|_{\LPi}$.
\end{proof}

Hereafter, we derive some upper bounds of the (local) Rademacher complexities under some covering number conditions.
\begin{Lemma}
\label{lemm:PhirBound}
For a given $r > 0$, let $\hat{\gamma}_n = \hat{\gamma}_n(D_n) := \sup\{\|f - g\|_{\LPn}\mid \|f - g\|_{\LPi} \leq r, f\in \calFhat, g \in \calGhat\}$.
Then, it holds that 
\begin{align}
& \max\{\Phi(r) ,\dot{R}_{r}(\calFhat - \calGhat)\} \notag \\ 
%,\dot{R}_{r}(\psi(\calFhat) - \psi(\calGhat))\} \\
& \leq 
C \left\{
\frac{1}{n} + 
\EE_{D_n}\left[
\int_{1/n}^{\hat{\gamma}_n} \sqrt{\frac{\log(\calN( \calFhat,\|\cdot\|_{\LPn},\epsilon/2))}{n}} \dd \epsilon 
+
\int_{1/n}^{\hat{\gamma}_n} \sqrt{\frac{\log(\calN( \calGhat,\|\cdot\|_{\LPn},\epsilon/2))}{n}} \dd \epsilon
\right]
\right\},
\label{eq:PhirRrLogCovBound}
\end{align}
and 
\begin{align}
\Phi(r)
& \leq C'  \dot{R}_{r}(\psi(\calFhat) - \psi(\calGhat)) \sqrt{\log(n)} \log(2 n M),
\label{eq:PhirRrDotBound}
\end{align}
where $C,C'$ are universal constants.
\end{Lemma}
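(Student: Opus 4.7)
My plan is to establish both bounds via Dudley's entropy integral chaining, together with the standard product bound on covering numbers of a Minkowski difference.

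For \eqref{eq:PhirRrLogCovBound}, I would apply Dudley's chaining bound to the conditional Rademacher complexity of the localized class. Since both $\dot{R}_r(\calFhat - \calGhat)$ and $\Phi(r)$ are Rademacher complexities of subsets of $\calFhat - \calGhat$ (possibly post-composed with $\psi$) whose empirical $L_2$-diameter is bounded by $2\hat{\gamma}_n$, Dudley's inequality gives
\[
\hat{R}_n(\calH) \leq \frac{C}{n} + C\int_{1/n}^{\hat{\gamma}_n}\sqrt{\frac{\log\calN(\calH, \|\cdot\|_{\LPn},\epsilon)}{n}}\,d\epsilon,
\]
where the additive $C/n$ absorbs the tail of the integral at small scales. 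The classical cover-pairing argument yields $\calN(\calFhat - \calGhat, \|\cdot\|_{\LPn},\epsilon) \leq \calN(\calFhat,\|\cdot\|_{\LPn},\epsilon/2)\cdot\calN(\calGhat,\|\cdot\|_{\LPn},\epsilon/2)$, so taking logarithms and applying $\sqrt{a+b}\leq\sqrt{a}+\sqrt{b}$ splits the integrand into the two pieces appearing in \eqref{eq:PhirRrLogCovBound}. For $\Phi(r)$ the same bound works because $\psi$ is $1$-Lipschitz in its second argument, which implies $\calN(\psi(\calF'),\|\cdot\|_{\LPn},\epsilon)\leq\calN(\calF',\|\cdot\|_{\LPn},\epsilon)$ for any class $\calF'$; thus post-composition with $\psi$ does not enlarge the covering numbers. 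Taking expectation with respect to $D_n$ yields the stated bound.

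For \eqref{eq:PhirRrDotBound}, the shortest route is to observe that the $1$-Lipschitzness of $\psi$ gives $\|\psi(f)-\psi(g)\|_{\LPi}\leq\|f-g\|_{\LPi}$, so the index set of $\Phi(r)$ is a subset of the index set of $\dot{R}_r(\psi(\calFhat)-\psi(\calGhat))$, implying directly $\Phi(r)\leq\dot{R}_r(\psi(\calFhat)-\psi(\calGhat))$. Since $\sqrt{\log n}\log(2nM)\geq 1$, this already entails \eqref{eq:PhirRrDotBound}. An alternative that naturally produces the stated prefactor is to first apply \eqref{eq:PhirRrLogCovBound} to bound $\Phi(r)$ by an entropy integral, and then invert that integral using Sudakov minoration applied to the larger localized class $\{\psi(f) - \psi(g)\colon \|\psi(f) - \psi(g)\|_{\LPi}\leq r\}$, together with a dyadic union bound over the $O(\log(nM))$ scales induced by the $L_\infty$-envelope $M$; this exchange converts the entropy integral back into a Rademacher complexity at the cost of the $\sqrt{\log n}\log(2nM)$ factor.

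The main technical obstacle I anticipate is the careful handling of the data-dependent truncation of Dudley's integral at the lower limit $1/n$, together with uniform control of the envelope $\hat{\gamma}_n$; both are standard issues but must be executed so that the residual $O(1/n)$ term matches the stated form. A secondary subtlety is ensuring that the reduction from $\psi(\calFhat) - \psi(\calGhat)$ to $\calFhat - \calGhat$ is done at the level of covering numbers (rather than through the contraction inequality for Rademacher complexity), so that the explicit covering-number form of \eqref{eq:PhirRrLogCovBound} emerges cleanly and symmetrically in $\calFhat$ and $\calGhat$.
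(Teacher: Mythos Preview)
Your proposal is correct and, for the first inequality \eqref{eq:PhirRrLogCovBound}, follows essentially the same route as the paper: Dudley's entropy integral with lower truncation at $1/n$, the product bound $\calN(\calFhat-\calGhat,\|\cdot\|_n,\epsilon)\le \calN(\calFhat,\|\cdot\|_n,\epsilon/2)\,\calN(\calGhat,\|\cdot\|_n,\epsilon/2)$, the Lipschitz reduction from $\psi$-composed classes to the original classes at the covering-number level, the split $\sqrt{a+b}\le\sqrt{a}+\sqrt{b}$, and an expectation over $D_n$.

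For the second inequality \eqref{eq:PhirRrDotBound}, your ``shortest route'' is genuinely simpler than what the paper does here. The paper establishes \eqref{eq:PhirRrDotBound} by going through the Dudley integral and then applying Sudakov minoration to bound each covering-number term by $\hat{R}_{n,r}(\psi(\calFhat)-\psi(\calGhat))\sqrt{\log n}/\epsilon$, integrating in $\epsilon$ over $[1/n,\hat{\gamma}_n]$ (hence the $\log(2nM)$ factor via $\hat{\gamma}_n\le 2M$), and taking expectations. Your direct subset argument $\Phi(r)\le \dot{R}_r(\psi(\calFhat)-\psi(\calGhat))$ already yields the conclusion without any log factors; the paper in fact uses exactly this subset observation elsewhere (in deriving Theorem~\ref{thm:GeneralCompBound} from the refined Theorem~\ref{thm:GenCompBound_refined}), so your remark that the $\sqrt{\log n}\log(2nM)$ prefactor is superfluous for \eqref{eq:PhirRrDotBound} is well taken. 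Your ``alternative'' via Sudakov minoration is precisely the paper's argument for this lemma.
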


\begin{proof}
The conditional Rademacher complexity of the set $\{ \psi(y,f(x)) - \psi(y,g(x)) \mid f \in \calFhat, g \in \calGhat, \|f - g\|_{\LPi} \leq r\}$
can be bounded by a constant times the following {\it Dudley integral} (see Theorem 5.22 of \cite{wainwright2019high} or Lemma A.5 of \cite{bartlett2017spectrally:arXiv} for example):
%\EE_{\epsilon_i}\left[\sup_{f, g: \|f - g\|_{\LPi} \leq r} \frac{1}{n} \sum_{i=1}^n \epsilon_i 
%[ \psi(z_i,f(x_i)) - \psi(z_i,g(x_i))]\right].
\begin{align}
&
 \inf_{\alpha > 0}\left[\alpha + \int_\alpha^{\hat{\gamma}_n} 
\sqrt{\frac{\log(\calN( \{\psi(f) - \psi(g)\mid f \in \calFhat, g \in \calGhat,
\|f - g\|_{\LPi} \leq r \} ,\|\cdot\|_{\LPn},\epsilon))}{n}} \dd \epsilon \right] \notag \\
\leq 
&
\frac{1}{n}+  \int_{1/n}^{\hat{\gamma}_n} 
\sqrt{\frac{\log(\calN( \{\psi(f) - \psi(g)\mid f \in \calFhat, g \in \calGhat,
\|f - g\|_{\LPi} \leq r \} ,\|\cdot\|_{\LPn},\epsilon))}{n}} \dd \epsilon
\label{eq:fgdifferencelocalRadCovering}
\\
\leq
&
 %\left[
 \frac{1}{n}+  \int_{1/n}^{\hat{\gamma}_n}
\sqrt{\frac{\log(\calN( \psi(\calFhat),\|\cdot\|_{\LPn},\epsilon/2)) + \log(\calN( \psi(\calGhat),\|\cdot\|_{\LPn},\epsilon/2))}{n}} \dd \epsilon %\right] 
\notag \\
%\leq
%&
%\frac{1}{n}+  \int_{1/n}^{\hat{\gamma}_n} 
%\sqrt{\frac{\log(\calN( \{f - g \mid f \in \calFhat, g \in \calGhat,
%\|f - g\|_{\LPi} \leq r\} ,\|\cdot\|_{\LPn},\epsilon))}{n}} \dd \epsilon 
%\label{eq:fgdifferencelocalRadCovering}
% \notag \\
% =: &  \tilde{R}_r(\calFhat - \calGhat).
%\notag 
%\\
%\leq
%&
% \left[\frac{1}{n}+  \int_{1/n}^{\hat{\gamma}_n}
%\sqrt{\frac{\log(\calN( \calFhat,\|\cdot\|_{\LPn},\epsilon/2)) + \log(\calN( \calGhat,\|\cdot\|_{\LPn},\epsilon/2))}{n}} \dd \epsilon \right] \notag \\
\leq
&
% \left[
\frac{1}{n}+  \int_{1/n}^{\hat{\gamma}_n}
\sqrt{\frac{\log(\calN( \calFhat,\|\cdot\|_{\LPn},\epsilon/2)) + \log(\calN( \calGhat,\|\cdot\|_{\LPn},\epsilon/2))}{n}} \dd \epsilon %\right] 
\notag \\
\leq
&
% \left[
\frac{1}{n}+  
\int_{1/n}^{\hat{\gamma}_n} \sqrt{\frac{\log(\calN( \calFhat,\|\cdot\|_{\LPn},\epsilon/2))}{n}} \dd \epsilon 
+
\int_{1/n}^{\hat{\gamma}_n} \sqrt{\frac{\log(\calN( \calGhat,\|\cdot\|_{\LPn},\epsilon/2))}{n}} \dd \epsilon,
%\right],  
\label{eq:fglocalRadCovering}
\end{align}
where we used $\|\psi(f) -\psi(g) -(\psi(f') - \psi(g'))\|_{\LPn} \leq \|\psi(f) - \psi(f')\|_{\LPn}  + \|\psi(g) - \psi(g')\|_{\LPn}
\leq \epsilon$ for $f,f' \in \calFhat$ and  $g,g' \in \calGhat$ with $\|f-f'\|_{\LPn} \leq \epsilon/2$ and $\|g-g'\|_{\LPn} \leq \epsilon /2$ in the third line,
and 
1-Lipschitz continuity of the loss function $\psi$ in the fourth line (i.e., $|\psi(y,f(x)) - \psi(y,g(x))| \leq |f(x) - g(x)|$ which yields $\|\psi(f) - \psi(g)\|_n \leq \|f - g\|_n$),
%Now, we let the expectation of the right hand side with respect to $D'_n$ be $\tilde{R}_r(\calFhat - \calGhat)$.

In the same way, we can see that $\dot{R}_{r}(\calFhat -\calGhat)$ is bounded by the Dudley integral as 
\begin{align}
&  \dot{R}_{r}(\calFhat - \calGhat) \notag \\
 & \leq  \frac{C}{n}+
C \EE_{D_n}\left[
\int_{1/n}^{\hat{\gamma}_n} 
\sqrt{\frac{\log(\calN( \{\fhat - \ghat \mid \fhat \in \calFhat, \ghat \in \calGhat,
\|\fhat - \ghat\|_{\LPi} \leq r\} ,\|\cdot\|_{\LPn},\epsilon))}{n}} \dd \epsilon
\right] \notag \\
& \leq 
\frac{C}{n}
+
C \EE_{D_n}\left[ \int_{1/n}^{\hat{\gamma}_n} \sqrt{\frac{\log(\calN( \calFhat,\|\cdot\|_{\LPn},\epsilon/2))}{n}} \dd \epsilon 
+
\int_{1/n}^{\hat{\gamma}_n} \sqrt{\frac{\log(\calN( \calGhat,\|\cdot\|_{\LPn},\epsilon/2))}{n}} \dd \epsilon \right],
\label{eq:RdotCoveringNum}
\end{align}
where we used the same argument as \Eqref{eq:fglocalRadCovering} and $C > 0$ is a universal constant.
%We also note that $\dot{R}_{r}(\psi(\calFhat) - \psi(\calGhat))$ admits the same upper bound \eqref{eq:RdotCoveringNum}.
%This is because $\|\psi(f) - \psi(g)\|_{\LPi} \leq \|f-g\|_{\LPi}$ and $\log(\calN(\psi(\calFhat),\|\cdot\|_{\LPn},\epsilon)) \leq \log(\calN(\calFhat,\|\cdot\|_{\LPn},\epsilon))$, $\log(\calN(\psi(\calGhat),\|\cdot\|_{\LPn},\epsilon)) \leq \log(\calN(\calGhat,\|\cdot\|_{\LPn},\epsilon))$.
% given below to bound $\dot{R}_{r}(\calFhat - \calGhat)$. This upper bound is useful to derive a generalization error bound for each model.
Then, we conclude \Eqref{eq:PhirRrLogCovBound}.

Next, we show \Eqref{eq:PhirRrDotBound}.
The term \Eqref{eq:fgdifferencelocalRadCovering} can be evaluated by using the local Rademacher complexity of $\psi(\calFhat) - \psi(\calGhat)$.
Note that $\|\psi(f)  - \psi(g)\|_{\LPi} \leq \|f-g\|_{\LPi}$ by the Lipschitz continuity of $\psi$
and this yields $\{\psi(f) - \psi(g)\mid f \in \calFhat, g \in \calGhat,
\|f - g\|_{\LPi} \leq r \} \subset \{\psi(f) - \psi(g)\mid f \in \calFhat, g \in \calGhat,
\|\psi(f) - \psi(g)\|_{\LPi} \leq r \}$. 
Then, the Sudakov's minoration (Corollary 4.14 of \cite{Book:Ledoux+Talagrand:1991}) gives an upper bound of the right hand side of \Eqref{eq:fgdifferencelocalRadCovering}:
\begin{align*}
& %\int_{1/n}^{\hat{\gamma}_n} \sqrt{\frac{\log(\calN(  \{\fhat - \ghat \mid \fhat \in \calFhat, \ghat \in \calGhat,\|\fhat - \ghat\|_{\LPi} \leq r\} ,\|\cdot\|_{\LPn},\epsilon))}{n}} \dd \epsilon \\
\int_{1/n}^{\hat{\gamma}_n} 
\sqrt{\frac{\log(\calN( \{\psi(f) - \psi(g)\mid f \in \calFhat, g \in \calGhat,
\|f - g\|_{\LPi} \leq r \} ,\|\cdot\|_{\LPn},\epsilon))}{n}} \dd \epsilon \\
& \leq
\int_{1/n}^{\hat{\gamma}_n} 
\sqrt{\frac{\log(\calN( \{\psi(f) - \psi(g)\mid f \in \calFhat, g \in \calGhat,
\|\psi(f) - \psi(g)\|_{\LPi} \leq r \} ,\|\cdot\|_{\LPn},\epsilon))}{n}} \dd \epsilon \\
& \leq 
 \int_{1/n}^{\hat{\gamma}_n} 
 \hat{R}_{n,r}(\psi(\calFhat) - \psi(\calGhat))\sqrt{\log(n)}  \frac{1}{\epsilon} \dd \epsilon 
\leq 
 \hat{R}_{n,r}(\psi(\calFhat) - \psi(\calGhat)) \sqrt{\log(n)} \log(n \hat{\gamma}_n),
\end{align*}
where $\hat{R}_{n,r}(\psi(\calFhat) - \psi(\calGhat)):=\EE_{\epsilon}\left[\sup\{\frac{1}{n} \sum_{i=1}^n \epsilon_i(\psi(y_i,f(x_i))-\psi(y_i,g(x_i)))\mid f\in\calFhat, g\in\calGhat, \|\psi(f)-\psi(g)\|_{\LPi} \leq r \} \right]$.
Since $\hat{\gamma}_n \leq 2M$, 
the expectation of the right hand side with respect to $D_n$ is 
$
 \dot{R}_{r}(\psi(\calFhat) - \psi(\calGhat)) \sqrt{\log(n)} \log(2 n M).
$
This gives an upper bound of the right hand side of \Eqref{eq:fgdifferencelocalRadCovering} and yields \Eqref{eq:PhirRrDotBound}.
%As we noticed above, the same bound is applied to $\dot{R}_{r}(\psi(\calFhat) - \psi(\calGhat))$.
%This evaluation is useful to bound the population $L_2$ norm of $\fhat - \ghat$ given $\|\fhat - \ghat\|_{\LPn} \leq \hat{r}$. This is done by the local Rademacher complexity argument.
\end{proof}

\begin{Lemma}
\label{lemm:SquareBound}
$$
\EE\left[\sup\{ P_n h^2 \mid h \in \calFhat - \calGhat: \|h\|_{\LPi} \leq r \}\right]
\leq r^2 + 2 M \phi(r). % + r^2 + \frac{M^2 + 1}{n} \right].
$$
\end{Lemma}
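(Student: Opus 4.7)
The plan is to split $P_n h^2 = Ph^2 + (P_n - P)h^2$ and use the constraint $\|h\|_{\LPi} \le r$ to handle the deterministic part, which gives $\sup_h Ph^2 \le r^2$ immediately. The task then reduces to controlling the supremum of the centered empirical process $\sup_h (P_n - P)h^2$ in expectation. To this end I would apply the standard Giné--Zinn symmetrization, which dominates this supremum by $2\,\EE_{D_n,\epsilon} \sup_h \frac{1}{n}\sum_i \epsilon_i h^2(z_i)$.

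Next I would peel off the square. By Assumption \ref{ass:LinftyBound}, every $h = f - g$ with $f\in\calFhat,\,g\in\calGhat$ takes values in $[-2M, 2M]$, so the map $\varphi:u \mapsto u^2$ restricted to this interval vanishes at $0$ and is $4M$-Lipschitz, since $|u^2 - v^2| = |u-v||u+v| \le 4M|u-v|$. The Ledoux--Talagrand contraction principle then bounds the Rademacher average over $\{h^2\}$ by a constant multiple of $M$ times the Rademacher average over $\{h\}$ itself, and the latter is at most $\dot R_r(\calFhat - \calGhat) \le \phi(r)$ by assumption. Combining the two displays yields the desired qualitative conclusion $\EE\sup_h P_n h^2 \le r^2 + CM\phi(r)$.

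The main subtlety is the precise numerical constant: a crude concatenation of symmetrization (factor $2$) and the usual Ledoux--Talagrand contraction (factor $2L = 8M$) gives roughly $16M\phi(r)$ rather than the stated $2M\phi(r)$. Reaching the tighter coefficient requires avoiding at least one of these losses---either by using the variant of the contraction inequality without the leading factor of two (available when the indexing class has enough symmetry, which is effectively the case after the Rademacher expectation), or by exploiting non-negativity of the summands $h^2(z_i)$ and working with Bousquet's one-sided deviation bound for the centered process $\sum_i (h^2(z_i) - Ph^2)$ directly. Either route preserves the structure of the proof; no other step presents real difficulty, and the natural scaling $r^2 + O(M\phi(r))$ is robust to these constant-level refinements.
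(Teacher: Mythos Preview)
Your approach is exactly the paper's: split $P_n h^2$ into $Ph^2 + (P_n-P)h^2$, bound the first by $r^2$, symmetrize the second, then apply the Ledoux--Talagrand contraction to pass from $h^2$ to $h$ and invoke $\dot R_r(\calFhat-\calGhat)\le\phi(r)$. Your constant bookkeeping is in fact more careful than the paper's: the paper writes the contraction step as $2\,\EE\sup\frac{1}{n}\sum\epsilon_i h(x_i)^2 \le 2M\,\EE\sup\frac{1}{n}\sum\epsilon_i h(x_i)$, i.e.\ it uses Lipschitz constant $M$ for $u\mapsto u^2$, whereas (as you note) $\|h\|_\infty\le 2M$ forces Lipschitz constant $4M$, so the honest constant is $8M\phi(r)$ rather than $2M\phi(r)$. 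Since the lemma is only ever used inside bounds carrying an unspecified universal constant $C$, this discrepancy is immaterial, and your proof is complete as it stands.
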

\begin{proof}
By the contraction inequality of the Rademacher complexity (Theorem 4.12 of \cite{Book:Ledoux+Talagrand:1991} and its proof), we have
\begin{align*}
& \EE\left[\sup\{ P_n h^2 \mid h \in \calFhat - \calGhat: \|h\|_{\LPi} \leq r \}\right] \\
& \leq \EE\left[\sup\{ (P_n - P)h^2 \mid h \in \calFhat - \calGhat: \|h\|_{\LPi} \leq r \}\right] + r^2 \\
& \leq 2 \EE\left[\sup\{ \frac{1}{n}\sum_{i=1}^n \epsilon_i h(x_i)^2 \mid h \in \calFhat - \calGhat: \|h\|_{\LPi} \leq r \}\right] + r^2 \\
&
~~~~~~~~~\text{(symmetrization; Lemma 11.4 of \cite{boucheron2013concentration})}\\
& \leq 2M \EE\left[\sup\{ \frac{1}{n}\sum_{i=1}^n \epsilon_i h(x_i) \mid h \in \calFhat - \calGhat: \|h\|_{\LPi} \leq r \}\right] + r^2 ~~~~(\because \text{contraction inequality})\\
& = 2M \dot{R}_r(\calFhat - \calGhat) + r^2 \leq 2M \phi(r) + r^2.
\end{align*}
\end{proof}

\begin{Lemma}
\label{lemm:CoveringToRad2}
Suppose that 
$$
\sup_{D_n} \log(\calN(\calFhat, \|\cdot\|_{\LPn}, \epsilon/2))
+
\sup_{D_n}  \log(\calN(\calGhat, \|\cdot\|_{\LPn}, \epsilon/2))
\leq 
S_1 + S_2 \log(1/\epsilon) + S_3 \epsilon^{-2 q}
$$
for $q < 1$.
Then, for a universal constant $C>0$ and a constant $C_q > 0$ which depends on $q < 1$, it holds that 
\begin{align*}
%\dot{R}_r(\calFhat - \calGhat) 
\Phi(r) \leq 
C \max\Bigg\{ & \frac{1}{n} + M \frac{S_1 +  S_2\log(n)}{n} + r \sqrt{\frac{S_1 +  S_2\log(n)}{n}}, \\
&  C_q\left[ \frac{1}{n} + \left(\frac{M^{1-q} S_3}{n} \right)^{\frac{1}{1+q}} + r^{1-q} \sqrt{\frac{S_3}{n}} \right]
\Bigg\}.
\end{align*}
In particular, 
$$
r_*^2 \leq C \left[ M \frac{S_1 +  S_2\log(n)}{n} +  \left(\frac{M^{1-q} S_3}{n} \right)^{\frac{1}{1+q}} + \frac{1 + M t}{n}\right].
$$
\end{Lemma}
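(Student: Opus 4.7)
The plan is to combine the Dudley-type bound from Lemma \ref{lemm:PhirBound} with the variance control in Lemma \ref{lemm:SquareBound}, and then close a self-referential inequality via Young's inequality. First, I apply \Eqref{eq:PhirRrLogCovBound} of Lemma \ref{lemm:PhirBound} to $\Phi(r)$ and substitute the hypothesized covering-number bound. Using $\sqrt{a+b+c}\le \sqrt{a}+\sqrt{b}+\sqrt{c}$, the Dudley integral splits into pieces, of which the two nontrivial ones admit the elementary estimates
\begin{align*}
\int_{1/n}^{\hat\gamma_n}\!\sqrt{\tfrac{S_1+S_2\log_+(1/\epsilon)}{n}}\,\dd\epsilon \;\lesssim\; \hat\gamma_n\sqrt{\tfrac{S_1+S_2\log n}{n}},\quad
\int_{1/n}^{\hat\gamma_n}\!\sqrt{\tfrac{S_3\epsilon^{-2q}}{n}}\,\dd\epsilon \;\lesssim\; \tfrac{\hat\gamma_n^{1-q}}{1-q}\sqrt{\tfrac{S_3}{n}},
\end{align*}
which is precisely where $q<1$ enters (integrability of $\epsilon^{-q}$ at the upper endpoint).

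Next, I take expectation in $D_n$ and control $\EE[\hat\gamma_n]$ and $\EE[\hat\gamma_n^{1-q}]$ through Lemma \ref{lemm:SquareBound} together with Jensen's inequality: since $\EE[\hat\gamma_n^2]\le r^2+2M\phi(r)$, subadditivity of $x\mapsto x^\alpha$ for $\alpha\in[0,1]$ gives $\EE[\hat\gamma_n]\le r+\sqrt{2M\phi(r)}$ and $\EE[\hat\gamma_n^{1-q}]\le r^{1-q}+(2M\phi(r))^{(1-q)/2}$. Writing $A_1:=\sqrt{(S_1+S_2\log n)/n}$ and $A_2:=\sqrt{S_3/n}$, this yields the implicit inequality
\begin{align*}
\phi(r)\;\leq\; C\Bigl[\tfrac{1}{n} + rA_1 + \sqrt{2M\phi(r)}\,A_1 + \tfrac{r^{1-q}}{1-q}A_2 + \tfrac{(2M\phi(r))^{(1-q)/2}}{1-q}A_2\Bigr].
\end{align*}

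To close this, I decouple the two occurrences of $\phi(r)$ by a weighted Young inequality. For the logarithmic piece, use $\sqrt{2M\phi(r)}A_1 \le \epsilon\phi(r) + MA_1^2/(2\epsilon)$; for the polynomial piece, use Young with conjugate exponents $p=2/(1-q),\,p'=2/(1+q)$ to obtain $(2M\phi(r))^{(1-q)/2}A_2 \le \epsilon\phi(r) + C_q\,M^{(1-q)/(1+q)}A_2^{2/(1+q)}$. Choosing $\epsilon$ small enough to absorb the $\phi(r)$-terms back into the left-hand side produces the explicit bound
\begin{align*}
\phi(r)\;\leq\; C\Bigl[\tfrac{1}{n}+M\tfrac{S_1+S_2\log n}{n}+rA_1\Bigr] + C\,C_q\Bigl[\bigl(\tfrac{M^{1-q}S_3}{n}\bigr)^{1/(1+q)}+r^{1-q}A_2\Bigr],
\end{align*}
which, after dominating the sum by twice its maximum, is exactly the claimed $\Phi(r)$-bound.

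Finally, the $r_*^2$-estimate follows by substituting into \Eqref{eq:rstardef}: the dominant term $8\phi(r_*)/r_*^2$ forces $16\phi(r_*)\le r_*^2/2$, while the residual $M\sqrt{t/n}\,r_*\le r_*^2/4+M^2t/n$ absorbs the noise contribution. From $r_*^2\gtrsim r_*A_1$ one reads off $r_*^2\gtrsim A_1^2=(S_1+S_2\log n)/n$, from $r_*^2\gtrsim r_*^{1-q}A_2$ one reads off $r_*^2\gtrsim A_2^{2/(1+q)}=(S_3/n)^{1/(1+q)}$, and assembling all contributions yields the stated bound on $r_*^2$. The main obstacle is the recursion in the middle step: $\phi(r)$ reappears on the right through $\EE[\hat\gamma_n^\alpha]$, and the correct Young exponents $p=2/(1-q),p'=2/(1+q)$ for the polynomial piece must be identified so that $\phi(r)$ decouples cleanly while preserving the $M^{(1-q)/(1+q)}$-factor inherited from Lemma \ref{lemm:SquareBound}.
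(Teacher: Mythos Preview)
Your proposal is correct and follows essentially the same route as the paper's proof: apply the Dudley bound of Lemma~\ref{lemm:PhirBound}, control $\EE[\hat\gamma_n^\alpha]$ via Jensen and Lemma~\ref{lemm:SquareBound} to obtain a self-referential inequality for $\phi(r)$, and then close it with Young's inequality. The only cosmetic difference is that the paper performs a case split (whether the $S_1,S_2$ term or the $S_3$ term dominates) and closes the recursion separately in each case, whereas you keep both terms together, decouple each occurrence of $\phi(r)$ via its own Young inequality, and pass to the maximum at the end; the two arguments are equivalent up to constants.
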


\begin{proof}
Remember that $\hat{\gamma}_n := \sup\{\|f - g\|_{\LPn}: \|f - g\|_{\LPi} \leq r, f \in \calFhat, g \in \calGhat \}$ for a given $r > 0$.
Under the assumption, we may take $\phi(r)$ (defined below) as an upper bound of $\Phi(r)$ by Lemma \ref{lemm:PhirBound}:
% gives an upper bound of $\Phi(r)$ as 
%we may set an upper bound of $\Phi(r)$ as 
$$
\phi(r) = C \left( \frac{1}{n} + \frac{1}{\sqrt{n}}\EE\left[ \int_{1/n}^{\hat{\gamma}_n} \sqrt{S_1 + S_2 \log(\epsilon^{-1}) +  S_3 \epsilon^{-2 q}} \dd \epsilon \right] \right),
$$
where $C > 0$ is a universal constant.
The right hand side can be evaluated as 
\begin{align}
%\phi(r) 
&  \EE\left[ \int_{1/n}^{\hat{\gamma}_n} \sqrt{S_1 + S_2 \log(1/\epsilon) + S_3 \epsilon^{-2q}} \dd \epsilon \right] \leq 
\EE\left[ \hat{\gamma}_n \sqrt{S_1 +  S_2 \log(n)} \right] +  \frac{1}{1-q}\sqrt{S_3} \EE[\hat{\gamma}_n^{1-q}] \notag\\
& \leq \sqrt{\EE[\{ P_n h^2 \mid h \in \calFhat - \calGhat, \|h\|_{\LPi}\leq r\}]} \sqrt{ S_1 +  S_2 \log(n) } \notag\\
&  ~~~~~+ \frac{\sqrt{S_3}}{1-q} \EE[\{ P_n h^2 \mid h \in \calFhat - \calGhat, \|h\|_{\LPi}\leq r\}]^{\frac{1-q}{2}} \notag\\
& \leq \sqrt{r^2 + 2M\phi(r)} \sqrt{ S_1 +  S_2 \log(n)} + 
\frac{\sqrt{S_3}}{1-q} (r^2 + 2M\phi(r))^{\frac{1-q}{2}},
%\leq \frac{r^2 + \phi(r)}{2} + ( S_1 +  S_2 \log(n) )^2 
\label{eq:Firstphirbound}
\end{align}
where we used Lemma \ref{lemm:SquareBound}.
Hence, if the first term is larger than the second term, we have that 
\begin{align*}
\phi(r)&  \leq C \left(\frac{1}{n} + \sqrt{\frac{S_1 + S_2 \log(n)}{n}} \sqrt{r^2 + 2M\phi(r)} \right) \\
&  \leq \frac{C}{n} + C^2 M  \frac{S_1 + S_2 \log(n)}{n} 
+
C r \sqrt{\frac{S_1 + S_2 \log(n)}{n} }
 + \frac{\phi(r)}{2}.
\end{align*}
Therefore, we obtain that 
\begin{align}
\phi(r) \leq \frac{2C}{n} + 2 C^2 M  \frac{S_1 + S_2 \log(n)}{n} +
2 C r \sqrt{\frac{S_1 + S_2 \log(n)}{n} }.
\label{eq:phirFirstBound}
\end{align}
%\begin{align*}
%\phi(r) \leq C \left(\frac{1}{n} + \sqrt{\frac{S_1 + S_2 \log(n)}{n}} \sqrt{r^2 + 2M\phi(r)} \right)
%\leq \frac{C}{n} + C^2 (M + 1/2) \frac{S_1 + S_2 \log(n)}{n} + \frac{r^2}{2} + \frac{\phi(r)}{2}.
%\end{align*}
%Therefore, we obtain that 
%$$
%\phi(r) \leq \frac{2C}{n} +  C^2 (2M + 1)\frac{S_1 + S_2\log(n)}{n} + r^2.
%$$
On the other hand, if the second term in \Eqref{eq:Firstphirbound} is larger than the first one, then Young's inequality gives that
\begin{align*}
\phi(r) & \leq C \left(\frac{1}{n} + \sqrt{\frac{S_3}{n(1-q)^2}} (r^2 + 2M\phi(r))^{\frac{1-q}{2}}\right) \\
& \leq  \frac{C}{n} + C \left[q\left(\frac{{c'}^{1-q}C^2 S_3}{n(1-q)^2} \right)^{\frac{1}{1+q}} + (1-q) \frac{ 2M \phi(r)}{c'}
+
\sqrt{\frac{S_3}{n(1-q)^2}r^{2(1-q)}}
\right].
%\leq \frac{C}{n} + C^2 (M + 1/2) \frac{S_1 + S_2 \log(n)}{n} + \frac{r^2}{2} + \frac{\phi(r)}{2}.
\end{align*}
where $c' > 0$ is any positive real.
Thus taking $c'$ sufficiently large (which depends on $M,q$), we conclude that
\begin{align}
\phi(r)  \leq C_q\left[ \frac{1}{n} + \left(\frac{M^{1-q} S_3}{n} \right)^{\frac{1}{1+q}} + \sqrt{\frac{S_3}{n}r^{2(1-q)}}
\right],
\label{eq:phirSecondBound}
\end{align}
where $C_q$ is a constant depending only on $q < 1$.
These two inequalities (\Eqref{eq:phirFirstBound} and \Eqref{eq:phirSecondBound}) give the first assertion. % by substituting another constant to $C$.
By noticing the assumption $M \geq 1$, $r_*$ can be derived from a simple calculation.

\end{proof}

\section{Derivation of compression based bound for non-compressed networks} %complexities of $\calFhat$}

\subsection{Full model bound}

Here, we assume that the model $\calFhat$ of the trained network is the full model $\calFhat = \calF = \NN(\vecm,\Rop,\RF)$
and $\calGhat$ is included in $\NN(\vecm,\Rophat,\RFhat)$.
Then, their covering entropy is bounded by 
\begin{align*}
& \log(\calN(\calFhat, \|\cdot\|_\infty,\epsilon)) \leq 
(\sum_{\ell = 1}^L \melle{\ell}\melle{\ell+1})  \log(\epsilon^{-1} ) + L (\sum_{\ell = 1}^L \melle{\ell}\melle{\ell+1}) \log(L (\Rop \vee 1) (\max_\ell \mell +1) ), \\
& \log(\calN(\calGhat, \|\cdot\|_\infty,\epsilon)) \leq 
(\sum_{\ell = 1}^L \melle{\ell}\melle{\ell+1})  \log(\epsilon^{-1} ) + L (\sum_{\ell = 1}^L \melle{\ell}\melle{\ell+1}) \log(L (\Rophat \vee 1) (\max_\ell \mell +1) ).
\end{align*}
Hence, the condition in Lemma \ref{lemm:CoveringToRad2} holds for $S_1 = \sum_{\ell = 1}^L \melle{\ell}\melle{\ell+1}$, $S_2 = LS_1\log(L (\Rop \vee \Rophat \vee 1) (\max_\ell \mell +1) )$ and $S_3=0$.
In this case, we can set 
$$
r_*^2 = C \frac{ (M +1 )(S_1 + 1 + S_2\log(n)) + M t}{n}
$$
for a constant $C > 0$.

\subsection{Complexity of a sparse model (Proof of Example \ref{ex:SparseGhatBound})}
\label{sec:ExSparseModelProof}
Suppose that $\calGhat$ is the model with sparse weight matrices given in Example \ref{ex:SparseGhatBound}.
Let $m = \max_\ell \mell$ and $B = \Rop$, then we can see that 
$$
\calGhat \subset \Phi(L,m,S,B) 
$$
where the definition of $\Phi(L,m,S,B)$ is given in Appendix \ref{sec:CoveringNumberOfDeepNet}.
Therefore, its covering number is bounded by
\begin{align*}
& \log(\calN(\calGhat, \|\cdot\|_\infty,\epsilon)) \\
& \leq 
S  \log(\epsilon^{-1} ) + L S \log(L (\Rop \vee 1) (\max_\ell \mell +1) ) 
\leq O(S L \log(n) + S\log(\epsilon^{-1})),
\end{align*}
by Lemma \ref{lemm:CovNPhi}.
Hence, the Rademacher complexity is bounded as 
\begin{align*}
\calGhat & \leq C M \sqrt{L \frac{S}{n} \log(n L (\Rop \vee 1) (\max_\ell \mell +1)  )}  = O \left(   M \sqrt{L \frac{S}{n} \log(n)}\right),
\end{align*}
by the Dudley integral, $\bar{R}(\calGhat) \lesssim \int_0^M \sqrt{\frac{\log(\calN(\calGhat, \|\cdot\|_\infty,\epsilon)) }{n}} \dd \epsilon$,
where $C$ is a universal constant.

\subsection{Near low rank condition on the weight matrix (Proof of Theorem \ref{thm:LowRankBound} and Corollary \ref{Cor:LowRankWeightBound}
)}
\label{sec:ProofNearLowRankWeight}

Here, we give proofs of Theorem \ref{thm:LowRankBound} and Corollary \ref{Cor:LowRankWeightBound}
which give a generalization error bound when the trained network has near low rank weight matrices $(\Well{\ell})_{\ell=1}^L$ (Assumption \ref{ass:Wlowrank}).
%derive the bound for a condition that the trained network has near low rank weight matrices $(\Well{\ell})_{\ell=1}^L$.

%\begin{Assumption}
%\label{ass:Wlowrank}
%Suppose that each $\Well{\ell}~(\ell = 1,\dots,L)$ in any $f \in \calFhat$ is near low rank, that is, there exists $\alpha > 1/2$ such that
%$$
%\mathrm{\sigma}_j(\Well{\ell}) \leq \Rop j^{-\alpha},
%$$
%where $\mathrm{\sigma}_j(W)$ is the $j$-th largest singular value of a matrix $W$ ($\mathrm{\sigma}_1(\Well{\ell}) \geq \mathrm{\sigma}_2(\Well{\ell}) \geq \dots \geq 0$).
%%$$
%%\mathrm{eig}(W_\ell)_j \leq \Rop j^{-\alpha}.
%%\begin{itemize}
%%\item Each $W_\ell$ is near low rank:
%%$$
%%\mathrm{eig}(W_\ell)_j \leq \Rop j^{-\alpha}.
%%$$
%%\end{align*}
%\end{Assumption}

Under Assumption \ref{ass:Wlowrank}, we can see that for any $1 \leq s \leq \min\{\mell,\melle{\ell + 1}\}$, we can approximate $\Well{\ell}$ by
a rank $s$ matrix $W'$ as 
\begin{align}
& \|\Well{\ell} - W'\|_{\op} \leq \Vup s^{-\alpha},~~\|W'\|_\op \leq \|\Well{\ell}\|_\op \label{eq:WellApprox1}\\
& \|\Well{\ell} - W'\|_{\F} \leq \frac{1}{\sqrt{2\alpha - 1}}\Vup (s-1)^{(1-\alpha)/2}. \label{eq:WellApprox2}
\end{align}
This can be checked by discarding the singular vectors corresponding to the singular values smaller than the $s$-th largest one.
This ensures that, for any $f \in \calFhat$, there exists $f' \in \calF$ such that 
it has width $\vecs = (s_1,\dots,s_L)$, weight matrix $\Wsharpell{\ell}$ with $\|\Wsharpell{\ell}\|_{2} \leq \Rop$, $\|\Wsharpell{\ell}\|_{\F} \leq \RF$
and 
\begin{equation}
\|f - f'\|_\infty \leq \sum_{\ell=1}^L \Vup \Rop^{L-1} s_\ell^{-\alpha} B_x.
\label{eq:ffprimeBound}
\end{equation}
This can be proved as follows. Let $f'(x) = G \circ (\Wsharpell{L} \eta(\cdot)) \circ \dots \circ (\Wsharpell{1} x) $
where $\Wsharpell{\ell}$ is a rank $s_\ell$ matrix that satisfies Eqs. \eqref{eq:WellApprox1} and \eqref{eq:WellApprox2} for $W'=\Wsharpell{\ell}$. 
Let $f_{\ell}(x) = G \circ (\Wsharpell{L} \eta(\cdot)) \circ \dots \circ (\Wsharpell{\ell+1} \eta(\cdot)) \circ (\Well{\ell} \eta(\cdot))
\circ \dots \circ (\Well{1}x)$ and $f_0(x) = f'$. Then, 
$
\|f - f'\|_\infty \leq \sum_{\ell=1}^L \|f_\ell - f_{\ell-1}\|_\infty.
$
We can see that $\|(\Well{\ell} \eta(\cdot))
\circ \dots \circ (\Well{1}x)\| \leq \prod_{k=1}^\ell \|\Well{k}\|_\op \|x\| \leq \Rop^\ell B_x$,
$ \|(\Wsharpell{\ell} \eta(\cdot)) \circ (\Well{\ell-1} \eta(\cdot))
\circ \dots \circ (\Well{1}x) - (\Well{\ell} \eta(\cdot)) \circ (\Well{\ell-1} \eta(\cdot))
\circ \dots \circ (\Well{1}x)\| \leq \|\Well{\ell} - \Wsharpell{\ell}\|_\op \|(\Well{\ell-1} \eta(\cdot))
\circ \dots \circ (\Well{1}x)\| \leq \Vup s_\ell^{-\alpha} \Rop^{\ell -1} B_x$.
This gives $\|f_\ell - f_\ell\|_\infty \leq \prod_{k=\ell+1}^L \|\Well{k}\|_\op \Vup s_\ell^{-\alpha} \Rop^{\ell -1} B_x
\leq \Rop^{L-1} \Vup s_\ell^{-\alpha} B_x$. Finally, by summing up this from $\ell=1$ to $\ell=L$, we obtain \Eqref{eq:ffprimeBound}.

In particular, for any $\epsilon > 0$, by setting $s'_\ell = s'_\ell(\epsilon) =\min\{ \lceil (\epsilon/(L\Vup\Rop^{L-1} B_x))^{-1/\alpha} \rceil, \mell \wedge \melle{\ell+1}\}$ for all $\ell$, then $\|f - f'\|_\infty \leq \epsilon$.
This indicates that, by Lemma \ref{lemm:CovNPhi2}, the covering entropy of $\calFhat$ is bounded by
\begin{align}
& \log(\calN(\calFhat, \|\cdot\|_\infty,\epsilon)) \leq 
\left(\sum_{\ell = 1}^L (\melle{\ell} + \melle{\ell+1}) s'_\ell(\epsilon/2) \right)  [ \log(\epsilon^{-1} ) + 2 L \log(2 L (\Rop \vee 1) (\max_\ell \mell +1)) ] 
%\notag
\label{eq:FhatlowrankMetricEntropy}
\\
& 
\leq 
2 (\sum_{\ell=1}^L m_\ell)  (2 L\Vup\Rop^{L-1} B_x)^{1/\alpha} \epsilon^{-1/\alpha}   [ \log(\epsilon^{-1} ) + 2 L \log(2 L (\Rop \vee 1) (\max_\ell \mell +1)) ].
\notag
\end{align}
As the compressed network $\calGhat$, we may choose $\calGhat = \NN(\vecm,\vecs,\Rop,\RF)$ for $\vecs= (s_1,\dots,s_L)$ so that,
for all $f \in \calFhat$, there exists $g \in \calGhat$ satisfying 
$$
\|f - g\|_{\infty} \leq (\Vup\Rop^{L-1} B_x)\sum_{\ell=1}^L  s_\ell^{-\alpha}.
$$
Hence, we may set $\hat{r}^2 = [(\Vup\Rop^{L-1} B_x)\sum_{\ell=1}^L  s_\ell^{-\alpha}]^2$.
In this case, the covering number of $\calGhat$ is bounded as \eqref{eq:FhatlowrankMetricEntropy} by replacing 
$s'_\ell$ with $s_\ell$.

Therefore, Lemma \ref{lemm:CoveringToRad2} gives that 
$$
r_*^2 \leq 
C \frac{ (M +1 )(S_1 + 1 + S_2\log(n)) + M t}{n} \vee
M^{\frac{2\alpha - 1}{2\alpha + 1}}  \left(\frac{S_3}{n}\right)^{\frac{2\alpha}{1+2\alpha}}, 
$$
where 
\begin{align*}
&S_1 = \sum_{\ell = 1}^L s_\ell (\mell + \melle{\ell + 1}), \\
&S_2 = L S_1\log(L (\Rop \vee 1) (\max_\ell \mell +1) ), \\
& S_3 = (\sum_{\ell=1}^L m_\ell) (2 L\Vup\Rop^{L-1} B_x)^{1/\alpha}   [ \log(n ) + 2L \log(2 L (\Rop \vee 1) (\max_\ell \mell +1)) ],
\end{align*}
where $q = 1/2\alpha$ was used.
This indicates that, if $\alpha > 1/2$ is large (in other words, each weight matrix is close to rank $1$), then 
the local Rademacher complexity can be small.
Actually, the bound is smaller than $\frac{\sum_{\ell=1}^L \mell \melle{\ell+1}}{n}$ because each rank $s_\ell$ must satisfy $s_\ell \leq \min\{ \mell, \melle{\ell+1}\}$.

Finally, we observe that 
\begin{align*}
\bar{R}_n(\calGhat) \leq & C M \sqrt{L \frac{\sum_{\ell=1}^L s_\ell (\mell + \melle{\ell + 1})}{n} \log(n L (\Rop \vee 1) (\max_\ell \mell +1)  )} ,
%\\ & \wedge C \frac{1}{1 - 1/2\alpha}   M^{1-1/2\alpha} \sqrt{L \frac{(\sum_{\ell=1}^L  \mell)  (2 L\Vup\Rop^{L-1} B_x)^{1/\alpha}   }{n} \log(n L (\Rop \vee 1) (\max_\ell \mell +1)  )} 
\end{align*}
by Lemma \ref{lemm:CovNPhi2} for $\calGhat = \NN(\vecm,\vecs,\Rop,\RF)$ and the Dudley integral \citep{Book:VanDerVaart:WeakConvergence}: $\bar{R}(\calGhat) \lesssim \int_0^M \sqrt{\frac{\log(\calN(\calGhat, \|\cdot\|_\infty,\epsilon)) }{n}} \dd \epsilon$.
This gives Theorem \ref{thm:LowRankBound}.

Corollary \ref{Cor:LowRankWeightBound} can be obtained by substituting $s_\ell = \min\{\melle{\ell},\melle{\ell+1}, \lceil L \Vup\Rop^{L-1} \Bx \rceil^{1/\alpha} \}$.

\subsection{Improved bound with Lipschitz continuity constraint} %Proof of Corollary \ref{Cor:LowRankWeightBound_Lip}}
\label{sec:ProofNearLowRankWeight_Lip}

%\begin{comment}
%\paragraph{Improved bound with Lipschitz continuity constraint}

In the generalization error bound of Theorem \ref{thm:LowRankBound} and Corollary \ref{Cor:LowRankWeightBound}, 
there appears $\Rop^L$. 
Even though $\Rop$ can be much smaller than $\RF$, the exponential dependency $\Rop^L$ could give lose bound as pointed out in \cite{pmlr-v80-arora18b}. We improve this exponential dependency by assuming the following condition.

\begin{Assumption}[Lipschitz continuity between layers: Interlayer cushion, interlayer smoothness \citep{pmlr-v80-arora18b}]
\label{eq:LischitzConstant}
For the trained network $\fhat =  G \circ (\Well{L} \eta (\cdot)) \circ \dots \circ (\Well{1} x)$,   
let $\phi_\ell(x) = \eta \circ (\Well{\ell-1} \eta( \cdot)) \circ \dots \circ (\Well{1} x)$ be the input to the $\ell$-th layer
and $M_{\ell,\ell'}(x) = (\Well{\ell'} \eta (\cdot)) \circ \cdots \circ (\Well{\ell} x)$ be the transformation from the $\ell$-th layer to $\ell'$-th layer.
Then, we assume that there exists $\kappa, \tau >0$ such that
$\tau \leq 1/(2 \kappa^2 L)$ and for any $\ell, \ell' \in [L]$, 
$$
\sum_{i=1}^n[M_{\ell,\ell'}(\phi_\ell(x_i) + \xi_i^{(1)} ) - M_{\ell,\ell'}(\phi_\ell(x_i) + \xi_i^{(1)} + \xi_i^{(2)})]^2 \leq \kappa^2
\left( \| \xi^{(1)}\| + \tau \| \xi^{(2)}\| \right)^2,
$$
for all $\xi^{(1)} = (\xi_1^{(1)},\dots,\xi_n^{(1)})^\top \in \Real^n$ and $\xi^{(2)} = (\xi_1^{(2)},\dots,\xi_n^{(2)})^\top \in \Real^n$.
\end{Assumption}

This assumption is a simplified version of the interlayer cushion and the interlayer smoothness introduced in \cite{pmlr-v80-arora18b}.
Although a trivial bound of $\kappa$ is $\kappa \leq \Rop^L$, the practically observed Lipschitz constant is usually much smaller.
Assumption \ref{eq:LischitzConstant} captures this point and gives better dependency on the depth $L$.
Actually, we can remove the exponential dependency on $\Rop$ as in the following corollary.
%Under this assumption, we can remove the exponential dependency on $\Rop$ as in the following statement.

\begin{Corollary}
\label{Cor:LowRankWeightBound_Lip}
%The statement ... holds for 
Under Assumptions \ref{ass:Wlowrank} and \ref{eq:LischitzConstant},  %using the same notation as Theorem \ref{thm:LowRankBound}, 
it holds that 
\begin{align*}
\Psi(\fhat) \leq \hat{\Psi}(\fhat) + C  \left[ M^{1-1/2\alpha} 
{\textstyle \sqrt{L \frac{(\sum_{\ell=1}^L  \mell)  (2 L \Vup \kappa^2 B_x)^{1/\alpha}   }{n} \log(n )} }
+ M^{\frac{2\alpha -1}{2\alpha + 1}} {A_2'}^{\frac{2\alpha}{2\alpha + 1}}  + \frac{1 + t M}{n} \right] 
\end{align*}
for ${A_2'} = L\frac{(\sum_{\ell=1}^L m_\ell) (2 L\Vup\kappa^2 B_x)^{1/\alpha}}{n}$
with probability $1-3e^{-t}$ for any $t > 1$ where $C$ is a constant depending on $\alpha$.
\end{Corollary}
This is almost same as Corollary \ref{Cor:LowRankWeightBound}, but the exponential dependency on $\Rop^L$ is replaced by the Lipschitz continuity $\kappa^2$.
%\end{comment}

\begin{proof}[Proof of Corollary \ref{Cor:LowRankWeightBound_Lip}]
Suppose that 
\begin{equation}
s_\ell \geq \min\left\{\melle{\ell},\melle{\ell+1},\lceil (4\kappa \Vup L)^{1/\alpha} \rceil \right\},
\label{eq:SellCondition_kappa}
\end{equation}
then we show that \Eqref{eq:ffprimeBound} can be replaced by 
\begin{equation}
\|f - f'\|_n \leq 4 \sum_{\ell=1}^L \Vup \kappa^2 s_\ell^{-\alpha} B_x,
\label{eq:ffprimeBound_kappa}
\end{equation}
where if $s_\ell = \min\{\mell,\melle{\ell+1}\}$, then $s_\ell^{-\alpha}$ term can be replaced by $0$ (which means no-compression in the layer $\ell$).
Once we obtain this evaluations, then the following argument is same as 
the proof of Theorem \ref{thm:LowRankBound} and Corollary \ref{Cor:LowRankWeightBound} (Sec. \ref{sec:ProofNearLowRankWeight}).

Let $\phi_\ell(x) = \eta \circ (\Well{\ell} \eta( \cdot)) \circ \dots \circ (\Well{1} x)$ 
and $\phihat_\ell(x) = \eta \circ (\Wsharpell{\ell} \eta( \cdot)) \circ \dots \circ (\Wsharpell{1} x)$
for $\ell = 1,\dots,L-1$,
and let $\phi_L(x) = G \circ (\Well{L} \eta( \cdot)) \circ \dots \circ (\Well{1} x)$
and $\phihat_L(x) = G \circ (\Wsharpell{L} \eta( \cdot)) \circ \dots \circ (\Wsharpell{1} x)$.
Let $C_B := 2 \kappa B_x$.
We will show that %that Here, assume that, for all $k \leq \ell -1$, it holds that
\begin{align*}
& \|\phi_k - \phihat_k\|_n \leq 2 \kappa \Vup C_B \left( \sum_{j=1}^k s_j^{-\alpha}\right),~~
\|\phihat_k\|_n \leq C_B,
\end{align*}
for all $k=1,\dots,L$.
We show this by inductive reasoning.
To do so, we assume that, for $k =1,\dots,\ell -1$, this is satisfied, and then we show this for $k = \ell$.
%Then, for all $k > \ell$, it holds that
Note that, for all $k$ with $k < \ell$, it holds that, for any $\ell' > k$, 
\begin{align*}
\|M_{k,\ell'} \circ \phihat_k - M_{k-1,\ell'}\phihat_{k-1}\|_n 
& \leq \kappa( \|\phihat_k - \eta (\Well{k} \phihat_{k-1})\|_n
+ \tau \|\eta (\Well{k} \phihat_{k-1}) - \phi_k\|_n)  \\
& \leq 
\kappa( \Vup s_k^{-\alpha} \|\phihat_{k-1}\|_n
+ \tau \kappa \| \phihat_{k-1} -  \phi_{k -1}\|_n) \\
& \leq 
\kappa\left( \Vup s_k^{-\alpha}  C_B
+ \tau \kappa  2 \kappa \Vup C_B \sum_{j \leq k -1} s_j^{-\alpha}\right) ~~(\text{by induction})\\
%& \leq 
%\kappa\left( \Vup s_\ell^{-\alpha} 2 C_B
%+ \tau \kappa 4 \kappa \Vup C_B \sum_{j \leq \ell -1} s_j^{-\alpha}\right) ~~(\text{by induction})\\
& \leq 
 \kappa \Vup C_B \left( s_k^{-\alpha} + \frac{1}{L}\sum_{j=1}^{k -1} s_j^{-\alpha} \right)~~~~(\text{by the assumption of $\tau$}).
\end{align*}
Note that the term $s_k^{-\alpha}$ can be replaced by 0 if $s_k = \min\{\melle{k},\melle{k+1}\}$ which corresponds to the full rank setting ($\Wsharpell{k} = \Well{k}$).
Therefore, we have that 
$$
\|\phi_\ell - \phihat_\ell\|_n \leq \sum_{j=1}^\ell \| M_{j,\ell} \circ \phihat_j -  M_{j-1,\ell} \circ \phihat_{j-1} \|_n
\leq  2 \kappa \Vup C_B \left( \sum_{k=1}^\ell s_k^{-\alpha}\right).
$$
Under the setting \eqref{eq:SellCondition_kappa}, this gives that
$$
\|\phi_\ell - \phihat_\ell\|_n \leq C_B/2.
$$
Finally, noting that $\|\phi_\ell\|_n \leq C_B /2$, we have 
$$
\|\phihat_\ell\|_n \leq C_B.
$$
This concludes the inductive reasoning.

Finally, noting that $f = \phi_L$ and $f' = \phihat_{L}$, we have \Eqref{eq:ffprimeBound_kappa}.
\end{proof}

\subsection{Near low rank condition on the covariance matrix (Proof of Theorem \ref{thm:GhatSpecBound} and Theorem \ref{thm:LowRankBoundCov})}
\label{sec:ProofNearLowRankCov}

Under Assumption \ref{ass:EmpiricalCovCond}, $\fhat$ can be compressed as follows.
Suppose that the network is compressed to smaller one upto the $\ell-1$-th layer and the weight matrix of the compressed one is denoted by 
$(\Wsharpell{k})_{k=1}^{\ell-1}$ where each $\Wsharpell{k}$ has size $\mhat{k+1} \times \mhat{k}$ (here, $\mhat{k} \leq \melle{k}$ is assumed),
and, in the $\ell-1$-th layer, $\Wsharpell{\ell-1}$ has size $\melle{\ell} \times \mhat{\ell-1}$.
The input to the $\ell$-th layer of the compressed networkis denoted by $\phihat_\ell(x) = \eta(\Wsharpell{\ell-1}\eta(\cdots \Wsharpell{1}x)\cdots)$.
Let $r_\ell^2 = \|\|\phi_\ell - \phihat_\ell\| \|_{\LPn}^2$ and $\Covsharpell{\ell} := \frac{1}{n}\sum_{i=1}^n \phihat_\ell(x_i) (\phihat_\ell(x_i))^\top$.
%We also assume that $\|\phihat_\ell(x)\| \leq \Rop^{\ell} \Bx$ for all $x \in \supp(\Px)$.
%Note that $\|\phi_\ell(x)\| \leq \Rop^\ell \Bx$ is satisfied by the norm constraint of the weight matrix.

For a given matrix $\Sigma$ and a precision $r^2 > 0$, the degrees of freedom\footnote{The definition is not dependent on $\ell$, but to make it clear that we are dealing with the $\ell$-th layer, we use the notation $N_\ell$.} are defined as
\begin{align}
N_\ell(r^2,\Sigma)  := \sum_{j=1}^{\mell} \frac{\sigma_j(\Sigma)}{\sigma_j(\Sigma) + r^2}.
\label{eq:DefNellSigma}
\end{align}
Since the degrees of freedom are monotonically increasing with respect to each $\sigma_j(\Sigma)$,
we can see that $N_\ell(r^2,\Sigma)  \geq N_\ell(r^2,\Sigma') $ if $\Sigma \succeq \Sigma'$.
Let\footnote{$\lceil x \rceil$ is the smallest integer that is not less than $x \in \Real$.}
$$
\mhat{\ell} = \lceil 5 N_\ell(r^2,\Covsharpell{\ell}) \log(80 N_\ell(r^2,\Covsharpell{\ell}))\rceil,
$$
then Proposition \ref{prop:ApproxFinite} tells that there exits a matrix $\hat{A}_\ell \in \mell \times \mhat{\ell}$ and $J_\ell \subset \{1,\dots,\mell\}^{\mhat{\ell}}$ such that 
\begin{align}
\|w^\top \phihat_\ell - w^\top \hat{A}_\ell \phihat_{\ell,J_\ell} \|^2_{\LPn} \leq 4 r^2 w^\top \Covsharpell{\ell} (\Covsharpell{\ell} + r^2 \Id)^{-1} w \leq 4r^2 \|w\|^2,
\label{eq:UniformCompressBound}
\end{align}
for any $w \in \Real^{\mell}$\footnote{For a vector $x \in \Real^m$ and index set $J \in \{1,\dots,m\}^l$, $x_J$ is the vector corresponding to the index set $J$, that is, $x_J = (x_j)_{j \in J}$.}, and 
the norm of $\hat{A}_\ell$ is bounded as 
$$
\|\hat{A}_\ell \|_{\op} \leq \sqrt{\frac{20}{3} \mell}. %\|\hat{A}_\ell \|_{\F} \leq 
$$

Next, we evaluate the degrees of freedom of $\Covsharpell{\ell}$. 
We bound this by using the degrees of freedom of $\Covhatell{\ell}$.
First note that $r_\ell^2 = \|\|\phi_\ell - \phihat_\ell\| \|_{\LPn}^2$.
Let $s \leq m$. For any matrix $U \in \Real^{\mell \times s}$ such that $U^\top U = \Id_s$, 
$\Tr[U^\top \Covsharpell{\ell} U] = P_n[{\phihat_\ell}^\top  UU^\top \phihat_\ell] 
%\leq P_n[\phi_\ell^\top U U^\top \phi_\ell] + P_n[(\phi_\ell - \phihat_\ell)^\top UU^\top (\phi_\ell - \phihat_\ell)] - \frac{1}{2} P_n[{\phihat_\ell}^\top  UU^\top \phihat_\ell]$ by the Cauchy-Schwartz inequality.
%Then, we have $P_n[{\phihat_\ell}^\top  UU^\top \phihat_\ell] 
\leq 2 \{P_n[\phi_\ell^\top U U^\top \phi_\ell] + P_n[(\phi_\ell - \phihat_\ell)^\top UU^\top (\phi_\ell - \phihat_\ell)]\}$
by the Cauchy-Schwartz inequality. 
Here, let $U$ be the matrix that gives $P_n[\phi_\ell^\top U U^\top \phi_\ell] = \sum_{j= \mell -s +1}^{\mell } \sigma_j(\Covhatell{\ell}) = \inf_{U: U^\top U = \Id_s} P_n[\phi_\ell^\top U U^\top \phi_\ell]$, then 
by noticing $P_n[(\phi_\ell - \phihat_\ell)^\top UU^\top (\phi_\ell - \phihat_\ell)] \leq P_n \|\phi_\ell - \phihat_\ell\|^2 \leq r_\ell^2$,
we obtain that 
$P_n[{\phihat_\ell}^\top  UU^\top \phihat_\ell] \leq 2 [\sum_{j= \mell -s +1}^{\mell } \sigma_j(\Covhatell{\ell})  + r_\ell^2]$.
Finally, by minimizing the left hand side with respect to $U$, we obtain that 
$$
\sum_{j= \mell -s +1}^{\mell } \sigma_j(\Covsharpell{\ell}) \leq 2 \left(\sum_{j= \mell -s +1}^{\mell } \sigma_j(\Covhatell{\ell}) + r_\ell^2\right).
$$
By setting $s=\mell -m +1$ for $1 \leq m \leq \mell$, this indicates that 
$$
\sum_{j= m}^{\mell } \sigma_j(\Covsharpell{\ell}) \leq 2 \left(\sum_{j= m}^{\mell } \sigma_j(\Covhatell{\ell}) + r_\ell^2\right) 
\leq  2 \left(\sum_{j= m}^{\mell } \mudotell{\ell}_j + r_\ell^2\right).
$$
Now, let $\mdot{\ell} := \min\{ j \in \{1,\dots,\mell \}\mid \mudotell{\ell}_j \leq r^2 \}$ (if $\mudotell{\ell}_{\mell} > r^2$, then we set $\mdot{\ell} =\mell$). Then, 
%Hence, we have that 
\begin{align*}
N_\ell(r^2,\Covsharpell{\ell}) & = \sum_{j=1}^{\mell} \frac{\sigma_j(\Covsharpell{\ell})}{\sigma_j(\Covsharpell{\ell}) + r^2}
\leq 
\mdot{\ell} + \sum_{j > \mdot{\ell}} \frac{\sigma_j(\Covsharpell{\ell})}{\sigma_j(\Covsharpell{\ell}) + r^2} \\
&
\leq 
\mdot{\ell} + \sum_{j > \mdot{\ell}} \frac{\sigma_j(\Covsharpell{\ell})}{r^2}
\leq 
\mdot{\ell} + \frac{2}{r^2} \left(r_\ell^2 + \sum_{j > \mdot{\ell}} \sigma_j(\Covhatell{\ell}) \right) \\
& 
\leq 
\mdot{\ell} +  \frac{2}{r^2} \left(r_\ell^2 +  U_0\frac{\dot{m}_{\ell}^{1-\beta}}{\beta - 1}\right) 
\leq 
\mdot{\ell} +  \frac{2}{r^2} \left(r_\ell^2 + \frac{\dot{m}_{\ell} r^2}{\beta - 1}\right) \\
& \leq 
\mdot{\ell} + \frac{2}{r^2 } \left(r_\ell^2 +\frac{\mdot{\ell} r^2}{\beta-1}  \right)
= \frac{\beta + 1}{\beta - 1} \mdot{\ell} + 2 \frac{r_\ell^2}{r^2 }.
%
%\sum_{j=1}^{\mell} \frac{\sigma_j(\Covdotell{\ell})}{\sigma_j(\Covdotell{\ell}) + r^2 - r_\ell^2}
%+ \frac{r_\ell^2}{r^2}.
%\Covsharpell{\ell}
\end{align*}
Now, let 
$$
r^2 = %\frac{\Rop^2}{8\RF^2 L} r_\ell^2 + 
\frac{1}{4} \tilde{r}^2_\ell,
$$
then 
$$
N_\ell(r^2,\Covsharpell{\ell}) \leq \frac{\beta + 1}{\beta - 1} \mdot{\ell} + 8 \frac{r_\ell^2}{\tilde{r}_\ell^2 }. %16 L \frac{\RF^2}{\Rop^2},
$$
%then the inequality \eqref{eq:UniformCompressBound} is satisfied. 
We define the right hand side as $\mhat{\ell}$:
$$ \mhat{\ell} := \frac{\beta + 1}{\beta - 1} \mdot{\ell} + 8 \frac{r_\ell^2}{\tilde{r}_\ell^2 }.$$

We have, by \Eqref{eq:UniformCompressBound}, 
%Therefore, for $\phihat_{\ell+1}(x) = \eta(W'^{\ell} \hat{A}_J \phihat_{\ell,J}(x))$, we have that 
%\begin{align*}
%%r_{\ell+1}^2 = 
%& \| \| \eta(W^{\ell} \hat{A}_J \phihat_{\ell,J}(x))  - \phihat_{\ell+1,j}\| \|_{\LPn} 
%\leq
%\Rop s_\ell^{-\alpha} \|\|\hat{A}_J \phihat_{\ell,J} \| \|_{\LPn}
%\leq 
%\end{align*}
%
%
\begin{align*}
%r_{\ell+1}^2 = 
&\sum_{j=1}^{\melle{\ell+1}} \| \eta (\Well{\ell}_{j,:}\phihat_{\ell}) - \eta (\Well{\ell}_{j,:} \hat{A}_\ell \phihat_{\ell,J_\ell}) \|_{\LPn}^2
\leq 4 \sum_{j=1}^{\melle{\ell+1}} \|\Well{\ell}_{j,:}\|^2 r^2 \\
& \leq 4 \RF^2 \times \frac{1}{4} \tilde{r}^2_\ell  %\left(\frac{1}{8\RF^2 L} r_\ell^2 + \frac{1}{4} \tilde{r}^2_\ell \right)
= %\frac{\Rop^2 r_\ell^2}{2L} + 
\RF^2 \tilde{r}^2_\ell.
\end{align*}
By the induction assumption, we also have that 
$$
\| \|\eta (\Well{\ell}\phihat_{\ell}) - \phi_{\ell+1}\| \|^2_{\LPn} =
\| \|\eta (\Well{\ell}\phihat_{\ell}) - \eta(\Well{\ell}\phi_{\ell})\| \|^2_{\LPn} \leq  
\|\Well{\ell}\|_{\op}^2 r_{\ell}^2\leq \Rop^2 r_{\ell}^2.
$$
%Finally, we reset $\phihat_{\ell +1}$ as $\phihat_{\ell+1} \leftarrow \mathrm{clip}_{\Rop^{\ell+1}\Bx}\phihat_{\ell+1}$ 
%where $ \mathrm{clip}_{R}(u) = u \min\{\frac{R}{\|u\|},1\}$. Then, it is ensured $\|\phihat_{\ell +1}(x)\|\leq \Rop^{\ell+1}\Bx$ for any $x \in \supp(\Px)$. This gives the same bound because 
%$\|\phi_{\ell+1}(x)\|\leq \Rop^{\ell+1}\Bx$ for all $x \in \supp(\Px)$.
%Moreover, since the $\mathrm{clip}_R$ operation is Lipschitz continuous, it does not increase the complexity of the compressed model.

Combining these inequalities, if we define 
$$
\phihat_{\ell+1} = \eta(\Well{\ell} \hat{A}_\ell \phihat_{\ell,J_\ell}(x)),
$$
and set $\Wsharpell{\ell} = \Well{\ell} \hat{A}_\ell$ and reset $\Wsharpell{\ell-1} \leftarrow \Wsharpell{\ell-1}_{J_\ell,:}$, 
then it holds that 
$$
\|\|\phi_{\ell + 1} - \phihat_{\ell + 1}\| \|_{\LPn} \leq r_{\ell + 1}
$$
where we let
$$
r_{\ell+1} = \Rop r_\ell + \RF \tilde{r}_\ell. % + C_\alpha s_\ell^{(1-2\alpha)/2}. 
%\sqrt{ \frac{\Rop^2}{2L}  r_{\ell}^2 + \RF^2 \tilde{r}^2_\ell} + C_\alpha s_\ell^{(1-2\alpha)/2}.
$$
Letting $r_0 = 0$, by an induction argument, we obtain 
$$
r_{\ell+1} \leq  \sum_{k=1}^\ell %\sqrt{\frac{L-c' k + 1}{L - c' \ell+1}} 
\Rop^{(\ell - k)} \RF \tilde{r}_k. %  + C_\alpha s_k^{(1-2\alpha)/2}) .
$$
%$\exp(1 + \sum_{j=k}^\ell \frac{1}{2(L-c' k + 1)}) \leq \exp( -\frac{1}{2}\log((L-c' \ell+ 1))/(L-c' k+ 1)) = \sqrt{\frac{L-c' k + 1}{L - c' \ell+1}}$. 
Finally, we obtain 
$$
\|\fhat - \fsharp\|_{\LPn}^2 \leq r_{L}^2 \leq %\sqrt{\frac{2}{1-c'}} 
\left[\sum_{k=1}^L  \Rop^{(L - k)} \RF \tilde{r}_\ell \right]^2,
$$
for a compressed network $\fsharp$ that has width $\mathbf{m^{\sharp}} = (\mhat{1},\dots,\mhat{L})$ with parameters
$\Wsharpell{\ell} = \Well{\ell}_{J_{\ell+1},:} \hat{A}_\ell$. 
Note that 
$$
\|\Wsharpell{\ell}\|_\op \leq \|\Well{\ell}_{J_{\ell+1},:}\|_\op \|\hat{A}_\ell\|_\op \leq  \Rop \sqrt{\frac{20}{3} \mell},~~~
\|\Wsharpell{\ell}\|_\F \leq \|\Well{\ell}_{J_{\ell+1},:}\|_\F \|\hat{A}_\ell\|_\op
\leq \RF \sqrt{\frac{20}{3} \mell}.
$$
% which has rank $s_\ell$ and has the clipping operation in each layer: %$(s_1,\dots,s_\ell)$ 
%$\fsharp \in \{f = G\circ \eta(\Well{L} \cdot) \circ \mathrm{clip}_{\Rop^{L-1} \Bx} \circ \dots \circ \mathrm{clip}_{\Rop \Bx} \circ \eta(\Well{1} x) \mid \|\Well{\ell}\|_{\op} \leq \sqrt{\frac{20}{3}\max_\ell \mell} \Rop,   \|\Well{\ell}\|_{\F} \leq \sqrt{\frac{20}{3}\max_\ell \mell} \RF,
%\mathrm{rank}(\Well{\ell}) \leq s_\ell \} = \calGhat$.
%\in \NN(\mathbf{m^{\sharp}}, \mathbf{s}, \sqrt{\frac{20}{3}\max_\ell \mell} \Rop,\sqrt{\frac{20}{3}\max_\ell \mell} R_F)$.
%Since the clipping operation does not increase the complexity of the model, the complexity of 
%Since $\calGhat $ is bounded by $\NN(\mathbf{m^{\sharp}}, \mathbf{s}, \sqrt{\frac{20}{3}\max_\ell \mell} \Rop,\sqrt{\frac{20}{3}\max_\ell \mell} R_F)$. Hence, we obtain
%$$
%\bar{R}(\calGhat) \leq C M \sqrt{L \frac{\sum_{\ell=1}^L s_\ell (\mhat{\ell} + \mhat{\ell+1}) }{n} \log(n L (\Rop \vee 1) (\max_\ell \mell +1)^2  )}.
%$$

%If we set $s_\ell = \min\{\mhat{\ell},\mhat{\ell+1}\}$, then the low rank approximation error $s_k^{(1-2\alpha)/2}$ disappears.

Therefore, if we set $\calGhat =\NN(\mathbf{m^{\sharp}}, \sqrt{\frac{20}{3}\max_\ell \mell} \Rop,\sqrt{\frac{20}{3}\max_\ell \mell} R_F) $, then there exists $\ghat \in \calGhat$ such that 
$$
\|\fhat - \ghat\|_{\LPn} \leq \hat{r}
$$
where
$$\hat{r}^2 = r_L^2.$$
%We can see that 
%$\calGhat = \NN(\mathbf{m^{\sharp}}, \sqrt{\frac{20}{3}\max_\ell \mell} \Rop,\sqrt{\frac{20}{3}\max_\ell \mell} R_F)$,
%and thus 
Moreover, applying Lemma \ref{lemm:CovNPhi2} to $\calGhat$ and the Dudley integral yields
$$
\bar{R}(\calGhat) \leq C M \sqrt{L \frac{\sum_{\ell=1}^L \mhat{\ell} \mhat{\ell+1} }{n} \log(n L (\Rop \vee 1) (\max_\ell \mell +1)^2  )}.
$$
%In this situation, the clipping operation is no longer required and we can see that $\fsharp \in \calGhat = \NN(\mathbf{m^{\sharp}}, \sqrt{\frac{20}{3}\max_\ell \mell} \Rop,\sqrt{\frac{20}{3}\max_\ell \mell} R_F)$.
This gives the assertion of Theorem \ref{thm:GhatSpecBound}.

Here, we consider a situation where $\RF^2 \tilde{r}_\ell^2 = c_0^2 \frac{r_\ell^2 \Rop^2}{\ell }$
for some constant $c_0 > 0$.
%, then it holds that 
Then it holds that 
$$
r_{\ell+1} = \Rop \left( 1 + \sqrt{\frac{c_0^2}{\ell}} \right) r_{\ell} 
=
\Rop^\ell \prod_{k=1}^\ell \left( 1 + \sqrt{\frac{c_0^2}{k}} \right) r_{1}
\leq \Rop^{\ell}\exp \left( c_0 (2 \sqrt{\ell} -1)\right) r_1.
$$
Therefore, by setting $C_L := (1\vee \Rop)^{L} \exp\left( c_0 (2 \sqrt{L} -1)\right)$, it holds that 
$r_\ell \leq C_L r_1$ for $\ell = 1,\dots,L$, in particular, we have 
$$
\hat{r}\leq C_L r_1.
$$
In this situation, the degrees of freedom are bounded by
$$
N_\ell(r^2,\Covsharpell{\ell}) \leq \mhat{\ell} = \frac{\beta + 1}{\beta - 1} \mdot{\ell} + 8 \ell \frac{\RF^2}{c_0^2 \Rop^2}.
$$
Next, we bound $\mdot{\ell}$. To do so, we should bound $\tilde{r}_\ell$ from below.
Note that 
\begin{align*}
\tilde{r}_\ell
&  =  \frac{ c_0\Rop}{ \RF}\frac{1}{\sqrt{\ell}}r_\ell 
= \frac{ c_0\Rop}{ \RF} \frac{1}{\sqrt{\ell}}\Rop^{\ell-1} \prod_{k=1}^{\ell-1} \left( 1 + \sqrt{\frac{c_0^2}{k}} \right) r_1
\geq 
 \frac{ c_0\Rop^\ell}{ \RF} \frac{1}{\sqrt{\ell}} \prod_{k=1}^{\ell-1} \left( 1 + \sqrt{\frac{c_0^2}{k}} \right) r_1 \\
& \geq 
 \frac{ c_0\Rop^\ell}{ \RF} \frac{1}{\sqrt{\ell}}  \left( 1 + \sum_{k=1}^{\ell-1} \sqrt{\frac{c_0^2}{k}} \right)r_1
\geq 
 \frac{ c_0\Rop^\ell}{ \RF} \frac{1}{\sqrt{\ell}}  \left( 1 + 2 c_0 (\sqrt{\ell} - 1) \right)r_1 \\
& \geq 
 \frac{ c_0\Rop^\ell}{ \RF} \left( \frac{1}{\sqrt{\ell}} +  2 c_0 (1 - \frac{1}{\sqrt{\ell}} )\right)r_1.
\end{align*}
Hence, 
\begin{align*}
\mdot{\ell} 
& 
\leq (\tilde{r}_\ell^2/(4U_0))^{-1/\beta} \leq (4U_0)^{1/\beta} 
\left\{ \frac{ c_0\Rop^\ell}{ \RF} \left[ \frac{1}{\sqrt{\ell}} +  2 c_0 (1 - \frac{1}{\sqrt{\ell}} )\right] r_1 \right\}^{-2/\beta} \\
& 
\leq
(4U_0)^{1/\beta} \left[ \frac{ \RF}{ c_0\Rop^\ell} \right]^{2/\beta}
\left(\frac{1}{2} \wedge  c_0 \right)^{-2/\beta} r_1^{-2/\beta}
\leq
 \left[ \frac{ 4U_0 \RF^2}{  (0.5 \wedge c_0)^2 c_0^2\Rop^{2\ell}} \right]^{1/\beta}
%\left(\frac{1}{2} \wedge  c_0 \right)^{-2/\beta}
r_1^{-2/\beta}.
%\left[\left( \frac{1}{\sqrt{\ell}} +  2 c_0 (1 - \frac{1}{\sqrt{\ell}} )\right) \right]^{-1/\beta} \\
\end{align*}

%This can be realized by setting $\tilde{r}_\ell = \Rop^\ell \exp(c_0\sqrt{\ell}/2) \tilde{r}_1$ for all $\ell = 1,\dots,L$.

%\subsubsection{ Scenario 2}
%\noindent\paragraph{Scenario 2} ~
%
%If $\calFhat$ satisfies Assumption \ref{ass:Wlowrank} and $\calGhat = \NN(\mathbf{m^{\sharp}}, \mathbf{s}, \sqrt{\frac{20}{3}\max_\ell \mell} \Rop,\sqrt{\frac{20}{3}\max_\ell \mell} R_F)$, then %$\fhat$ satisfies Assumption \ref{ass:EmpiricalCovCond}, then 
%$$
%r_*^2 \leq 
%C \frac{ (M +1 )(S_1 + 1 + S_2\log(n)) + M t}{n} \vee
%\frac{1}{n} \vee M^{\frac{2\alpha - 1}{2\alpha + 1}}  \left(\frac{S_3}{n}\right)^{\frac{2\alpha}{1+2\alpha}}, 
%$$
%where 
%\begin{align*}
%&S_1 = %\sum_{\ell = 1}^L s_\ell (\mell + \melle{\ell + 1}) \vee 
%\sum_{\ell=1}^L \mhat{\ell} \mhat{\ell+1}, \\
%&S_2 = L S_1\log(L (\Rop \vee 1) (\max_\ell \mell +1)^2 ), \\
%& S_3 = (\sum_{\ell=1}^L m_\ell) (2 L\Vup\Rop^{L-1} B_x)^{1/\alpha}   [ \log(n ) + 2L \log(2 L (\Rop \vee 1) (\max_\ell \mell +1)^2) ].
%\end{align*}
%

%Here, we assume that $\dot{W}^{(\ell)}$ is same as the original weight matrix $\Well{\ell}$: $\dot{W}^{(\ell)} = \Well{\ell}$.
%Then, as in Scenario 1, 
By Lemma \ref{lemm:CoveringToRad2}, we can evaluate $r_*^2$ for $\calFhat$ satisfying Assumption \ref{ass:Wlowrank} as  % and $\fhat$ satisfies Assumption \ref{ass:EmpiricalCovCond}, then 
$$
r_*^2 \leq 
C \frac{ (M +1 )(S_1 + 1 + S_2\log(n)) + M t}{n}  \vee M^{\frac{2\alpha - 1}{2\alpha + 1}}  \left(\frac{S_3}{n}\right)^{\frac{2\alpha}{1+2\alpha}}, 
$$
where 
\begin{align*}
&S_1 = %\sum_{\ell = 1}^L s_\ell (\mell + \melle{\ell + 1}) \vee 
\sum_{\ell=1}^L \mhat{\ell} \mhat{\ell+1}, \\
&S_2 = L S_1\log(L (\Rop \vee 1) (\max_\ell \mell +1)^2 ) = O\left(L \sum_{\ell=1}^L \mhat{\ell}\mhat{\ell+1} \log(n) \right), \\
& S_3 = (\sum_{\ell=1}^L m_\ell) (2 L\Vup\Rop^{L-1} B_x)^{1/\alpha}   [ \log(n ) + 2L \log(2 L (\Rop \vee 1) (\max_\ell \mell +1)^2) ] \\
&~~~~~ = O\left(L (\sum_{\ell=1}^L \melle{\ell}) (2 L\Vup\Rop^{L-1} B_x)^{1/\alpha}\log(n) \right), 
\end{align*}
%This indicates that, if $\alpha > 1$ is large (in other words, each weight matrix is close to rank $1$), then 
%the local complexity can be small.
%Actually, the rate is faster than $\frac{\sum_{\ell=1}^L \mell \melle{\ell+1}}{n}$ because each rank $s_\ell$ must satisfy $s_\ell \leq \min\{ \mell, \melle{\ell+1}\}$.

%The complexity of $\calGhat$ is also bounded as 
%$$
%\bar{R}(\calGhat) \leq C M \sqrt{L \frac{\sum_{\ell=1}^L \mhat{\ell}\mhat{\ell+1} }{n} \log(n L (\Rop \vee 1) (\max_\ell \mell +1)^2  )}.
%$$
%
%With a bit correction, we see that all terms $\melle{\ell} \melle{\ell+1}$ can be replaced by $s_\ell (\mhat{\ell}+ \mhat{\ell+1})$,
%which could give sharper bound.

Then, the overall generalization error is upper bounded by
$$
\Psi(\fhat) \leq \Psihat(\fhat) 
+ C \left[ r_*^2 + %+ \frac{S_1 + S_2\log(n)}{n} + 
\sqrt{\frac{S_3}{n} \hat{r}^{2(1-1/2\alpha)}} + \sqrt{(M^2 + \hat{r}^2)L \frac{\sum_{\ell=1}^L  \mhat{\ell} \mhat{\ell+1}}{n} \log(n)} 
+ \frac{1 + Mt}{n}
\right],
$$
with probability $1 - 3e^{-t}$ for all $t \geq 1$.
%Then under the condition of Example 1', % (where $C_\alpha s_\ell^{(1-2\alpha)/2}$ is set to be 0 because we are not approximating $\Well{\ell}$), 
By letting $Q'_{L,\alpha,n} := L \frac{  (2 L\Vup\Rop^{L-1} B_x)^{1/\alpha}  \log(n)}{n}$ and assuming $\hat{r} \leq 1$, 
the second and third terms in $C[\cdot]$ is bounded by 
\begin{align*}
& \sqrt{Q'_{L,\alpha,n} (\sum_{\ell=1}^L \mell ) (C_L r_1)^{2(1-1/2\alpha)}} + 
C'M \sqrt{L \frac{ \sum_{\ell=1}^L \left[ \mdot{\ell}   + \ell \RF^2/(c_0^2 \Rop^2) \right]^2 }{n} \log(n)^3} \\
%\end{align*}
%Hence, by setting $\dot{r} = $
%
%\begin{align*}
\leq
&
\sqrt{Q'_{L,\alpha,n} (\sum_{\ell=1}^L \mell ) (C_Lr_1)^{2(1-1/2\alpha)}} + 
2 C' M \sqrt{L \frac{ L  \left[ \frac{ 4U_0 \RF^2}{  (0.5 \wedge c_0)^2 c_0^2(1\wedge\Rop)^{2L}} \right]^{2/\beta}
r_1^{-4/\beta}  + L^3 \RF^4/\Rop^4   }{n}\log(n)^3}.
\end{align*}
Hence, by setting $C_L r_1 =\left( \frac{\sum_{\ell=1}^L \mell}{L} \right)^{-\frac{1}{4/\beta + 2(1-1/2\alpha)}} $
which balances the first and the second terms, then 
$\hat{r} \leq C_L r_1 \leq 1$ and 
the right hand side is bounded by 
\begin{align*}
%& C(1+C') \sqrt{Q_{L,\alpha,n} (\sum_{\ell=1}^L \mell ) (C_L\dot{r})^{2(1-1/2\alpha)}} + 
%2 C' \sqrt{L \frac{ L (C_L \dot{r})^{-4/\beta}  + L^3\RF^4/\Rop^4}{n}\log(n)} \\
%\leq
& 
 \sqrt{Q'_{L,\alpha,n} (\sum_{\ell=1}^L \mell)^{\frac{4/\beta}{4/\beta + 2(1-1/2\alpha)}} 
L^{\frac{2(1 -1/2\alpha)}{4/\beta + 2(1-1/2\alpha)}}} \\
& + C M \sqrt{L \frac{    L^{\frac{2(1 -1/2\alpha)}{4/\beta + 2(1-1/2\alpha)}} 
(\sum_{\ell=1}^L \mell)^{\frac{4/\beta}{4/\beta + 2(1-1/2\alpha)} }  (C_L)^{4/\beta} \left[ \frac{ 4U_0 \RF^2}{  (0.5 \wedge c_0)^2 c_0^2
(1\wedge\Rop)^{2L} } \right]^{2/\beta} }{n} \log(n)^3} \\
& + C M  \frac{\RF^2}{\Rop^2}\sqrt{\frac{L^4}{n}\log(n)^3}.
\end{align*}
Finally, by setting $c_0 = 1/4$, we obtain the assertion for 
$$
Q_L = \left[ \frac{ 4U_0 \RF^2 (1\vee \Rop)^{L}\exp \left( c_0(2 \sqrt{L} -1)\right)}{  (0.5 \wedge c_0)^2 c_0^2
(1\wedge\Rop)^{2L} } \right]^{2/\beta}
\leq \left[ \frac{ 4U_0 \RF^2 (1\vee \Rop)^{L}\exp \left( \frac{1}{4} (2 \sqrt{L}-1)\right)}{  (0.25)^4
(1\wedge\Rop)^{2L} } \right]^{2/\beta}.
$$
This gives Theorem \ref{thm:LowRankBoundCov}.

\subsection{Improved bound of Theorem \ref{thm:LowRankBoundCov} with Lipschitz continuity constraint} %Proof of Corollary \ref{thm:LowRankBoundCov_Lip}} %\ref{}} %Cor:LowRankWeightBound_Lip}}

%As in the proof of Corollary  

%\paragraph{Improved bound with Lipschitz continuity constraint }

\label{sec:LowRankCovWeightBound_Lip}
Here, we again note that there appears $\Rop^L$ in $P_L$ and $Q_L$ in the bound of Theorem \ref{thm:LowRankBoundCov}.
This is due to a rough evaluation of the interlayer Lipschitz continuity.
We can reduce this exponential dependency under Assumption \ref{eq:LischitzConstant}. % as in the following.
\begin{Corollary}
\label{thm:LowRankBoundCov_Lip}
Assume Assumption \ref{eq:LischitzConstant} in addition to Assumptions \ref{ass:Wlowrank} and \ref{ass:EmpiricalCovCond}, 
then the bound in Theorem \ref{thm:LowRankBoundCov} holds 
for the following redefined $P_L$ and $Q_L$:
$$
P_L = (2 L\Vup\kappa^2 B_x)^{1/\alpha},~~~
\textstyle Q_L=\left[ \frac{ 4U_0 \RF^2 \exp \left( \frac{1}{4} (2 \sqrt{L} -1) \right)}{  (0.25)^4
 } \right]^{2/\beta},
$$
except that 
the term $M \frac{\RF^2L^2}{\Rop^2} \sqrt{\frac{\log(n)^3}{n}}$
is replaced by  $M \kappa^2 \RF^2L^2 \sqrt{\frac{\log(n)^3}{n}}$:
\begin{align*}
\Psi(\fhat) \leq \Psihat(\fhat)
&+ 
 C \Bigg[
\text{\small $M \sqrt{\frac{  [P_L \vee Q_L] L^{1 + \frac{\beta}{\frac{4\alpha}{(2\alpha -1)} + \beta}} }{n} 
%(\Rop^{L} C_\alpha C_1^{\frac{(1-2\alpha)}{2}})^{2(1-1/(2\alpha))}
 \left(\sum_{\ell=1}^L \mell\right)^{\frac{4/\beta}{4/\beta + 2(1-1/2\alpha)}}  \log(n)^3} $}
\\ &
\textstyle
+
M^{\frac{2\alpha -1}{2\alpha + 1}}\left(L P_L   \frac{\sum_{\ell=1}^L m_\ell}{n}\log(n )\right)^{\frac{2\alpha}{2\alpha + 1}}
+
%M \frac{\RF^2L^2}{\Rop^2} \sqrt{\frac{\log(n)^3}{n}}
M \kappa^2 \RF^2L^2 \sqrt{\frac{\log(n)^3}{n}}
+ \frac{1 + Mt}{n}
\Bigg].
\end{align*}
\end{Corollary}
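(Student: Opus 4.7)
The plan is to revisit the proof of Theorem \ref{thm:LowRankBoundCov} given in Appendix \ref{sec:ProofNearLowRankCov} and to systematically replace every appearance of the crude cumulative bound $\prod_{k}\|\Well{k}\|_{\op} \leq \Rop^{L}$ by the data-dependent interlayer Lipschitz constant $\kappa$ supplied by Assumption \ref{eq:LischitzConstant}. There are three separate places where $\Rop^L$ enters the final expression, and each can be handled by a localized substitution, after which the subsequent calculus of $r_*$ via Lemma \ref{lemm:CoveringToRad2} and the final balancing of $\hat r$ go through verbatim.

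First, the factor $\Rop^{L-1}$ inside $P_L$ comes exclusively from the covering-number estimate \eqref{eq:FhatlowrankMetricEntropy} of $\calFhat$, which is derived from the error-propagation inequality \eqref{eq:ffprimeBound}. Exactly as in the proof of Corollary \ref{Cor:LowRankWeightBound_Lip} (Appendix \ref{sec:ProofNearLowRankWeight_Lip}), I would replace the naive per-layer propagation $\|f_\ell-f_{\ell-1}\|_\infty \leq \Rop^{L-1}\Vup s_\ell^{-\alpha}\Bx$ by its data-dependent analogue $4\Vup\kappa^2 s_\ell^{-\alpha}\Bx$, which feeds through Lemma \ref{lemm:CoveringToRad2} to yield the redefined $P_L = (2L\Vup\kappa^2 \Bx)^{1/\alpha}$.

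Second, the exponential factor $(1\vee\Rop)^{L}/(1\wedge\Rop)^{2L}$ inside $Q_L$ is produced in Appendix \ref{sec:ProofNearLowRankCov} by the compression-error recursion $r_{\ell+1} \leq \Rop\, r_\ell + \RF\, \tilde r_\ell$ and the consequent bound $r_\ell \leq \Rop^{\ell-1}\exp(c_0(2\sqrt{\ell}-1))r_1$. The $\Rop$ in the recursion is only a crude one-step Lipschitz constant for $M_{\ell,L}$; under Assumption \ref{eq:LischitzConstant} the compression error in layer $k$ contributes at most $\kappa\RF\tilde r_k$ to $\|\fhat-\fsharp\|_{\LPn}$, so summing over $k$ gives $\hat r \leq \sum_{k=1}^L \kappa\RF\tilde r_k$ without any cumulative product. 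Tracking this back into the lower bound on $\mdot{\ell}$ eliminates both the $(1\vee\Rop)^L$ and the $(1\wedge\Rop)^{-2L}$ factors and yields the stated $Q_L=\bigl[4U_0\RF^2\exp(\tfrac{1}{4}(2\sqrt L-1))/(0.25)^4\bigr]^{2/\beta}$.

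Third, the correction term $M\RF^2L^2/\Rop^2\sqrt{\log(n)^3/n}$ in Theorem \ref{thm:LowRankBoundCov} originates from the lower bound $\mhat{\ell} \geq 8\ell\RF^2/(c_0^2\Rop^2)$, which in turn comes from the balance $\RF^2\tilde r_\ell^2 = c_0^2 r_\ell^2\Rop^2/\ell$. Under Assumption \ref{eq:LischitzConstant} the corresponding balance at the interlayer level uses $\kappa^{-1}$ in place of $\Rop$, yielding $\mhat{\ell}\geq 8\ell\kappa^2\RF^2/c_0^2$, and aggregating $\sqrt{\sum_\ell \mhat{\ell}\mhat{\ell+1}/n}$ as in the original proof then gives the replacement $M\kappa^2\RF^2L^2\sqrt{\log(n)^3/n}$. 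The main obstacle is the bookkeeping in the second step: Assumption \ref{eq:LischitzConstant} provides a joint two-scale Lipschitz bound on $M_{\ell,\ell'}$ under perturbations $\xi^{(1)}$ and $\xi^{(2)}$ of different magnitudes, and one must confirm that the compression scheme $\Wsharpell{\ell}=\Well{\ell}\hat A_\ell$ preserves this Lipschitz property at each layer, so that $\kappa$ rather than $\Rop^L$ controls the entire telescoping $\ell \mapsto \hat r$; once this is verified, the rest of Appendix \ref{sec:ProofNearLowRankCov} applies without change with the redefined $P_L$ and $Q_L$.
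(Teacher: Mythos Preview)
Your proposal is essentially correct and follows the same three-substitution strategy as the paper's proof: replace $\Rop^{L-1}$ in $S_3$ (hence $P_L$) via the argument of Corollary~\ref{Cor:LowRankWeightBound_Lip}; replace the $\Rop^\ell$ propagation in the compression-error recursion to obtain the new $C_L=\kappa\exp(c_0(2\sqrt{L}-1))$ and hence $Q_L$; and track the new ratio $r_\ell/\tilde r_\ell\lesssim\kappa\RF\sqrt{\ell}/c_0$ into $\mhat{\ell}$ to produce the $M\kappa^2\RF^2L^2\sqrt{\log(n)^3/n}$ term.

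One clarification on your second step: you write that ``one must confirm that the compression scheme $\Wsharpell{\ell}=\Well{\ell}\hat A_\ell$ preserves this Lipschitz property at each layer.'' This is not how the paper (or the proof of Corollary~\ref{Cor:LowRankWeightBound_Lip}) uses Assumption~\ref{eq:LischitzConstant}. The interlayer maps $M_{\ell,\ell'}$ in that assumption are always the \emph{original} network's maps; the telescoping is through terms of the form $M_{k,L}\circ\phihat_k - M_{k-1,L}\circ\phihat_{k-1}$, i.e.\ the original forward map applied to compressed features. Assumption~\ref{eq:LischitzConstant} is then invoked with $\xi^{(1)}$ carrying the accumulated drift $\phihat_{k-1}-\phi_{k-1}$ and $\xi^{(2)}$ carrying the fresh compression error at layer~$k$; the condition $\tau\le 1/(2\kappa^2 L)$ keeps the accumulated part from blowing up. So you need not (and should not) try to verify any Lipschitz property for the compressed layers themselves. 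Once this telescoping is set up, the paper's specific choice $\tilde r_\ell = \frac{1}{\sqrt\ell}\prod_{k=1}^{\ell-1}(1+c_0/\sqrt{k})\,r_1/\RF$ gives $r_\ell\lesssim\kappa\exp(c_0(2\sqrt\ell-1))r_1$ (no $\Rop^\ell$) and $r_\ell/\tilde r_\ell\lesssim\kappa\RF\sqrt\ell/c_0$, after which the remainder of Appendix~\ref{sec:ProofNearLowRankCov} carries over verbatim, and the $\kappa$ factors in $C_L$ and in $\mdot{\ell}$ cancel in $Q_L$.
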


%We compare our bound with the following norm based bounds; $O\left(\frac{\Rop^L }{\sqrt{n}}  \left(L \frac{R_{2\to 1}^{2/3}}{R_{2}^{2/3}} \right)^{3/2}\right)$ by \cite{bartlett2017spectrally} and 
%$O\left(\frac{ \left(1 + L \kappa^{\frac{4}{3}} R_{2\to 1}^{2/3} + \kappa^{\frac{2}{3}} L R_{1\to1}^{2/3}\right)^{3/2}}{\sqrt{n}}\right)$
%by \cite{WeiMa:DataDependent:NeurIPS2019}.
%Since our bound and their bounds are derived from different conditions,
%we cannot tell which is better.
%Here, we consider a special case where $\mell=m~(\forall \ell)$ and $\Well{\ell} = \frac{1}{m} \boldone \boldone^\top \in \Real^{m\times m}$
%which is an extreme case of low rank settings (note that $\Well{\ell}$ has rank 1).
%Then,  $\Rop = 1$, $\RF = 1$, $R_{2 \to 1} = \sqrt{m}$ and $R_{1 \to 1} = m$, and thus their bounds are $O(\sqrt{m/n})$ and $O(\sqrt{(m+m^2)/n})$
%respectively. 
%However, $\beta$ and $\alpha$ in our analysis can be arbitrary large in this situation, 
%so that our bound has much milder dependency on the width $m$.
%On the other hand, if the weight matrix has small norm and has no spectral decay (corresponding to small $\alpha$ and $\beta$), 
%then our bound can be looser than theirs.

\begin{proof}[Proof of Corollary \ref{thm:LowRankBoundCov_Lip}]
To show Corollary \ref{thm:LowRankBoundCov_Lip}, we set  
$$
\tilde{r}_\ell = \frac{1}{\sqrt{\ell}} \prod_{k=1}^{\ell -1}\left( 1 + \sqrt{\frac{c_0^2}{k}}\right) \frac{r_1}{\RF},
$$
where $c_0$ is a constant, and by the same argument as in the proof of Corollary \ref{Cor:LowRankWeightBound_Lip}, 
we can show that 
$$
r_\ell \leq 2 \kappa \sum_{k=1}^\ell \frac{1}{\sqrt{k}} \prod_{j=1}^{k -1}\left( 1 + \sqrt{\frac{c_0^2}{j}}\right) r_1.
$$
Then, through a cumbersome calculation, we have that %this gives that %Therefore, we can show that 
$$
\frac{r_\ell}{ \tilde{r}_\ell} \leq C \frac{\kappa \RF}{c_0} \sqrt{\ell},
$$
for a universal constant $C$. 
Moreover, we can show that $r_\ell$ can be bounded as %the right hand side can be bounded by
$$
r_\ell \leq 2 \kappa \frac{e^{c_0 (2 \sqrt{\ell + 1} -1)} - e^{c_0}}{c_0} r_1.
$$
This also gives
$$
r_L \leq C' \frac{\kappa}{c_0} \exp(c_0(2\sqrt{L} - 1)) r_1,
$$
for a universal constant $C'$.
Then, redefining $C_L = \kappa \exp(c_0(2 \sqrt{L} -1))$, we can apply the same argument as in the proof of Corollary \ref{Cor:LowRankWeightBound_Lip}.
Indeed, we can show 
\begin{align*}
\mdot{\ell} 
\lesssim
 \left[ \frac{ 4U_0 \RF^2}{  (0.5 \wedge c_0)^2 c_0^2 \kappa^{2}} \right]^{1/\beta},~~
\mhat{\ell} = \frac{\beta + 1}{\beta - 1} \mdot{\ell} + C^2 \ell \frac{\RF^2}{c_0^2}
%\left(\frac{1}{2} \wedge  c_0 \right)^{-2/\beta}
r_1^{-2/\beta}.
%\left[\left( \frac{1}{\sqrt{\ell}} +  2 c_0 (1 - \frac{1}{\sqrt{\ell}} )\right) \right]^{-1/\beta} \\
\end{align*}
From the above argument, if we set $\calGhat =\NN(\mathbf{m^{\sharp}}, \sqrt{\frac{20}{3}\max_\ell \mell} \Rop,\sqrt{\frac{20}{3}\max_\ell \mell} R_F) $, then there exists $\ghat \in \calGhat$ such that 
$$
\|\fhat - \ghat\|_{\LPn} \leq \hat{r}
$$
where
$$\hat{r}^2 = r_L^2.$$
Moreover, we can show  
$$
r_*^2 \leq 
C \frac{ (M +1 )(S_1 + 1 + S_2\log(n)) + M t}{n}  \vee M^{\frac{2\alpha - 1}{2\alpha + 1}}  \left(\frac{S_3}{n}\right)^{\frac{2\alpha}{1+2\alpha}}, 
$$
where 
\begin{align*}
&S_1 = %\sum_{\ell = 1}^L s_\ell (\mell + \melle{\ell + 1}) \vee 
\sum_{\ell=1}^L \mhat{\ell} \mhat{\ell+1}, \\
&S_2 = L S_1\log(L (\Rop \vee 1) (\max_\ell \mell +1)^2 ) = O\left(L \sum_{\ell=1}^L \mhat{\ell}\mhat{\ell+1} \log(n) \right), \\
& S_3 = (\sum_{\ell=1}^L m_\ell) (2 L\Vup\kappa^2 B_x)^{1/\alpha}   [ \log(n ) + 2L \log(2 L (\Rop \vee 1) (\max_\ell \mell +1)^2) ] \\
&~~~~~ = O\left(L (\sum_{\ell=1}^L \melle{\ell}) (2 L\Vup\kappa^2 B_x)^{1/\alpha}\log(n) \right).
\end{align*}
Here, to evaluate $S_3$, we used the argument in Sec. \ref{sec:ProofNearLowRankWeight_Lip} (proof of Corollary \ref{Cor:LowRankWeightBound_Lip}).

The remaining argument is the same as the proof of Theorem \ref{thm:GhatSpecBound} and Theorem \ref{thm:LowRankBoundCov} 
(Sec. \ref{sec:ProofNearLowRankCov}).
\end{proof}

%\label{sec:ProofNearLowRankCov}
%\subsection{Sparsity condition on the weight matrix}

%\section{Decomposition technique: Combination of norm based bound and parameter count bound}

%\section{Bounding the complexities of $\calGhat$}

%\subsection{Near low rank condition on the covariance and degrees of freedom}

%\section{Low rank property by $L_2$-regularization}

\section{Auxiliary lemmas}

In this section, we give several auxiliary lemmas that are used in the proof of the theorems.
These results are not new at all, but we explicitly present them for completeness.

\subsection{Covering number of deep network models}
\label{sec:CoveringNumberOfDeepNet}
Define the neural network with height $L$, width $m$, sparsity constraint $S$ and norm constraint $B$ as 
\begin{align*}
& \Phi(L,m,S,B) 
:= \{ G \circ (\Well{L} \eta( \cdot) + \bell{L}) \circ \dots  
\circ (\Well{1} x + \bell{1})
\mid \Well{L} \in \Real^{1 \times m},~\bell{L} \in \Real, 
  \\ %\notag \\ \mid %\| \Well{\ell}_{j,:}\| \leq & 
& 
\Well{1} \in \Real^{m \times d},~\bell{1} \in \Real^m,~
\Well{\ell} \in \Real^{m \times m},~\bell{\ell} \in \Real^m (1 < \ell < L), \\ & 
\sum\nolimits_{\ell=1}^L (\|\Well{\ell}\|_0+ \|\bell{\ell}\|_0) \leq S, %\\ &
\max_{\ell} \|\Well{\ell}\|_{\infty} \vee \|\bell{\ell}\|_\infty \leq B
\},
\end{align*}
where $\|\cdot\|_0$ is the $\ell_0$-norm of the matrix (the number of non-zero elements of the matrix)
and $\|\cdot\|_{\infty}$ is the $\ell_\infty$-norm of the matrix (maximum of the absolute values of the elements).

The following evaluation of the covering number of the model $\Phi(L,m,S,B)$
is shown by \cite{2019:Schmidt-Hieber:AS,suzuki2018adaptivity}.

\begin{Lemma}[Covering number evaluation \citep{2019:Schmidt-Hieber:AS,suzuki2018adaptivity}]
\label{lemm:CovNPhi}
The covering number of $\Phi(L,m,S,B)$ can be bounded by 
\begin{align*}
\log \calN(\Phi(L,m,S,B),\|\cdot\|_\infty,\delta) 
& \leq S \log(\delta^{-1} L (B \vee 1)^{L -1} (m+1)^{2L} ) \\
& \leq 2 S L \log( (B \vee 1)  (m+1) )
+
S  \log(\delta^{-1} L).
\end{align*}

\end{Lemma}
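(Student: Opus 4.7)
The plan is to build an $\|\cdot\|_\infty$-cover of $\Phi(L,m,S,B)$ by a two-stage combinatorial/metric discretization, and then translate a parameter-space perturbation into a function-space perturbation via a telescoping Lipschitz argument.

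First, I would enumerate the support patterns. Any $f\in\Phi(L,m,S,B)$ is specified by choosing at most $S$ positions among the total number $N_{\mathrm{tot}}$ of parameters, where $N_{\mathrm{tot}} \le L(m+1)^2$ (weights plus biases across all layers, with $d\le m$ absorbed into the bound $(m+1)^2$). The number of sparsity patterns is therefore at most $\binom{N_{\mathrm{tot}}}{S} \le N_{\mathrm{tot}}^S \le L^S (m+1)^{2S}$, which contributes $S\log(L(m+1)^{2})$ to the log-covering number.

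Second, for a fixed pattern, each of the $S$ active parameters lies in $[-B,B]$, so a grid of spacing $\varepsilon>0$ yields at most $(2B/\varepsilon)^S$ discretized parameter configurations, contributing $S\log(2B/\varepsilon)$. The key Lipschitz step is to show that if two networks $f,\tilde f\in\Phi(L,m,S,B)$ agree on support and differ coordinate-wise by at most $\varepsilon$, then $\|f-\tilde f\|_\infty \le \varepsilon \cdot L(B\vee 1)^{L-1}(m+1)$. I would prove this by a standard telescoping bound: write $f-\tilde f$ as a sum of $L$ terms, each replacing one layer's weights/biases, use 1-Lipschitzness of $\eta$ and the clipping $G$, and bound the pre-activations at every layer by $(B\vee 1)^{\ell}(m+1)^{\ell}$ times the input bound (here sparsity limits row sums, but using $\|W\|_\infty\le B$ and width $m$ is enough to get the stated form). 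Choosing $\varepsilon := \delta / [L(B\vee1)^{L-1}(m+1)]$ guarantees a $\delta$-cover in $\|\cdot\|_\infty$.

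Combining the two contributions gives
\begin{align*}
\log\calN(\Phi(L,m,S,B),\|\cdot\|_\infty,\delta)
&\le S\log(L(m+1)^{2}) + S\log\!\big(2B\,L(B\vee 1)^{L-1}(m+1)/\delta\big) \\
&\le S\log\!\big(\delta^{-1} L (B\vee 1)^{L-1}(m+1)^{2L}\big),
\end{align*}
where in the last step I absorb the constants by using $2B\le 2(B\vee 1)$ and the crude bound $(m+1)^{2}\cdot (m+1) \le (m+1)^{2L}$ for $L\ge 2$ (the $L=1$ case is trivial). The second displayed inequality in the lemma then follows from $\log(L(B\vee 1)^{L-1}(m+1)^{2L}/\delta) \le 2L\log((B\vee 1)(m+1)) + \log(L/\delta)$.

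The main obstacle is really only the telescoping Lipschitz estimate: one has to track how a coordinate-wise $\varepsilon$ perturbation in the weights propagates through $L$ layers, and one must be a little careful to use the $\ell_\infty$ bound $B$ together with the width $m$ (rather than an operator-norm bound, which is not assumed here) to control each layer's activation magnitude. Once that per-layer factor $(B\vee 1)(m+1)$ is identified, the combinatorial support count and the grid count combine cleanly to give the stated bound.
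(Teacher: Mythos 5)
Your approach is structurally the same as the paper's: telescoping over layers, bounding per-layer activation magnitudes, enumerating sparsity patterns, and then discretizing the surviving $S$ parameters on a grid. The combinatorial count is actually slightly tighter than what the paper uses (you bound the number of parameter slots by $L(m+1)^2$, whereas the paper crudely uses $(m+1)^L$).

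However, there is a genuine error in the stated Lipschitz estimate. You claim that a coordinate-wise perturbation of size $\varepsilon$ yields $\|f-\tilde f\|_\infty \le \varepsilon\, L(B\vee1)^{L-1}(m+1)$, but the correct factor carries an additional power of $(m+1)$ for each layer. Concretely, a perturbation in layer $k$ is multiplied both by the downstream Lipschitz constant $(Bm)^{L-k}$ of layers $k{+}1,\dots,L$ and by the magnitude of the layer-$k$ activations, which is $O\big((B\vee1)^{k-1}(m+1)^{k-1}\big)$; the $m$ from the row sum in layer $k$ contributes one more factor of $(m+1)$. Summing over $k$ gives $\varepsilon\, L\,(B\vee1)^{L-1}(m+1)^{L}$, not $\varepsilon\, L(B\vee1)^{L-1}(m+1)$. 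You can already see the problem for $L=2$: perturbing $W^{(1)}$ by $\varepsilon$ shifts the output by up to $B m \cdot m\varepsilon = Bm^2\varepsilon$, which dwarfs $2(B\vee1)(m+1)\varepsilon$ for large $m$. You do write the correct per-layer activation bound $(B\vee1)^\ell(m+1)^\ell$ a few lines earlier, so this reads like a slip when assembling the final formula, but as written the choice $\varepsilon = \delta/[L(B\vee1)^{L-1}(m+1)]$ is too coarse and the resulting grid would not in fact be a $\delta$-cover in $\|\cdot\|_\infty$. Replacing $(m+1)$ with $(m+1)^L$ in the Lipschitz constant and in $\varepsilon$ repairs the step and then matches the $(m+1)^{2L}$ appearing in the lemma (the $(m+1)^L$ from the grid resolution combines with an $(m+1)^L$-sized bound on the number of parameter slots, which is what the paper uses; with your tighter $L(m+1)^2$ count you get an even smaller bound that still implies the stated one).
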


\begin{proof}[Proof of Lemma \ref{lemm:CovNPhi}]

Given a network $f \in \Phi(L,m,S,B)$ expressed as 
$$
f(x) = G \circ (\Well{L} \eta( \cdot) + \bell{L}) \circ \dots  
\circ (\Well{1} x + \bell{1}),
$$
let 
$$
\calA_k(f)(x) = \eta \circ (\Well{k-1} \eta( \cdot) + \bell{k-1}) \circ \dots  \circ (\Well{1} x + \bell{1}),
$$
and 
$$
\calB_k(f)(x) = G \circ (\Well{L} \eta( \cdot) + \bell{L}) \circ \dots  \circ (\Well{k} \eta(x) + \bell{k}),
$$
for $k=2,\dots,L$.
Corresponding to the last and first layer, we define $\calB_{L+1}(f)(x) = x$ and $\calA_{1}(f)(x) = x$.
Then, it is easy to see that $f(x) = \calB_{k+1}(f) \circ (\Well{k} \cdot + \bell{k}) \circ \calA_{k}(f)(x)$.
Now, suppose that a pair of different two networks $f, g \in\Phi(L,m,S,B)$ given by 
$$
f(x) = G \circ (\Well{L} \eta( \cdot) + \bell{L}) \circ \dots  
\circ (\Well{1} x + \bell{1}),~~
g(x) = G \circ ({\Well{L}}' \eta( \cdot) + {\bell{L}}') \circ \dots  
\circ ({\Well{1}}' x + {\bell{1}}'),
$$
has a parameters with distance $\delta$: $\|\Well{\ell} - {\Well{\ell}}'\|_\infty \leq \delta$ and $\|\bell{\ell} - {\bell{\ell}}'\|_\infty \leq \delta$.
Now, not that $\|\calA_k(f) \|_\infty  \leq \max_j \|\Well{k-1}_{j,:}\|_1 \|\calA_{k-1}(f) \|_\infty + \|\bell{k-1}\|_\infty
\leq m B \|\calA_{k-1}(f) \|_\infty + B \leq (B \vee 1) (m +1 )\|\calA_{k-1}(f) \|_\infty 
\leq (B \vee 1)^{k-1} (m +1 )^{k-1}$,
and similarly the Lipshitz continuity of $\calB_k(f)$ with respect to $\|\cdot\|_\infty$-norm is bounded as 
$
(B m)^{L - k + 1}.
$
Then, it holds that 
\begin{align*}
& |f(x) - g(x)| \\
= & \left|\sum_{k=1}^L \calB_{k+1}(g) \circ (\Well{k} \cdot + \bell{k}) \circ \calA_{k}(f)(x)
 - \calB_{k+1}(g) \circ ({\Well{k}}' \cdot + {\bell{k}}') \circ \calA_{k}(f)(x) \right| \\
\leq & \sum_{k=1}^L  (B m)^{L - k } \|
(\Well{k} \cdot + \bell{k}) \circ \calA_{k}(f)(x) - ({\Well{k}}' \cdot + {\bell{k}}') \circ \calA_{k}(f)(x)
 \|_\infty \\
\leq &  \sum_{k=1}^L  (B m)^{L - k }  \delta [m  (B \vee 1)^{k-1} m^{k-1}  + 1] \\
\leq &  \sum_{k=1}^L  (B m)^{L - k }  \delta  (B \vee 1)^{k-1} m^{k}
\leq \delta L (B \vee 1)^{L -1} (m+1)^{L}.
\end{align*}
Thus, for a fixed sparsity pattern (the locations of non-zero parameters), 
the covering number is bounded by $ \left( \delta/ [L (B \vee 1)^{L -1} (m+1)^{L}] \right)^{-S}$.
There are the number of configurations of the sparsity pattern is bounded by 
${(m+1)^L \choose S} \leq  (m+1)^{LS}$. Thus, the covering number of the whole space $\Phi$ is bounded as
$$
%\Phi(L,m,S,B) \leq 
 (m+1)^{LS} \left\{ \delta/ [L (B \vee 1)^{L -1} (m+1)^{L}] \right\}^{-S}
 = [\delta^{-1} L (B \vee 1)^{L -1} (m+1)^{2L} ]^S,
$$
which gives the assertion.

\end{proof}

\begin{Lemma}[Covering number evaluation]
\label{lemm:CovNPhi2}
Let $\NN(\vecm,\Rop,\RF)$ be the set of neural networks with depth $\vecm(\melle{1},\dots,\melle{L})$, $\|\Well{\ell}\|_\infty\leq \Rop$ and  $\|\Well{\ell}\|_{\F}\leq R_F$.
The covering number of $\NN(\vecm,\Rop,\RF)$ can be bounded by 
\begin{align*}
& \log \calN(\NN(\vecm,\Rop,\RF),\|\cdot\|_\infty,\delta) \\
& \leq (\sum_{\ell = 1}^L \melle{\ell}\melle{\ell+1})  \log(\delta^{-1} L (\Rop \vee 1)^{L -1} (\max_\ell \mell +1)^{L} ) \\
& \leq (\sum_{\ell = 1}^L \melle{\ell}\melle{\ell+1})  \log(\delta^{-1} ) + L (\sum_{\ell = 1}^L \melle{\ell}\melle{\ell+1}) \log(L (\Rop \vee 1) (\max_\ell \mell +1) ).
%& \leq 2 S L \log( (B \vee 1)  (m+1) )
%+
%S  \log(\delta^{-1} L).
\end{align*}
Moreover, the set of networks with low rank weight matrices, $\NN(\vecm,\vecs,\Rop,\RF)$, has the following covering number bound:
\begin{align*}
& \log \calN(\NN(\vecm,\vecs,\Rop,\RF),\|\cdot\|_\infty,\delta) \\
& \leq \sum_{\ell = 1}^L s_\ell(\melle{\ell} + \melle{\ell+1})  \log(\delta^{-1} L (\Rop \vee 1)^{2L -1} (\max_\ell \mell +1)^{2L}.
\end{align*}

\end{Lemma}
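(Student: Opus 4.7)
The plan is to mirror the telescoping argument used in the proof of Lemma~\ref{lemm:CovNPhi}, but dropping the sparsity constraint and adapting it to handle low-rank factorizations. For any $f \in \NN(\vecm, \Rop, \RF)$, write $f = \calB_{k+1}(f) \circ (\Well{k}\cdot) \circ \calA_k(f)$ exactly as in the proof of Lemma~\ref{lemm:CovNPhi}, and bound $\|\calA_k(f)\|_\infty \leq (\Rop \vee 1)^{k-1}(\max_\ell \mell + 1)^{k-1}$ together with the $\|\cdot\|_\infty$-to-$\|\cdot\|_\infty$ Lipschitz constant of $\calB_{k+1}$ by $(\Rop\cdot \max_\ell \mell)^{L-k}$. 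Then, for two networks $f, g$ whose weight matrices differ entry-wise by at most $\delta$, the telescoping identity yields
\[
|f(x) - g(x)| \;\leq\; \delta\, L\, (\Rop\vee 1)^{L-1} (\max_\ell \mell + 1)^L
\]
uniformly in $x \in \supp(\Px)$.

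For the first bound, the total number of free real parameters is $\sum_{\ell=1}^L \melle{\ell}\melle{\ell+1}$, each lying in a bounded interval dictated by $\Rop$. An $\epsilon$-net (in $\ell_\infty$) of these parameters with $\epsilon = \delta/[L(\Rop\vee 1)^{L-1}(\max_\ell \mell + 1)^L]$ produces an $\delta$-cover in $\|\cdot\|_\infty$ of the function class, and its cardinality is bounded by $[\delta^{-1} L(\Rop\vee 1)^{L-1}(\max_\ell \mell + 1)^L]^{\sum_\ell \melle{\ell}\melle{\ell+1}}$, giving the claimed bound after taking logs and expanding.

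For the low-rank case, I would factor each rank-$s_\ell$ weight matrix as $\Well{\ell} = U^{(\ell)} V^{(\ell)}$ with $U^{(\ell)} \in \Real^{\melle{\ell+1} \times s_\ell}$ and $V^{(\ell)} \in \Real^{s_\ell \times \melle{\ell}}$, chosen so that entries of $U^{(\ell)}, V^{(\ell)}$ are controlled in terms of $\Rop$ (e.g., take $U^{(\ell)} = \bar U \Sigma^{1/2}$, $V^{(\ell)} = \Sigma^{1/2} \bar V^\top$ from the truncated SVD). Treating $(U^{(\ell)}, V^{(\ell)})$ as the ambient parameters, the layer $\Well{\ell}\eta(\cdot)$ is now effectively two consecutive matrix multiplications, so the telescoping argument runs over $2L$ steps instead of $L$. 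Each additional step contributes another factor of $(\Rop\vee 1)(\max_\ell \mell + 1)$ into the per-parameter perturbation bound, producing $(\Rop\vee 1)^{2L-1}(\max_\ell \mell + 1)^{2L}$. The parameter count becomes $\sum_\ell s_\ell(\melle{\ell}+\melle{\ell+1})$, giving the stated covering number.

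The main obstacle I foresee is the bookkeeping around norms in the low-rank factorization: one must pick the decomposition $W = UV$ so that (i) perturbations in $U, V$ translate back to a controllable $\|\cdot\|_\infty$ perturbation on $W$ (which uses the factor $\max_\ell \mell$ coming from sums of products), and (ii) the entries of $U, V$ themselves lie in an interval comparable to $\Rop$ so that a standard grid net applies. Apart from this, everything reduces to a straightforward repeat of the Lemma~\ref{lemm:CovNPhi} telescoping with a doubled number of ``effective layers.''
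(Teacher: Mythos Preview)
Your approach is correct and matches the paper's: for the first bound the paper simply notes $\NN(\vecm,\Rop,\RF) \subset \Phi(L, \max_\ell \mell, \sum_\ell \melle{\ell}\melle{\ell+1}, \Rop)$ (using $\|W\|_\infty \leq \|W\|_2$) and invokes Lemma~\ref{lemm:CovNPhi}, and for the low-rank bound it uses exactly your $UV$-factorization idea to view each rank-$s_\ell$ layer as two linear layers, embedding the class into $\Phi(2L, m, \sum_\ell s_\ell(\melle{\ell}+\melle{\ell+1}), B)$ and again invoking Lemma~\ref{lemm:CovNPhi}. Your norm-bookkeeping concern is resolved by the SVD split $U=\bar U\Sigma^{1/2}$, $V=\Sigma^{1/2}\bar V^\top$ you already suggested, which gives $\|U\|_\infty,\|V\|_\infty \leq \sqrt{\Rop}$ and hence $(\sqrt{\Rop}\vee 1)^{2L-1}\leq (\Rop\vee 1)^{2L-1}$.
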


\begin{proof}[Proof of Lemma \ref{lemm:CovNPhi2}]

Let $B = \Rop$, $m = \max_{\ell} \mell$, and $S = \sum_{\ell = 1}^{L} \melle{\ell}\melle{\ell+1}$, then 
%the argument is reduced to
we can see that $\NN(\vecm,\Rop,\RF)$ is a subset of $\Phi(L,m,S,B)$ because $\|W\|_\infty \leq \|W\|_{2}$.
Hence Lemma \ref{lemm:CovNPhi} gives the first assertion.  %, but we can omit the freedom of sparsity pattern $(m+1)^{LS}$.
As for the second one, 
we can easily check that the covering number of $\NN(\vecm,\vecs, \Rop,\RF)$ can be bounded by the one given in Lemma \ref{lemm:CovNPhi2} for $\Phi(2 L,m,S,B)$ with $S = \sum_{\ell = 1}^{L} s_\ell(\melle{\ell} + \melle{\ell+1})$.
Then, we obtain the second assertion.

\end{proof}

\subsection{Compression error bound for one layer}

The following proposition was shown by \cite{bach2017equivalence,2018arXiv180808558S}.
Let $\Sigmahat_{I,I'} \in \Real^{K \times H}$ for integers $K,H \in \Natural$ and a matrix $\Sigmahat_{I,I'} \in \Real^{K \times H}$
be a matrix $(\Sigmahat_{i,j})_{i \in I, j \in I'}$ for the index sets $I \in [\mell]^{K}$ and
$I' \in [\mell]^{H}$.
%such that $\Sigmahat_{I,I'} = (\Sigmahat_{i,j})_{i\in I,j\in I'}$.
Let $F = \{1,\dots,\mell\}$ be the full index set.
Let the degrees of freedom corresponding to $\Sigmahat$ be $\Nhat(\lambda) := N_\ell(\lambda,\Sigmahat)$ (see \Eqref{eq:DefNellSigma}) for $\lambda > 0$.

\begin{Proposition}
\label{prop:ApproxFinite}
%Suppose that the measure $Q_\ell$ is the counting measure, $Q_\ell(J) = |J|$ for $J \subset \{1,\dots,\mell\}$,
%and 
Let 
$U = (U_{j,l})_{j,l}$ is the 
orthogonal matrix that diagonalizes 
$\Sigmahat$, that is, $\Sigmahat= U\diag{\hat{\mu}_1,\dots,\hat{\mu}_{\mell}} U^\top$
for $\hat{\mu}_j \geq 0~(j=1,\dots,\mell)$.
Define %$q_\ell$ is given by
\begin{equation}
\tau'_j = %q_\ell(j) =  
\frac{1}{\Nhat(\lambda)}
\sum_{l=1}^{\mell} 
U_{j,l}^2
\frac{\hat{\mu}_l^{(\ell)}}{\hat{\mu}_l^{(\ell)} + \lambda}
= \frac{1}{\Nhat(\lambda)} [\Sigmahat (\Sigmahat + \lambda \Id)^{-1}]_{j,j}
~~(j \in \{1,\dots,\mell\}).
\label{eq:definitionWjp}
\end{equation}
For $\lambda >0$, %and any $1/2 > \delta >0$,
if  
$$
%m \geq 5 N_\ell(\lambda) \log(16 N_\ell(\lambda)/\delta),
m \geq 5 \Nhat(\lambda) \log(80 \Nhat(\lambda)),
$$
then there exist $v_1,\dots, v_m \in \{1,\dots,\mell\}$ such that, for every $\alpha \in \Real^{\mell}$,
 %,  $\tau_1,\dots, \tau_m >0$ such that 
\begin{align}
& \inf_{\beta \in \Real^m} 
\left\{
\left\| \alpha^\top \eta (\Fhat_{\ell -1}(\cdot)) - \sum_{j=1}^m \beta_j \eta (\Fhat_{\ell -1}(\cdot))_{v_j} \right\|_n^2 
+ m \lambda \|\beta\|_{\tau'}^2 \right\} 
 \leq 4 \lambda \alpha^\top \Sigmahat (\Sigmahat + \lambda \Id)^{-1} \alpha,
%\sup_{\alpha \in \Real^{\mell}: \|\alpha \| \leq R} ~~
%R^2, 
\label{eq:OptimizationFormulaCompressionBasic}
%& \sup_{\|f\|_{\calH_\ell} \leq R}  \inf_{\beta \in \Real^m : \|\beta\|_2^2 \leq \frac{4R^2}{m}} 
%\left\| f - \sum_{j=1}^m \beta_j \eta (\tau_j F_{\ell -1}(\cdot, v_j)) \right\|_{\Ltwo(P(X))}^2 \leq 4 \lambda R^2,
\end{align}
and $
 \sum_{j=1}^m {\tau_j'}^{-1} \leq \frac{5}{3} m \times \mell,
$
where $ \|\beta\|_{\tau'}^2 := \sum_{j=1}^m \beta_j^2 \tau'_j$.
Let $\tau := m \lambda \tau'$ and $\Id_\tau = \diag{\tau}$.
%In particular, the optimal solution $\hat{\beta}$ in the infimum satisfies $\|\hat{\beta}\|^2 \leq \frac{4\|\alpha\|^2}{m}$,
%and $\hat{\tau} :=(\hat{\beta}_j {\tau'}_{j}^{-1/2})_{j=1}^{\mell}$ satisfies 
%$$
%\|\hat{\tau}\|_1 \leq 2 \|\alpha\| \sqrt{(1 - 2\delta)^{-1}\mell}.
%$$
Then, 
$\hat{A} := \Sigmahat_{F,J}(\Sigmahat_{J,J} + \Id_\tau)^{-1}$ for $J=\{v_1,\dots,v_m\}$
satisfies 
$$
\|\hat{A}\|_{\op} \leq \sqrt{\frac{20}{3} \mell},
$$
and 
the optimal $\beta$ that achieves the infimum is given by
$
\hat{\beta} = \hat{A}^\top \alpha
$
for any $\alpha \in \Real^{\mell}$.

\end{Proposition}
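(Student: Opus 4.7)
The plan is to prove the proposition via leverage-score sampling followed by a matrix Chernoff concentration argument, in the spirit of \citet{bach2017equivalence} and related Nystr\"om-approximation work. I will draw $v_1,\dots,v_m$ i.i.d.\ with probabilities $\tau'_j$ (note $\sum_j\tau'_j=1$ by the definition \eqref{eq:definitionWjp}), and show that with strictly positive probability the sampled indices simultaneously satisfy the claimed approximation guarantee and the leverage-sum bound, so that existence of deterministic $v_k$'s follows by the probabilistic method. The inner infimum over $\beta$ can be solved explicitly by setting the gradient to zero, giving the optimizer $\hat\beta=(\Sigmahat_{J,J}+\Id_\tau)^{-1}\Sigmahat_{J,F}\alpha=\hat A^\top\alpha$; substituting back, the objective collapses to $\alpha^\top[\Sigmahat - \Sigmahat_{F,J}(\Sigmahat_{J,J}+\Id_\tau)^{-1}\Sigmahat_{J,F}]\alpha$, so the claim reduces to the positive-semidefinite Nystr\"om ridge bound
$$\Sigmahat - \Sigmahat_{F,J}(\Sigmahat_{J,J}+\Id_\tau)^{-1}\Sigmahat_{J,F} \preceq 4\lambda\,\Sigmahat(\Sigmahat+\lambda\Id)^{-1}.$$

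For this operator inequality I pass to the eigenbasis $\Sigmahat=U\diag{\hat\mu_1,\dots,\hat\mu_{\mell}}U^\top$ and introduce $\phi_j:=(\Sigmahat+\lambda\Id)^{-1/2}\Sigmahat^{1/2}U^\top e_j$. A direct rewriting of \eqref{eq:definitionWjp} gives $\tau'_j=\|\phi_j\|^2/\Nhat(\lambda)$, so the rank-one random matrix $\phi_v\phi_v^\top/\tau'_v$ has mean $\Sigmahat(\Sigmahat+\lambda\Id)^{-1}/\Nhat(\lambda)$ and operator norm at most $\Nhat(\lambda)$ deterministically. Tropp's matrix Chernoff inequality then guarantees that, under the hypothesis $m\ge 5\Nhat(\lambda)\log(80\Nhat(\lambda))$, the empirical mean $\frac1m\sum_{k=1}^m \phi_{v_k}\phi_{v_k}^\top/\tau'_{v_k}$ lies within a multiplicative factor $\tfrac12$ of its expectation, with failure probability strictly less than $\tfrac12$. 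A Schur-complement manipulation of the block matrix with diagonal blocks $\Sigmahat_{J,J}+\Id_\tau$ and $\Sigmahat$ and off-diagonal block $\Sigmahat_{J,F}$ then converts this operator-level concentration into the claimed $4\lambda$ ridge bound on the Nystr\"om residual.

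For the leverage-sum side, since $\EE[1/\tau'_v]=\sum_j \tau'_j/\tau'_j=\mell$, Markov's inequality yields $\sum_{k=1}^m 1/\tau'_{v_k}\le \tfrac53 m\mell$ with probability at least $\tfrac25$; a union bound then produces a joint realization satisfying both events with strictly positive probability, proving the existence statement. The operator-norm estimate on $\hat A$ is deterministic given this realization: from $\hat A\hat A^\top\preceq \Sigmahat_{F,J}\Id_\tau^{-2}\Sigmahat_{J,F}$ (using $\Sigmahat_{J,J}+\Id_\tau\succeq\Id_\tau$) and the Frobenius bound $\|\hat A\|_{\op}^2\le \Tr(\Sigmahat_{F,J}\Id_\tau^{-2}\Sigmahat_{J,F})$, the identity $\Id_\tau=m\lambda\,\diag{\tau'_{v_1},\dots,\tau'_{v_m}}$ together with the leverage-sum bound $\sum_k 1/\tau'_{v_k}\le \tfrac53 m\mell$ yields $\|\hat A\|_{\op}\le \sqrt{20\mell/3}$ after the routine simplification.

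The main technical obstacle will be the Schur-complement bookkeeping that converts the abstract matrix-Chernoff concentration into the explicit constant $4$ in $4\lambda\,\Sigmahat(\Sigmahat+\lambda\Id)^{-1}$ while keeping the clean oversampling condition $m\ge 5\Nhat(\lambda)\log(80\Nhat(\lambda))$. The cleanest route is to first establish, as the key high-probability event, the two-sided sandwich
$$\tfrac12\,\Sigmahat(\Sigmahat+\lambda\Id)^{-1}\preceq (\Sigmahat+\lambda\Id)^{-1/2}\Sigmahat_{F,J}\Id_\tau^{-1}\Sigmahat_{J,F}(\Sigmahat+\lambda\Id)^{-1/2}\preceq \tfrac32\,\Sigmahat(\Sigmahat+\lambda\Id)^{-1},$$
then invoke the Woodbury identity to rewrite $\Sigmahat-\Sigmahat_{F,J}(\Sigmahat_{J,J}+\Id_\tau)^{-1}\Sigmahat_{J,F}$ and pull out the ridge factor $\lambda$; the final constant $4$ emerges after carefully tracking these slack factors through the manipulation.
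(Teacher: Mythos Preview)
The paper does not supply its own proof of this proposition; it is quoted from \cite{bach2017equivalence,2018arXiv180808558S}. Your plan (leverage-score sampling plus matrix Chernoff plus a Schur-complement/Woodbury reduction) is exactly the strategy of those references, so the overall architecture is right. Two steps of your sketch, however, do not go through as written.

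\emph{Union bound.} You quote the matrix-Chernoff failure probability as merely ``strictly less than $\tfrac12$'' and the Markov failure as $\le\tfrac35$; these do not add to less than $1$, so the probabilistic method fails. The constants $5$ and $80$ in $m\ge 5\Nhat(\lambda)\log(80\Nhat(\lambda))$ are chosen precisely so that Bach's bound (failure $\le\delta$ when $m\ge 5\Nhat\log(16\Nhat/\delta)$) yields failure $\le\tfrac15$; then $\tfrac15+\tfrac35<1$ and a good realization exists. You must track this, not weaken it to ``$<\tfrac12$.''

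\emph{Operator norm of $\hat A$.} Your route via $\hat A\hat A^\top\preceq\Sigmahat_{F,J}\Id_\tau^{-2}\Sigmahat_{J,F}$ and a trace bound leaves you with $\sum_k(\Sigmahat^2)_{v_k,v_k}/(m\lambda\tau'_{v_k})^{2}$, and the leverage-sum bound controls $\sum_k 1/\tau'_{v_k}$, not $\sum_k 1/(\tau'_{v_k})^{2}$; there is no ``routine simplification'' that closes this gap. The correct argument uses the approximation inequality you have already established: from $m\lambda\|\hat\beta\|_{\tau'}^2\le 4\lambda\|\alpha\|^2$ for all $\alpha$, and $\hat\beta=\hat A^\top\alpha$, one gets $\hat A\,\diag{\tau'_{v_k}}\,\hat A^\top\preceq(4/m)\Id$, hence $\|\hat A\,\diag{\tau'_{v_k}}^{1/2}\|_{\op}\le 2/\sqrt m$. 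Then
\[
\|\hat A\|_{\op}\le\|\hat A\|_{\F}
\le\bigl\|\hat A\,\diag{\tau'_{v_k}}^{1/2}\bigr\|_{\op}\,\bigl\|\diag{\tau'_{v_k}}^{-1/2}\bigr\|_{\F}
\le\frac{2}{\sqrt m}\sqrt{\sum_{k=1}^m\frac{1}{\tau'_{v_k}}}
\le\sqrt{\frac{20}{3}\,\mell}.
\]
So the operator-norm bound rests on the ridge inequality, not on the crude estimate $(\Sigmahat_{J,J}+\Id_\tau)^{-2}\preceq\Id_\tau^{-2}$; reorder your argument accordingly.
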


\subsection{Concentration inequality}
\begin{Proposition}[Talagrand's Concentration Inequality \citep{Talagrand2,BousquetBenett}]
\label{prop:TalagrandConcent}
Let $\calG$ be a function class on $\calX$ that is separable with respect to $\infty$-norm, and 
$\{x_i\}_{i=1}^n$ be i.i.d. random variables with values in $\calX$.
Furthermore, let $B\geq 0$ and $U\geq 0$ be 
$B := \sup_{g \in \calG} \EE[(g-\EE[g])^2]$ and $U := \sup_{g \in \calG} \|g\|_{\infty}$,
then for $Z := \sup_{g\in \calG}\left|\frac{1}{n} \sum_{i=1}^n g(x_i) - \EE[g] \right|$, we have
\begin{align}
P\left( Z \geq 2 \EE[Z] + \sqrt{\frac{2 B t}{n}} + \frac{2 U t}{n}  \right) \leq e^{-t},
\label{eq:TalagrandIneq}
\end{align}
for all $t >  0$.
\end{Proposition}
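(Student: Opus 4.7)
The plan is to derive the stated two-sided inequality as a corollary of Bousquet's sharp one-sided version of Talagrand's concentration inequality, combined with an AM--GM trick to absorb the mean-dependent residual into the leading $2\,\EE[Z]$ term. Specifically, writing $S := \sup_{g\in\calG}\sum_{i=1}^n (g(x_i) - \EE g)$ for the one-sided supremum, Bousquet's inequality states
\begin{equation*}
P\!\left(S \geq \EE[S] + \sqrt{2t\bigl(nB + 2U\,\EE[S]\bigr)} + \tfrac{tU}{3}\right) \leq e^{-t}
\end{equation*}
for all $t>0$. I would first reduce the two-sided statement to the one-sided one by applying Bousquet's inequality to the symmetrized class $\calG \cup (-\calG)$, which has the same variance parameter $B$ and the same envelope $U$, and whose one-sided supremum coincides with $nZ$. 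Dividing through by $n$ then produces a tail bound on $Z$ involving the term $\sqrt{\tfrac{2t}{n}(B + 2U\,\EE[Z])}$.

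The second step is purely algebraic: use subadditivity of the square root,
\begin{equation*}
\sqrt{\tfrac{2t}{n}(B + 2U\,\EE[Z])} \leq \sqrt{\tfrac{2Bt}{n}} + 2\sqrt{\tfrac{tU}{n}\,\EE[Z]},
\end{equation*}
and absorb the mean-dependent term via AM--GM, $2\sqrt{\tfrac{tU}{n}\,\EE[Z]} \leq \EE[Z] + \tfrac{tU}{n}$. Collecting the pieces,
\begin{equation*}
Z \leq 2\,\EE[Z] + \sqrt{\tfrac{2Bt}{n}} + \tfrac{tU}{n} + \tfrac{tU}{3n} \leq 2\,\EE[Z] + \sqrt{\tfrac{2Bt}{n}} + \tfrac{2tU}{n},
\end{equation*}
on the event on which Bousquet's inequality holds, which is exactly the claimed bound (with the $4/3 \leq 2$ absorbing the extra $tU/(3n)$).

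The main obstacle is not this elementary post-processing but Bousquet's inequality itself, if one does not wish to treat it as a black box. The standard derivation controls $\log \EE\, e^{\lambda(S - \EE[S])}$ for $\lambda>0$ via the modified logarithmic Sobolev inequality for product measures (Ledoux's entropy method), producing a differential inequality for the cumulant generating function whose solution, combined with a Cram\'er--Chernoff inversion, yields the stated deviation. The delicate ingredient is the variance-type bound $\mathrm{Var}(S) \leq nB + 2U\,\EE[S]$ that makes the weak variance $B$ (rather than the envelope $U$) drive the Gaussian part of the tail. In the present paper one simply invokes the results of Talagrand and Bousquet directly, so the proof reduces to the one-paragraph reduction and the AM--GM manipulation above.
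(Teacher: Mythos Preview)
Your reduction is correct and standard. The paper, however, does not prove this proposition at all: it is stated as an auxiliary result with citations to Talagrand and Bousquet, with no accompanying argument. So there is no ``paper's own proof'' to compare against; your last paragraph already anticipates this correctly. The derivation you give---symmetrize the class to pass from the one-sided supremum to $nZ$, invoke Bousquet's bound, split the square root, and absorb $2\sqrt{tU\EE[Z]/n}$ into $\EE[Z] + tU/n$ via AM--GM---is exactly the usual way one extracts this particular form of the inequality from Bousquet's theorem, and the arithmetic (in particular the final $4/3 \le 2$) checks out.
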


\section{Numerical experiments}
\label{sec:NumericalExperiments}

In this section, we experimentally validate the assumptions we made in the theoretical analysis and investigate how large the intrinsic dimensionality becomes.
We use VGG-19 network trained on CIFAR-10.
The VGG-19 network have 16 convolution layers (named c0$,\dots,$ c15) and 3 fully connected layers (named l16$,\dots,$ l18).
The size of each filter in each convolution layer is $3 \times 3$.
Our theory does not support a convolution layer in a strict sense, but we adopt it as follows.
If the convolution layer in the $\ell$-th layer has the input channel size $\mell$ and the output channel size $\melle{\ell + 1}$
with the filter size $k \times k$ (in our case $k=3$), 
then the weight matrix is given as a 4-way tensor with the size $\melle{\ell +1} \times \mell \times k \times k$:
$\Well{\ell} \in \Real^{\melle{\ell +1} \times \mell \times k \times k}$.
Although a singular value of a 4-way tensor is not well-defined, we can perform a low rank approximation of the weight matrix by folding out the tensor to a large matrix. 
Actually, considering ``similarity'' between the filters as 
$$
K_{(\ell)} := \left( \sum_{c=1}^{\mell} \sum_{\kappa_1, \kappa_2=1}^{k,k} \Well{\ell}_{i, c, \kappa_1, \kappa_2}\Well{\ell}_{j, c, \kappa_1, \kappa_2}\right)_{i,j =1}^{ \melle{\ell+1}} \in \Real^{\melle{\ell+1} \times \melle{\ell+1}},
$$
then we can easily see that, for $s_\ell \in [\melle{\ell+1}]$, it holds that 
$$
\|\Well{\ell} - P^\top W' \|_{\F} = \sqrt{\sum_{j' = \dot{m}}^{\melle{\ell+1}} \sigma_{j'}(K_{(\ell)})},
%\sigma_j := \sigma_j(K^{(\ell)})
$$
where $P \in \Real^{s_\ell \times \melle{\ell+1}}$ is a projection matrix to the eigen-space corresponding to the $s_\ell$ largest singular values of $K_{(\ell)}$, $W' := P \Well{\ell} = ( \sum_{i'=1}^{\melle{\ell+1}} P_{i,i'} \Well{\ell}_{i',j,k,k'})_{i,j,k,k' \in [s_\ell] \times [\mell] \times [k] \times [k]}$ and the Frobenius norm $\|\cdot\|_{\F}$ of a tensor is the Euclidean norm as a vector.  %square root of sum of square of all its elements.
Therefore, we can use the eigenvalues of $K_{(\ell)}$ to evaluate the redundancy of parameters among filters.

As for the covariance matrix in a convolution layer, we also apply the same argument.
That is, the input to the $\ell$-th layer (which is a convolution layer) is given by
$
\phi_\ell(x) \in \Real^{\mell \times I \times J}
$
where $I$ and $J$ are the width and height of the input, and 
we define the following ``covariance'' matrix as a similarity measure between the channels:
$$
\Covhatell{\ell} = 
\left(\frac{1}{n} \sum_{i'=1}^n \sum_{1\leq c_1 \leq I} \sum_{1\leq c_2 \leq J} \phi_{\ell, (i, c_1,c_2)}(x_{i'})\phi_{\ell, (j, c_1,c_2)}(x_{i'})\right)_{i,j=1}^{\mell,\mell}
\in \Real^{\melle{\ell} \times \melle{\ell}}.
$$
%They are not directly comparable; thus, we numerically compare the intrinsic dimensionality of both with a VGG-19 network trained on CIFAR-10.
This also serve the redundancy measure and analogous argument to the main text can be applied.

\paragraph{Near low rank properties of the covariance matrix and weight matrix}

Here, we see plausibility of the near low rank assumptions we made in the analysis.
Figure \ref{fig:EigenValuesCov} presents the eigenvalues of covariance matrix $\Covhatell{\ell}$ in each of layer c2, c7, c12 and l16.
The eigenvalues are sorted in decreasing order.
We can see that the eigenvalue distributions are highly concentrated around 0 and the eigenvalues decrease quickly,
which indicates the near low rank property of $\Covhatell{\ell}$.

\begin{figure}[h]
\centering
\subcaptionbox{c2}{
\includegraphics[width=5cm]{./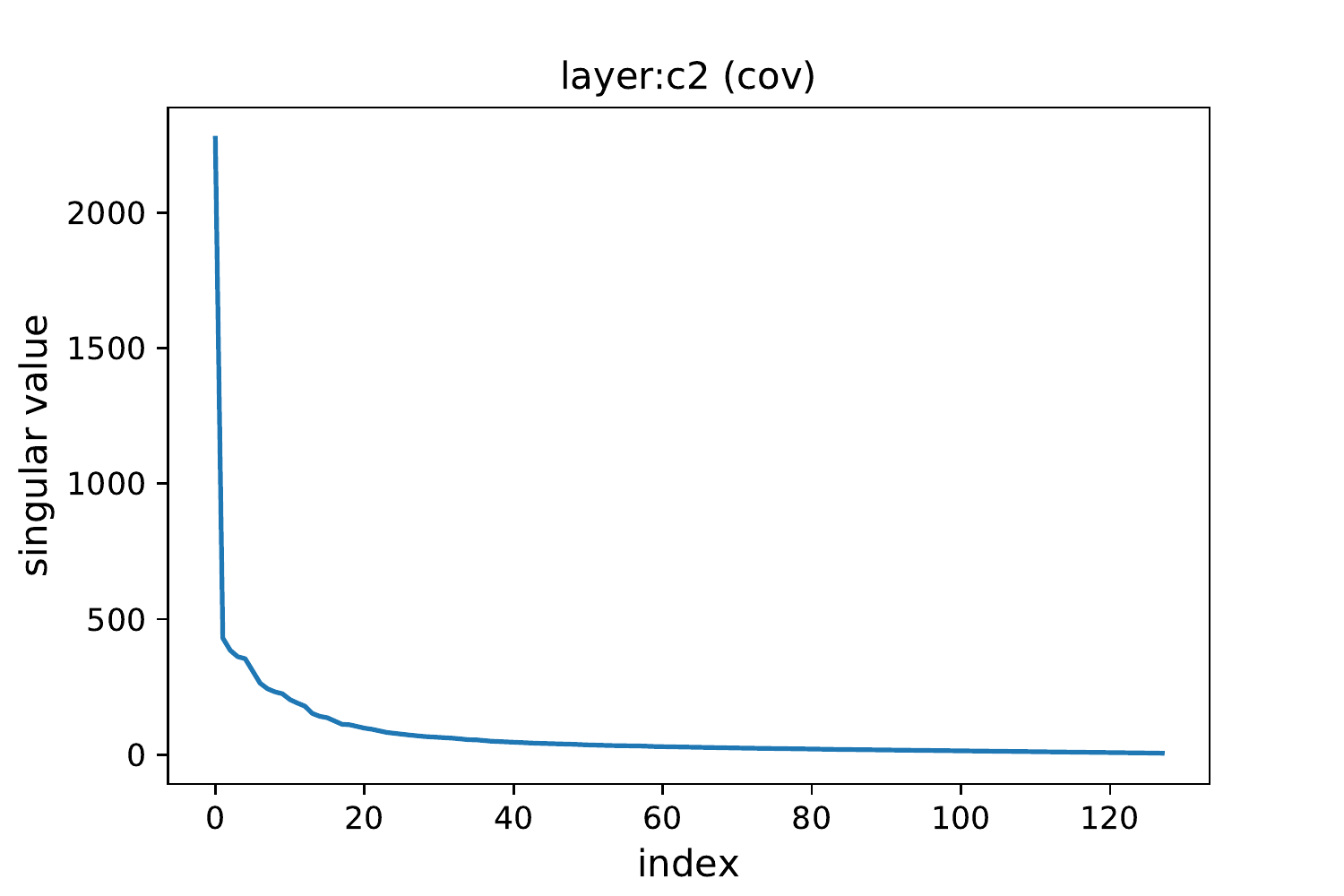}
}
\subcaptionbox{c7}{
\includegraphics[width=5cm]{./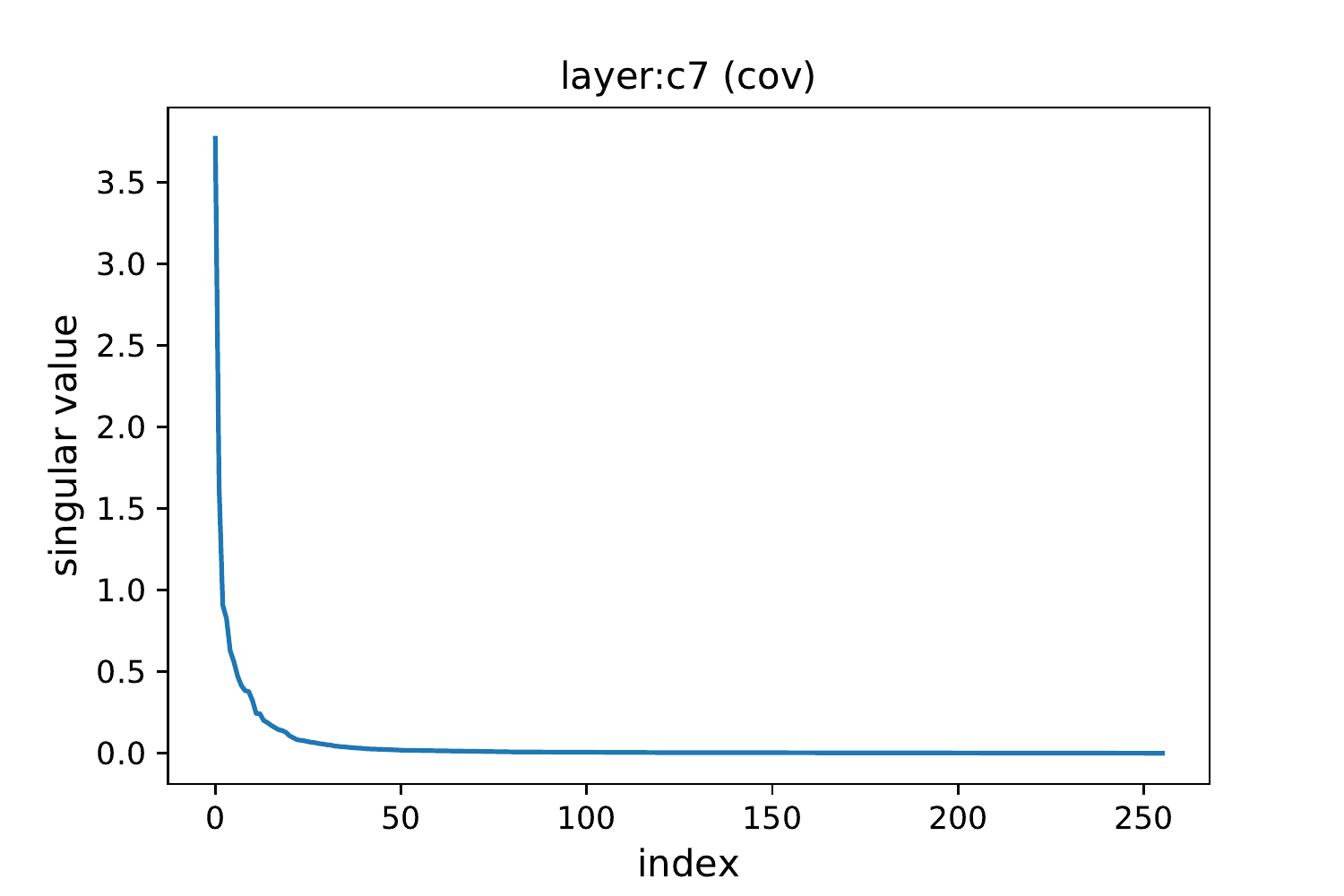}
}
\subcaptionbox{c12}{
\includegraphics[width=5cm]{./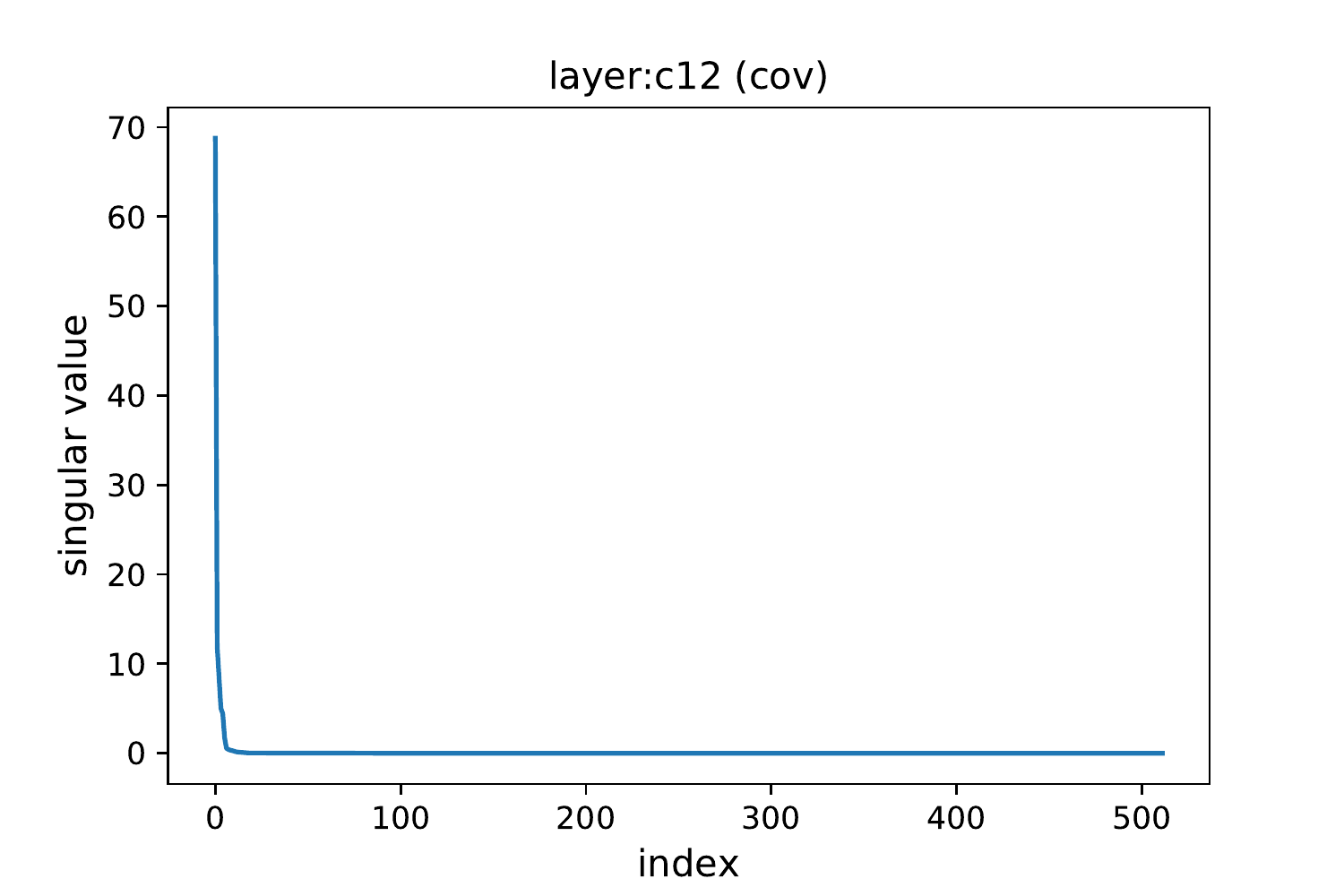}
}
\subcaptionbox{l16}{
\includegraphics[width=5cm]{./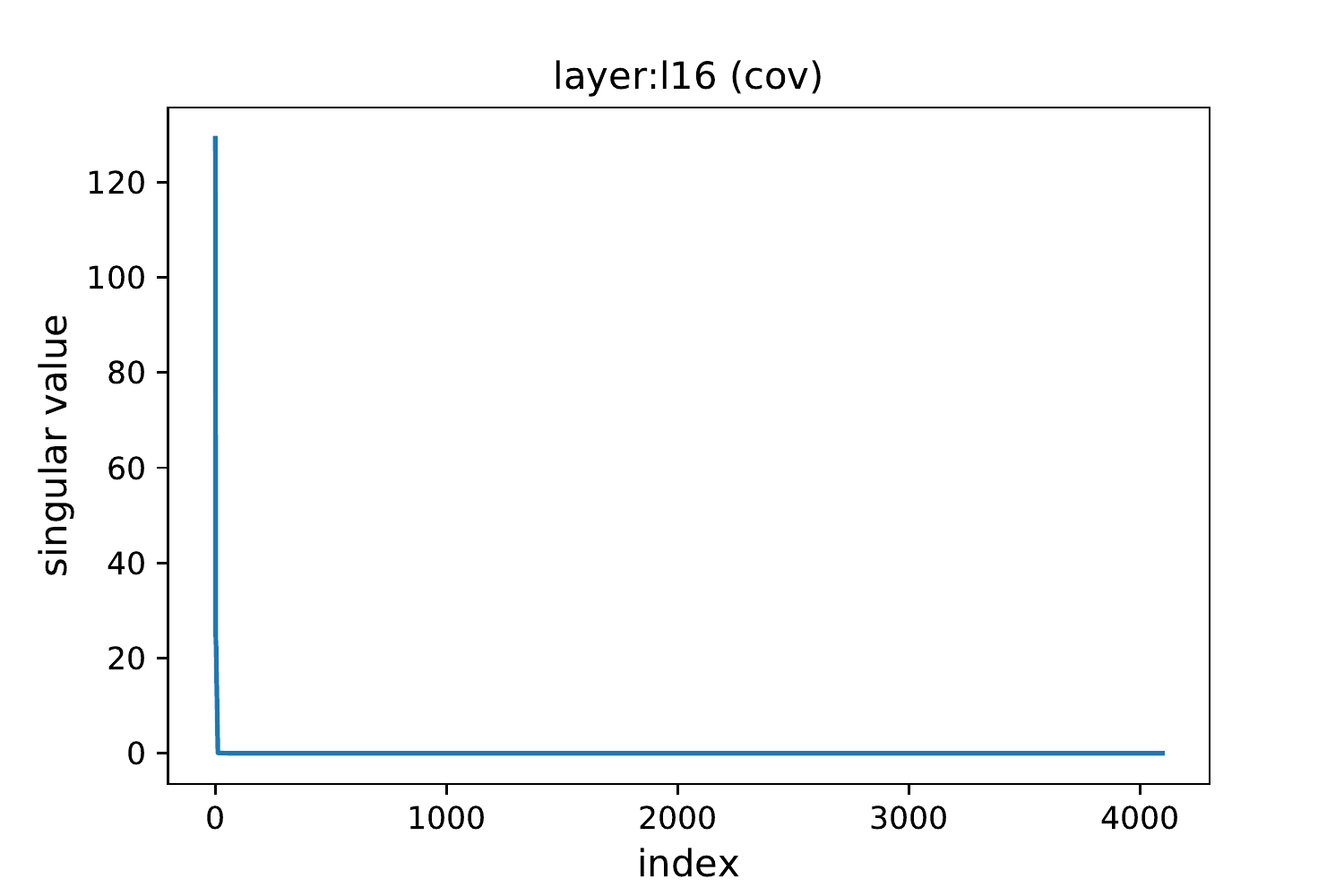}
}
\caption{Eigenvalue distribution of the covariance matrices}
\label{fig:EigenValuesCov}
\end{figure}

Next, we plot the eigenvalues of $K_{(\ell)}$ in Figure \ref{fig:EigenValuesWeight} for layer c2, c7, c12 and l16.
We again observe a rapid decrease of the eigenvalues.
These results justify our theoretical assumptions. 

\begin{figure}[H]
\centering
\subcaptionbox{c2}{
\includegraphics[width=5cm]{./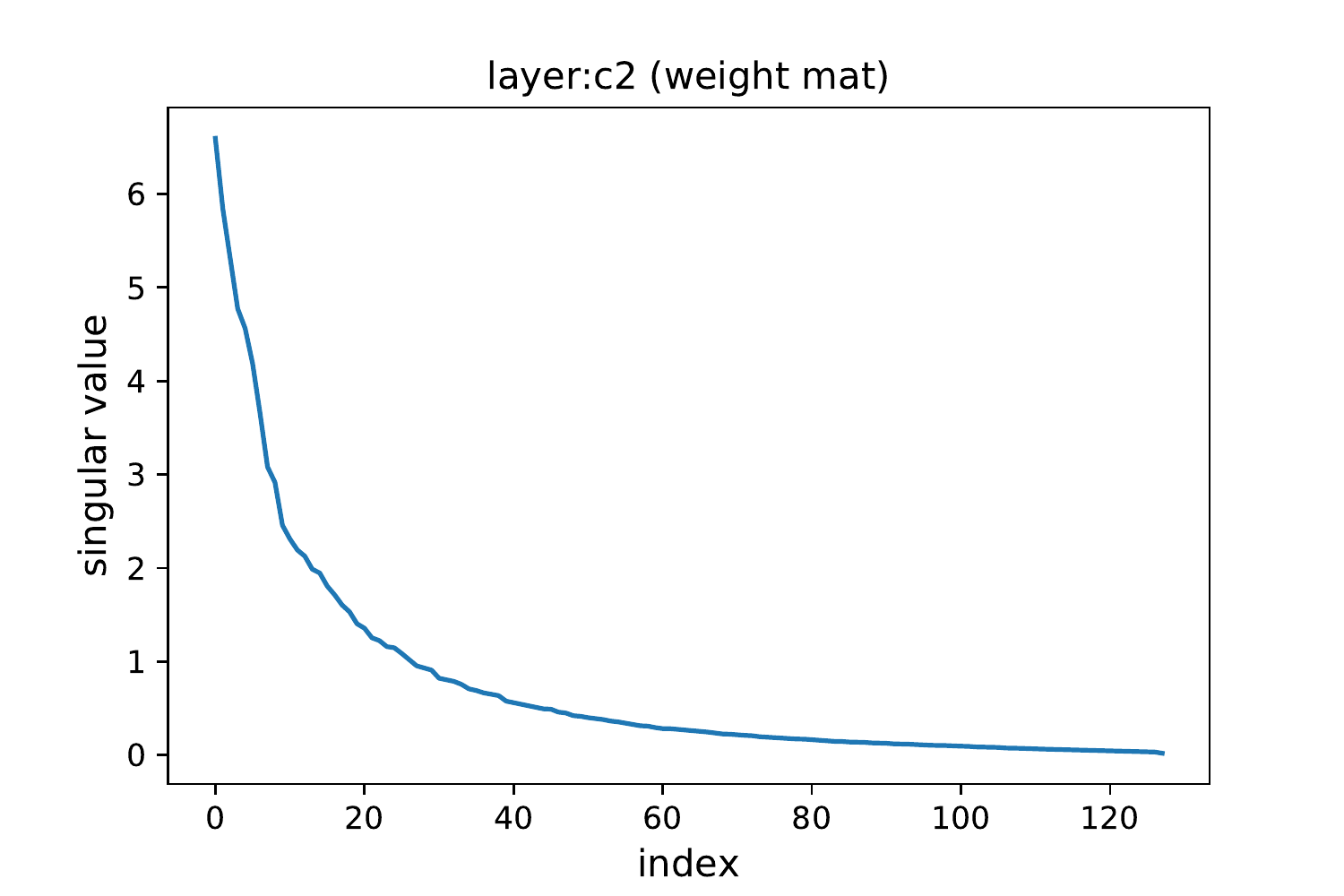}
}
\subcaptionbox{c7}{
\includegraphics[width=5cm]{./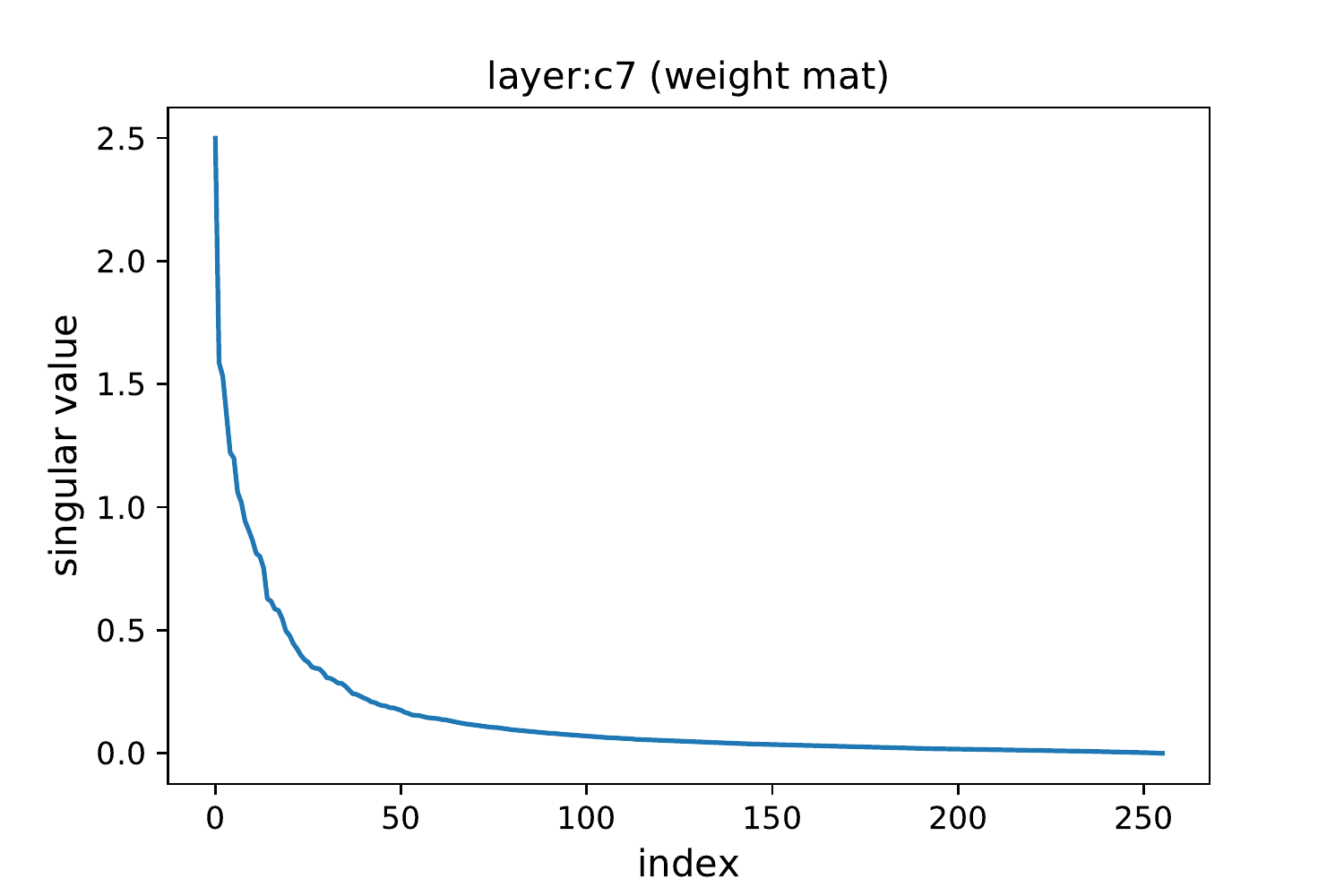}
}
\subcaptionbox{c12}{
\includegraphics[width=5cm]{./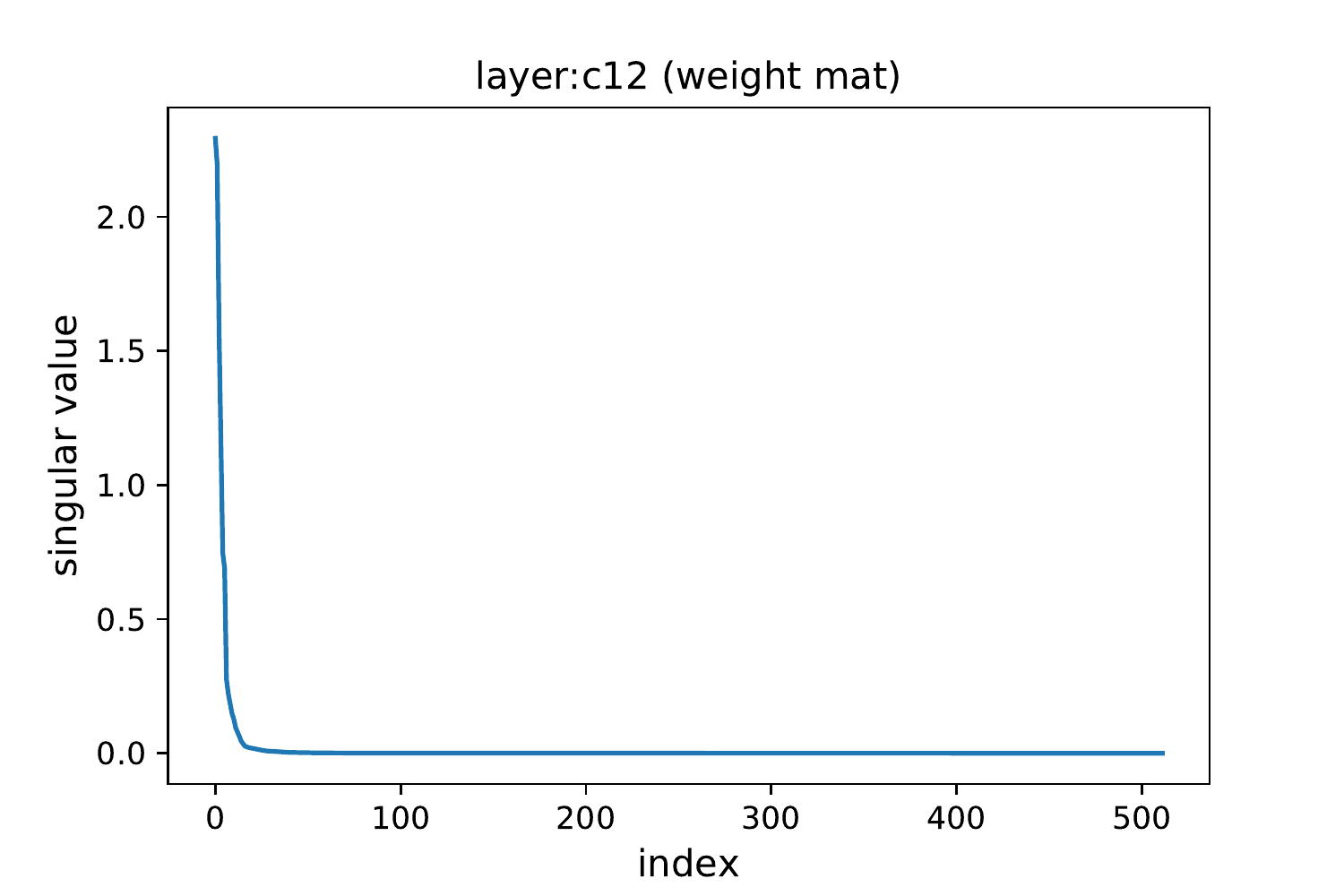}
}
\subcaptionbox{l16}{
\includegraphics[width=5cm]{./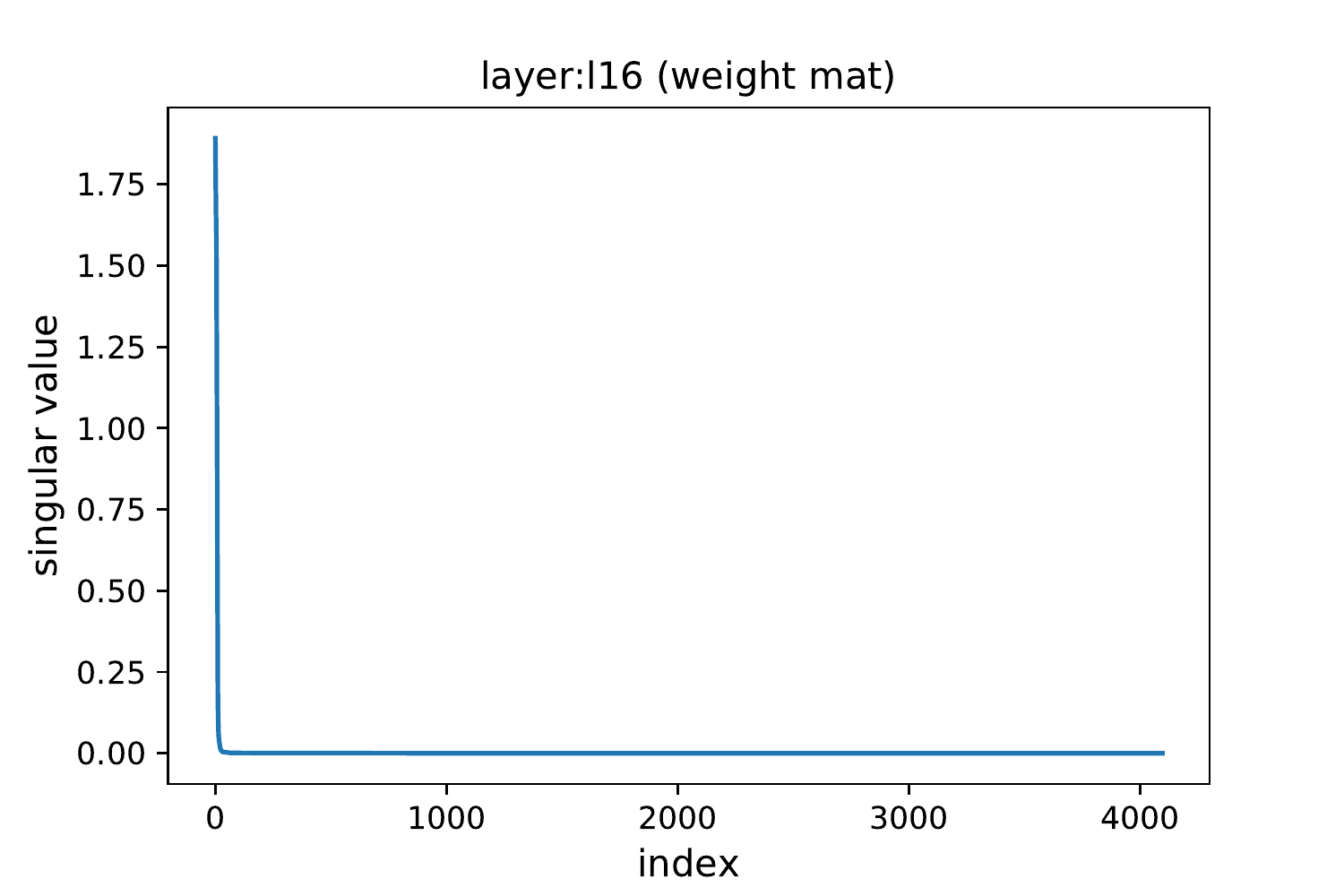}
}
\caption{Singular-value distribution of the weight matrices}
\label{fig:EigenValuesWeight}
\end{figure}

\paragraph{Intrinsic dimensionality}

Here, we calculate the intrinsic dimensionalities of the VGG-19 network.
For that purpose, we set a threshold parameter $\nu \in \{10^{-1},10^{-2},10^{-3}\}$ and compute $s_\ell := \#\{ j \in [\melle{\ell+1}] \mid \sigma_j(K_{(\ell)}) \geq \nu \times \max_{j'}\sigma_{j'}(K_{(\ell)}) \}$
and $\mdot{\ell} := \#\{ j \in [\mell] \mid \sigma_j(\Covhatell{\ell}) \geq \nu \times \max_{j'} \sigma_{j'}(\Covhatell{\ell}) \}$
(which corresponds to setting $\tilde{r}_\ell^2 = 4 \nu \times \max_{j'} \sigma_{j'}(\Covhatell{\ell})$).
Table \ref{tab:IntrinsicWidth} summarizes the effective ranks $\mdot{\ell},s_\ell$ of all layers.
We can see that the effective ranks can be much smaller than the channel sizes in several layers (especially layers from c8 to l18,), 
which indicates the network has high redundancy and its intrinsic dimensionality could be much smaller than the actual number of parameters.
%can be easily compressed to a much smaller network.
%if the threshold parameter $\nu$ is .

%Table \ref{tab:AroraOursComp} summarizes a comparison of the intrinsic dimensionalities.
%For our analysis, we used $\Nhatell{\ell}(\lambda_\ell) \Nhatell{\ell+1}(\lambda_{\ell + 1}) k^2 $ for the intrinsic dimensionality
%of the $\ell$-th layer, where $k$ is the kernel size\footnote{We omitted quantities related to the depth $L$ and $\log$ term, but
%the intrinsic dimensionality of \citet{pmlr-v80-arora18b} also omits these factors.}.
%This is the number of parameters in the $\ell$-th layer for the width $\mhat{\ell} \simeq \Nhatell{\ell}(\lambda_\ell)$ where 
%and corresponds to the summand in $\deltantwo$.
%$\lambda_\ell$ was set as $\lambda_\ell = 10^{-3} \times \Tr[\Sigmahat_{(\ell)}]$, which is sufficiently small.
%We can see that the quantity based on the degrees of freedom gives reasonably small values.

\begin{table}
\centering
\caption{The effective ranks $\mdot{\ell}$ and $s_\ell$ in each layer for each threshold $\nu$.
``In/Out'' indicates the channel sizes of the input and output.}
\label{tab:IntrinsicWidth}
\begin{tabular}{|l|r|rrr|rrr|}
\hline
layer &
\multicolumn{1}{l|}{In/Out}
& 
\multicolumn{3}{l|}{Cov ($\mdot{\ell}$)} & 
\multicolumn{3}{l|}{Weight ($s_\ell$)} \\
\cline{3-8}
 & & 
$\nu:$~$10^{-1}$ & $10^{-2}$  & $10^{-3}$
& $\nu:$~$10^{-1}$ & $10^{-2}$  & $10^{-3}$ \rule[0mm]{0mm}{3.5mm} \\
\hline
c0	&	3/64	&	3 & 3 & 3	&	9 & 16 & 19	\\
c1	&	64/64	&	5 & 21 & 51	&	26 & 62 & 64	\\
c2	&	64/128	&	5 & 33 & 64	&	37 & 110 & 128	\\
c3	&	128/128	&	9 & 76 & 128	&	50 & 128 & 128	\\
c4	&	128/256	&	11 & 110 & 128	&	102 & 255 & 256	\\
c5	&	256/256	&	13 & 200 & 256	&	93 & 256 & 256	\\
c6	&	256/256	&	17 & 219 & 256	&	55 & 230 & 256	\\
c7	&	256/256	&	11 & 110 & 253	&	37 & 175 & 252	\\
c8	&	256/512	&	10 & 35 & 129	&	30 & 122 & 349	\\
c9	&	512/512	&	6 & 33 & 79	&	16 & 60 & 217	\\
c10	&	512/512	&	3 & 15 & 41	&	16 & 42 & 127	\\
c11	&	512/512	&	2 & 12 & 23	&	11 & 39 & 129	\\
c12	&	512/512	&	2 & 7 & 16	&	7 & 18 & 46	\\
c13	&	512/512	&	3 & 6 & 16	&	9 & 22 & 46	\\
c14	&	512/512	&	3 & 7 & 18	&	16 & 38 & 59	\\
c15	&	512/512	&	4 & 14 & 28	&	36 & 51 & 92	\\
l16	&	512/4096	&	5 & 10 & 11	&	11 & 21 & 49	\\
l17	&	4096/4096	&	7 & 10 & 11	&	10 & 49 & 990	\\
l18	&	4096/10	&	6 & 10 & 10	&	9 & 9 & 9	\\
\hline
\end{tabular}
\end{table}

Next, we compute the intrinsic dimensionality of each layer based on the effective ranks calculated above.
Basically, the main term of our bound (Theorem \ref{thm:LowRankBoundCov}) is given by $\sqrt{\sum_{\ell=1}^L \frac{\mdot{\ell+1} \mdot{\ell}}{n}}$
(note that $\mhatell$ is essentially controlled by $\mdot{\ell}$).
Hence, we employ $\mdot{\ell + 1} \times \mdot{\ell}$ as the intrinsic dimensionality of each fully connected layer.
%However, since this is for a fully connected layers, we employ
As for a convolution layer,
we employ $\dot{m}_{\ell + 1} \dot{m}_{\ell}  k^2 $ as the intrinsic dimensionality which is the number of parameters of compressed network.
We also calculate the intrinsic dimensionality obtained by compressing only the weight matrix.
That is given by $s_\ell \mell k^2 + \melle{\ell+1} s_\ell$.
Both of them are summarized in Table \ref{tab:IntrinsicWidth}.
We can see that the intrinsic dimensionality is smaller than the actual number of parameters. In particular, it is much smaller for higher layers such as c8 to l18.
This indicates that the information required for classification is almost distilled in the first few layers and the contribution of the subsequent layers would be much smaller than the earlier layers.
% $\mdot{\ell}$ and $s_\ell$ 
Moreover, we see that compressing the network using the covariance matrix gives smaller intrinsic dimensionalities than the weight matrix.
This is because the improvement induced by decreasing $\mdot{\ell}$ is a quadratic order but that by $s_\ell$ is just a linear order.
Another reason is that the effective rank of the covariance matrix is more data dependent in a sense that it is strongly dependent on the distribution of the data, and thus it can capture data dependent redundancy more efficiently (see also Figures \ref{fig:EigenValuesCov} and \ref{fig:EigenValuesWeight}).
Since the intrinsic dimensionality is much smaller than the actual number of parameters, the VC-dimension bound is too pessimistic and a compression based bound like ours gives a better generalization error bound.

\begin{table}
\centering
\caption{The intrinsic dimensionality in each layer for each threshold $\nu$.
Here again, ``In/Out'' indicates the channel sizes of the input and output.
``Orig'' indicates the number of parameters ($\melle{\ell+1}\mell \times \text{filter size})$ in each layer.}
\label{tab:IntrinsicWidth}
\scalebox{0.93}{
\begin{tabular}{|l|r|r|rrr|rrr|}
\hline
layer &
\multicolumn{1}{l|}{In/Out}
& 
\multicolumn{1}{l|}{Orig}
& 
\multicolumn{3}{l|}{Cov} & 
\multicolumn{3}{l|}{Weight} \\
\cline{4-9}
 & & & 
$\nu:$~$10^{-1}$ & $10^{-2}$  & $10^{-3}$
& $\nu:$~$10^{-1}$ & $10^{-2}$  & $10^{-3}$ \rule[0mm]{0mm}{3.5mm} \\
\hline
c0	&	3/64	&	1,728	&	135 & 567 & 1,377	&	910 & 910 & 910	\\
c1	&	64/64	&	36,864	&	225 & 6,237 & 29,376	&	1,920 & 1,920 & 1,920	\\
c2	&	64/128	&	73,728	&	405 & 22,572 & 73,728	&	6,336 & 11,264 & 13,376	\\
c3	&	128/128	&	147,456	&	891 & 75,240 & 147,456	&	33,280 & 79,360 & 81,920	\\
c4	&	128/256	&	294,912	&	1,287 & 198,000 & 294,912	&	52,096 & 154,880 & 180,224	\\
c5	&	256/256	&	589,824	&	1,989 & 394,200 & 589,824	&	128,000 & 327,680 & 327,680	\\
c6	&	256/256	&	589,824	&	1,683 & 216,810 & 582,912	&	261,120 & 652,800 & 655,360	\\
c7	&	256/256	&	589,824	&	990 & 34,650 & 293,733	&	238,080 & 655,360 & 655,360	\\
c8	&	256/512	&	1,179,648	&	540 & 10,395 & 91,719	&	154,880 & 647,680 & 720,896	\\
c9	&	512/512	&	2,359,296	&	162 & 4,455 & 29,151	&	189,440 & 896,000 & 1,290,240	\\
c10	&	512/512	&	2,359,296	&	54 & 1,620 & 8,487	&	153,600 & 624,640 & 1,786,880	\\
c11	&	512/512	&	2,359,296	&	36 & 756 & 3,312	&	81,920 & 307,200 & 1,111,040	\\
c12	&	512/512	&	2,359,296	&	54 & 378 & 2,304	&	81,920 & 215,040 & 650,240	\\
c13	&	512/512	&	2,359,296	&	81 & 378 & 2,592	&	56,320 & 199,680 & 660,480	\\
c14	&	512/512	&	2,359,296	&	108 & 882 & 4,536	&	35,840 & 92,160 & 235,520	\\
c15	&	512/512	&	2,359,296	&	180 & 1,260 & 2,772	&	46,080 & 112,640 & 235,520	\\
l16	&	512/4096	&	2,097,152	&	35 & 100 & 121	&	73,728 & 175,104 & 271,872	\\
l17	&	4096/4096	&	16,777,216	&	42 & 100 & 110	&	294,912 & 417,792 & 753,664	\\
l18	&	4096/10	&	40,960	&	42 & 90 & 90	&	45,166 & 86,226 & 201,194	\\\hline
\end{tabular}
}
\end{table}

Finally, we give a comparison of intrinsic dimensionalities calculated by \citet{pmlr-v80-arora18b} and ours.
We borrowed the values presented in the paper \citep{pmlr-v80-arora18b}.
We would like to note that the comparison is not completely fair because the intrinsic dimensionality of both our analysis and that of \citet{pmlr-v80-arora18b} neglect constant functors (such as depth), and thus the final generalization error is not merely determined by the raw values.
However, the comparison offers better understanding of our analysis by observing difference and similarity between them.
We can see that our intrinsic dimensionality gives a smaller number than theirs. This is because compression through the covariance matrix gives quadratic factor improvement while compression through low rank property of the weight matrices gives linear order improvement.
This indicates considering near low rank properties of both of weight matrices and covariance matrices yields sharper bounds. It can be realized by our unified theoretical frame-work.

\begin{table}
\centering
%\vspace{-0.4cm}
\caption{Comparison of the intrinsic dimensionality of
our analysis and that in \citet{pmlr-v80-arora18b}.}
%They are computed for a VGG-19 network trained on CIFAR-10.}
\label{tab:AroraOursComp}
%\begin{tabular}{crrr}
%\hline
%Layer & Original &  \cite{pmlr-v80-arora18b} & Spec Prun \\
%\hline
%1 & 1,728 & 1,645 & 1,013 \\
%%2& 36,864&	36,831& 2,8775 \\
%4& 147,456& 644,654	& 84,499  \\
%6& 589,824 & 3,457,882 & 270,216  \\
%9 & 1,179,648 &	36,920 & 50,768 	\\
%12 & 2,359,296& 22,735 & 4,583  \\
%15 & 2,359,296 & 26,584 & 3,886 
% \\
%\hline
%\end{tabular}
\begin{tabular}{|l|r|r|r|rrr|rrr|}
\hline
layer &
\multicolumn{1}{l|}{In/Out}
& 
\multicolumn{1}{l|}{Orig}
& 
\multicolumn{1}{l|}{\cite{pmlr-v80-arora18b} }
& 
\multicolumn{3}{l|}{Cov} 
\\ %& \multicolumn{3}{l|}{Weight} \\
\cline{5-7}
 & & & &
$\nu:$~$10^{-1}$ & $10^{-2}$  & $10^{-3}$ \rule[0mm]{0mm}{3.5mm} 
\\ %& $\nu:$~$10^{-1}$ & $10^{-2}$  & $10^{-3}$ \rule[0mm]{0mm}{3.5mm} \\
\hline
c0	&	3/64	&	1,728 & 1,645 &	135 & 567 & 1,377	\\ %&	910 & 910 & 910	\\
c3	&	128/128	&	147,456	&  644,654  &	891 & 75,240 & 147,456	\\ % &	33,280 & 79,360 & 81,920	\\
c5	&	256/256	&	589,824	& 3,457,882 &	1,989 & 394,200 & 589,824	\\ % &	128,000 & 327,680 & 327,680	\\
c8	&	256/512	&	1,179,648	&	36,920 &	540 & 10,395 & 91,719	\\ %&	154,880 & 647,680 & 720,896	\\
c11	&	512/512	&	2,359,296	& 22,735  &	36 & 756 & 3,312	\\ %&	81,920 & 307,200 & 1,111,040	\\
c14	&	512/512	&	2,359,296	& 26,584 &	108 & 882 & 4,536	\\ %&	35,840 & 92,160 & 235,520	\\
\hline
\end{tabular}

\end{table}

\end{document}